%%%%%%%% ICML 2025 EXAMPLE LATEX SUBMISSION FILE %%%%%%%%%%%%%%%%%

\documentclass{article}

% Recommended, but optional, packages for figures and better typesetting:
\usepackage{microtype}
\usepackage{graphicx}
\usepackage{subfigure}
\usepackage{booktabs} % for professional tables

% hyperref makes hyperlinks in the resulting PDF.
% If your build breaks (sometimes temporarily if a hyperlink spans a page)
% please comment out the following usepackage line and replace
% \usepackage{icml2025} with \usepackage[nohyperref]{icml2025} above.
\usepackage{hyperref}
\usepackage{algpseudocode}%%%%

% Attempt to make hyperref and algorithmic work together better:
%\newcommand{\theHalgorithm}{\arabic{algorithm}}

% Use the following line for the initial blind version submitted for review:
%\usepackage{icml2025}

% If accepted, instead use the following line for the camera-ready submission:
\usepackage[accepted]{icml2025}

% For theorems and such
\usepackage{amsmath}
\usepackage{amssymb}
\usepackage{mathtools}
\usepackage{amsthm}

% if you use cleveref..
\usepackage[capitalize,noabbrev]{cleveref}

%%%%%%%%%%%%%%%%%%%%%%%%%%%%%%%%
% THEOREMS
%%%%%%%%%%%%%%%%%%%%%%%%%%%%%%%%
\theoremstyle{plain}
\newtheorem{theorem}{Theorem}[section]
\newtheorem{proposition}[theorem]{Proposition}
\newtheorem{lemma}[theorem]{Lemma}

\newtheorem{properties}[theorem]{Properties}
                      
\theoremstyle{definition}
\newtheorem{definition}[theorem]{Definition}
\newtheorem{assumption}[theorem]{Assumption}
\theoremstyle{remark}

% Todonotes is useful during development; simply uncomment the next line
%    and comment out the line below the next line to turn off comments
%\usepackage[disable,textsize=tiny]{todonotes}
\usepackage[textsize=tiny]{todonotes}

\usepackage{multirow}
\usepackage{wasysym}
\usepackage{amssymb}
\usepackage{makecell}
\usepackage{stfloats}
\usepackage{tabularx}
\usepackage{caption}
\usepackage{bbm}
\usepackage{booktabs}
\usepackage{cleveref}
\usepackage{subcaption}

\usepackage{makecell}
\usepackage{float}

\usepackage{soul, color, xcolor}

\usepackage{amsmath}
\usepackage{multirow}

\usepackage{booktabs}

\usepackage{ltablex}%%%
% The \icmltitle you define below is probably too long as a header.
% Therefore, a short form for the running title is supplied here:
%\icmltitlerunning{Multimodal Complementary Learning}

\icmltitlerunning{Learning Optimal Multimodal Information Bottleneck Representations}
           
\begin{document}

\twocolumn[
\icmltitle{Learning Optimal Multimodal Information Bottleneck Representations}

%\icmltitle{ What Makes Detecting Gene-Visual Anomaly Difficult?}
%\icmltitle{ Submission and Formatting Instructions for\\ International Conference on Machine Learning (ICML 2023)}

% It is OKAY to include author information, even for blind
% submissions: the style file will automatically remove it for you
% unless you've provided the [accepted] option to the icml2023
% package.

% List of affiliations: The first argument should be a (short)
% identifier you will use later to specify author affiliations
% Academic affiliations should list Department, University, City, Region, Country
% Industry affiliations should list Company, City, Region, Country

% You can specify symbols, otherwise they are numbered in order.
% Ideally, you should not use this facility. Affiliations will be numbered
% in order of appearance and this is the preferred way.
\icmlsetsymbol{equal}{*}

\begin{icmlauthorlist}
%\icmlauthor{Kaichen Xu}{equal,yyy}
\icmlauthor{Qilong Wu}{equal,zuel_ts}
\icmlauthor{Yiyang Shao}{zuel_f}
\icmlauthor{Jun Wang}{equal,iwudao}
\icmlauthor{Xiaobo Sun}{equal,emory}
%\icmlauthor{Firstname6 Lastname6}{sch,yyy,comp}
%\icmlauthor{Firstname7 Lastname7}{comp}
%\icmlauthor{}{sch}
%\icmlauthor{Firstname8 Lastname8}{sch}
%\icmlauthor{Firstname8 Lastname8}{yyy,comp}
%\icmlauthor{}{sch}
%\icmlauthor{}{sch}
\end{icmlauthorlist}

\icmlaffiliation{zuel_ts}{School of Statistics and Mathematics, Zhongnan University of Economics and Law}
\icmlaffiliation{zuel_f}{School of Finance, Zhongnan University of Economics and Law}
\icmlaffiliation{iwudao}{iWudao}
\icmlaffiliation{emory}{School of Medicine, Department of Human Genetics, Emory University}

%\icmlaffiliation{sch}{School of ZZZ, Institute of WWW, Location, Country}

\icmlcorrespondingauthor{Xiaobo Sun}{xsun28@emory.edu}
%\icmlcorrespondingauthor{Firstname2 Lastname2}{first2.last2@www.uk}

% You may provide any keywords that you
% find helpful for describing your paper; these are used to populate
% the "keywords" metadata in the PDF but will not be shown in the document
\icmlkeywords{Machine Learning, ICML}

\vskip 0.3in
]

% this must go after the closing bracket ] following \twocolumn[ ...

% This command actually creates the footnote in the first column
% listing the affiliations and the copyright notice.
% The command takes one argument, which is text to display at the start of the footnote.
% The \icmlEqualContribution command is standard text for equal contribution.
% Remove it (just {}) if you do not need this facility.

%\printAffiliationsAndNotice{}  % leave blank if no need to mention equal contribution
\printAffiliationsAndNotice{\icmlEqualContribution} % otherwise use the standard text.

\begin{abstract}
Leveraging high-quality joint representations from multimodal data can greatly enhance model performance in various machine-learning based applications. Recent multimodal learning methods, based on the multimodal information bottleneck (MIB) principle, aim to generate optimal MIB with maximal task-relevant information and minimal superfluous information via regularization. However, these methods often set  \textit{ad hoc} regularization weights and overlook imbalanced task-relevant information across modalities, limiting their ability to achieve optimal MIB. To address this gap, we propose a novel multimodal learning framework, Optimal Multimodal Information Bottleneck (OMIB), whose optimization objective guarantees the achievability of optimal MIB by setting the regularization weight within a theoretically derived bound. OMIB further addresses imbalanced task-relevant information by dynamically adjusting regularization weights per modality, promoting the inclusion of all task-relevant information. Moreover, we establish a solid information-theoretical foundation for OMIB's optimization and implement it under the variational approximation framework for computational efficiency. Finally, we empirically validate the OMIB’s theoretical properties on synthetic data and demonstrate its superiority over the state-of-the-art benchmark methods in various downstream tasks.
\end{abstract}

\section{Introduction}
In the parable "\textit{Blind men and an elephant}", a group of blind men attempts to perceive the elephant's shape through touch, but each inspects only a single, distinct part (e.g., tusk, leg). Consequently, they deliver conflicting descriptions, as their judgments are based solely on the part they touch.

In the context of machine learning, this parable underscores the significance of multimodal learning, which integrates and leverages multimodal data (akin to the elephant's body parts) to grasp a holistic understanding, thereby enhancing inference and prediction accuracy. In multimodal learning, unimodal features are extracted from each modalities and fused with various fusion strategies, such as tensor-based \cite{zadeh-etal-2017-tensor,liu-etal-2018-efficient-low}, attention-based \cite{guo2020ld,xiao2020multimodality,9968207}, and graph-based \cite{arun2022multimodal,huang2021temporal}, to generate multimodal embeddings. However, a major limitation of these methods is their potential to include superfluous and redundant information from each modality, increasing embedding complexity and the risk of overfitting \cite{9767641,wan2021multi}.

\begin{figure}[t]
\centering
\includegraphics[width=0.91\linewidth]{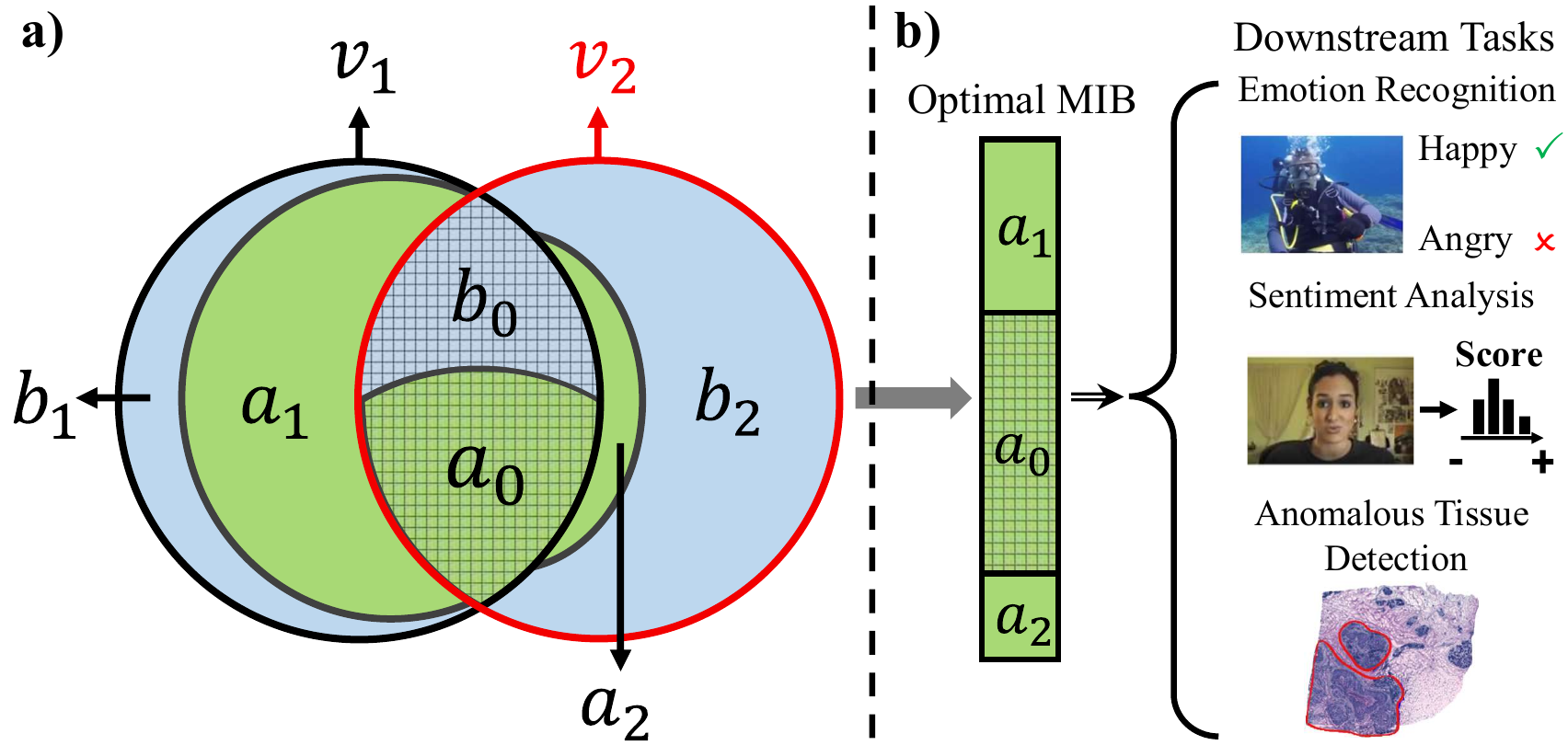}
\caption{
\textbf{a)} Venn diagrams for two data modalities ($v_1$ and $v_2$). The gridded area represents consistent information, while the non-gridded area denotes modality-specific information. Task-relevant information is highlighted in green, whereas superfluous information is shown in blue. \textbf{b)} An optimal MIB should exclusively contain task-relevant, non-superfluous information (i.e., $a_0, a_1$ and $a_2$) to be utilized in downstream tasks for enhanced performance. 
}
 \label{Fig: Information diagrams-1}
\end{figure}\par

\begin{figure}[t]
\centering
\includegraphics[width=0.91\linewidth]{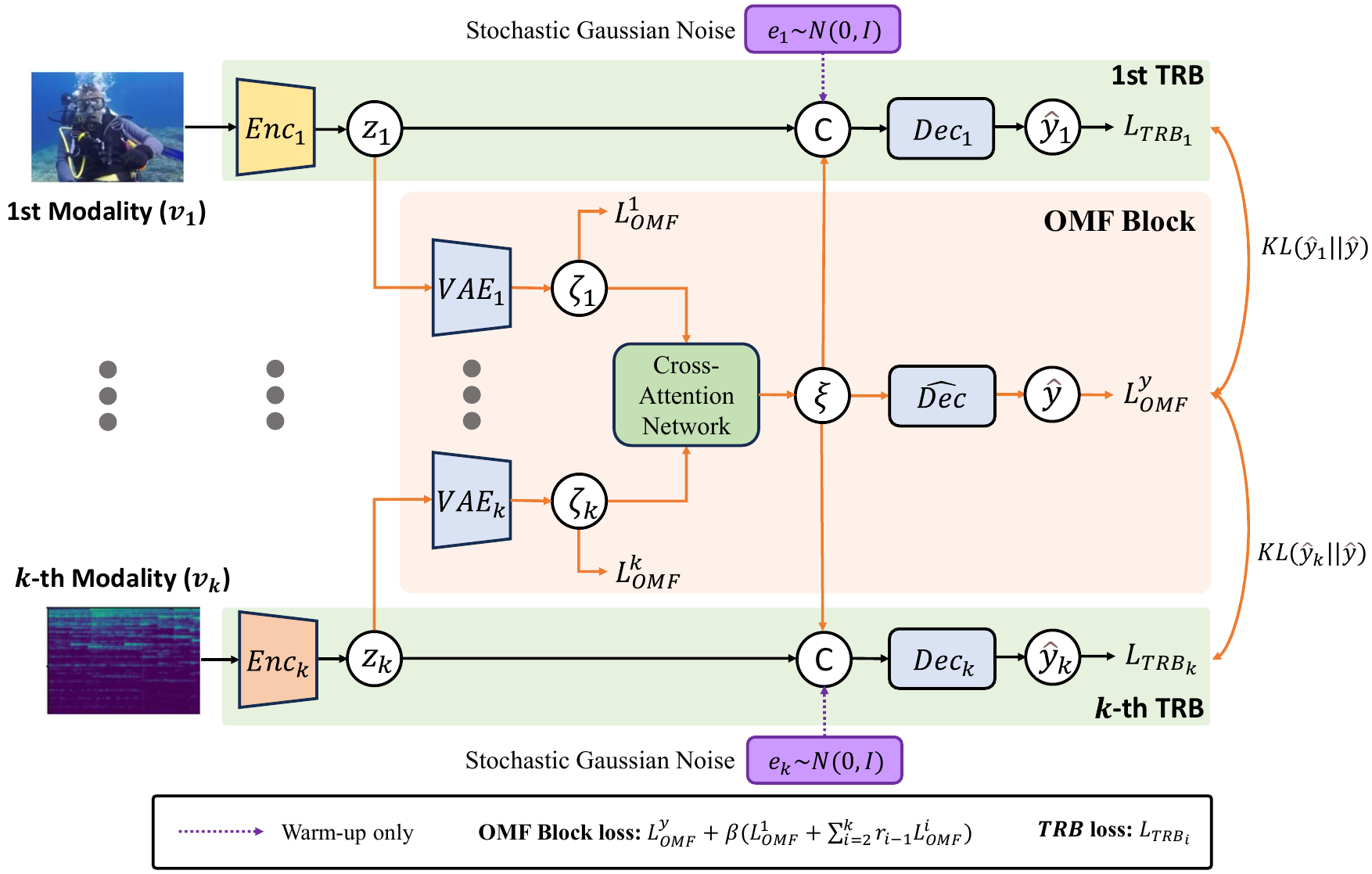}
\caption{\textbf{OMIB Framework.} Here, 'C' represents the concatenation operation. For the definitions of other notations, refer to the \Cref{sec:method} and \Cref{tab:notation}.}
\label{Fig: framework}
\end{figure}
\par

From an information theory perspective, a comprehensive multimodal learning method should account for five factors: \textbf{consistency} \cite{tian2021farewell}, \textbf{specificity} \cite{LIU2024101943}, \textbf{complementarity} \cite{wan2021multi},  \textbf{sufficiency} \cite{Federici2020Learning}, and \textbf{conciseness} (a.k.a. nonredundacy) \cite{wang2019deep}.  As illustrated in \Cref{Fig: Information diagrams-1}a, on the input side, \textit{consistency} describes information shared across input modalities (gridded area), while \textit{specificity} refers to the information unique to individual modalities (non-gridded area). Both consistent and modality-specific information may contain task-relevant (green area) or superfluous (gray area) components. \textit{Complementarity} pertains to modality-specific, task-relevant information ($a_1$ and $a_2$), enabling multimodal embeddings to surpass unimodal ones in downstream tasks. On the output side, an optimal multimodal embedding (as shown in \Cref{Fig: Information diagrams-1}b and \Cref{def:optimal MIB} below) must be \textit{sufficient}, capturing maximal task-relevant information—both consistent ($a_0$) and complementary ($a_1, a_2$)—across modalities. Meanwhile, it should be \textit{concise}, minimizing both cross-modality ($b_0$) and modality-specific superfluous information ($b_1,b_2$) to reduce complexity. This optimal multimodal embedding can then be applied to various downstream tasks, such as multimodal sentiment analysis \cite{9767641} and pathological tissue detection using histology and gene expression data \cite{xu2024detecting}), for enhanced performance.

To this end, multimodal learning methods based on the Multimodal Information Bottleneck (MIB) principle have been developed, which generally follow a common paradigm: modality-specific representations are extracted and fused into MIB via deep networks. The MIB is then optimized to balance two objectives: maximizing mutual information between the embeddings and task-relevant labels for sufficiency; and minimizing mutual information between the embeddings and the raw input to purge superfluous information and ensure conciseness \cite{wang2019deep,wan2021multi,9968207,Fang2024Dynamic}. This process is formalized as:
\begin{equation}\label{eq:paradigm}
\begin{split}
    &x_i=E_i(v_i),\\ 
    &z=F(x_1,x_2,...),\\
    &\mathcal{O}(v_i,z,y):=\max_{z}I(y;z) - \sum_{i}\beta_i I(v_i;z),
\end{split}
\end{equation}
where $E_i,F, I$, and $\mathcal{O}$ represent the modality-specific encoder, multimodal fusion function, mutual information function, and optimization objective, respectively. $v_i,x_i,z,$ and $y$ denote the raw data of the $i$-th modality, its extracted representation, the MIB, and task labels. Particularly,  $\beta_i$ serves as the regularization parameter for constraining superfluous information between the MIB and the $i$-th modality. 

Despite their promising performance, these methods face three major limitations.  First, the achievability of optimal MIB is not guaranteed. Since the regularization parameters (e.g., $\beta$s in \Cref{eq:paradigm}) control the trade-off between sufficiency and conciseness, their values are critical for optimizing MIB \cite{tian2021farewell}. If the value is too small, superfluous information may be retained, leading to suboptimal MIB. Conversely, if too large, task-relevant information may be excluded due to an overemphasis on conciseness, compromising MIB's sufficiency. However, existing MIB methods determine these parameter values in an \textit{ad hoc} manner, limiting their ability to achieve an optimal MIB. Second, an ideal MIB method should dynamically adjust regularization weights based on the remaining task-relevant information in each modality. Specifically, a modality should receive a lower regularization weight if a significant portion of its task-relevant information is left out of the MIB, and vice versa. However, existing MIB methods typically assign fixed, \textit{ad hoc} regularization weights to each modality during training. When task-relevant information is imbalanced across modalities, some modalities may contain minor but crucial task-relevant information (e.g., $a_2$ in $v_2$ in \cref{Fig: Information diagrams-1}). If such a modality is assigned an excessively large regularization weight, its task-relevant information may be inadvertently excluded from the MIB \cite{fan2023pmr}. Finally, these methods lack theoretical comprehensiveness, as they either fail to incorporate all five aforementioned factors into the optimization objective or do not acknowledge their distinct roles in guiding optimization. For instance, the study in \cite{tian2021farewell} overlooks complementarity, while CMIB-Nets \cite{wan2021multi} does not account for consistent, superfluous information. Additionally, in the theoretical analyses of methods such as \cite{Fang2024Dynamic, wan2021multi}, the two types of task relevant information—consistent (e.g., $a_0$ in \Cref{Fig: Information diagrams-1}) and modality-specific (e.g., $a_1, a_2$)— are not distinguished, despite their differing impacts on the optimization objective.  

To address these issues, we propose a novel MIB-based multimodal learning framework, termed \textbf{Optimal Multimodal Information Bottleneck (OMIB)}, to learn task-relevant optimal MIB representations from multi-modal data for enhanced downstream task performance. OMIB features theoretically grounded optimization objectives, explicitly linked to the dynamics of all five information-theoretical factors during optimization, ensuring a holistic and rigorous optimization framework.  
As illustrated in \Cref{Fig: framework}, OMIB comprises two components, including task relevance branches (TRBs) that extract sufficient representations from individual modalities, and an optimal multimodal fusion block (OMF), where modality-specific representations are fused by a cross-attention network (CAN) into MIB and optimized using a computationally efficient variational approximation \cite{alemi2017deep}. Adhering to the MIB principle, the OMF block maximizes sufficiency while minimizing redundancy in the MIB. Particularly, by setting the redundancy regularization parameter in OMIB's objective function within a theoretically derived bound, OMIB guarantees the achievability of optimal MIB upon convergence of the OMF block training. Furthermore, our approach dynamically refines regularization weights per modality \textit{as per} the distribution of their remaining task-relevant information. In summary, our contributions include: 
\begin{itemize}
\item We propose OMIB, a novel framework for learning optimal MIB representations from multimodal data, with an explicit solution to address imbalanced task-relevant information across modalities. 
\item We provide a rigorous theoretical foundation that underpins OMIB's optimization procedure, establishing a clear connection between its objectives and the five information-theoretical factors: sufficiency, consistency, redundancy, complementarity, and specificity. 
\item We mathematically derive the conditions for achieving optimal MIB, marking, to our knowledge, the first endeavor in proving its achievability under the MIB principle. 
\item We validate OMIB's effectiveness on synthetic data and demonstrate its superiority over state-of-the-art MIB methods in downstream tasks such as sentiment analysis, emotion recognition, and anomalous tissue detection across diverse real-world datasets.
\end{itemize}

\section{Related Work}

\subsection{Multimodal Fusion}

Multimodal fusion methods can be categorized according to the fusion stage and techniques. Architecturally, fusion can happen at three stages: (1) Early fusion, which combines data at the feature level \cite{snoek2005early}, (2) Late fusion, integrating data at the decision level \cite{morvant2014majority}, and (3) Middle fusion, where data is fused at intermediate layers to allow early layers to specialize in learning unimodal patterns \cite{nagrani2021attention}. From the technique perspective, fusion approaches include: (1) Operation-based, combining features through arithmetic operations \cite{el2020multimodal, lu2021ai}, (2) Attention-based, using cross-modal attention to learn interaction weights \cite{schulz2021multimodal, cai2023multimodal}, (3) Tensor-based, modeling high-order interactions \cite{chen2020pathomic, zadeh-etal-2017-tensor}, (4) Subspace-based, projecting modalities into shared latent spaces \cite{yao2017deep, zhou2021cohesive}, and (5) Graph-based, representing modalities as graph nodes and edges \cite{parisot2018disease, cao2021using}. 
In addition, recent studies also discuss the issue of modality imbalance, where strong modalities tend to dominate the learning process, while weak modalities are often suppressed \cite{peng2022balanced,zhang2024multimodal}. 
Though effective, these methods typically fail to account for superfluous information and thus are prone to overfitting and sensitive to noisy modalities, limiting their practical robustness \cite{Fang2024Dynamic}. MIB addresses these challenges by preserving task-relevant information while minimizing redundant content in the generated multimodal representations.

\subsection{Multimodal Information Bottleneck}
The Information Bottleneck (IB) framework \cite{tishby2000information} provides a principled approach for learning compressed, task-relevant representations. It was first applied to deep learning by \cite{tishby2015deep} and later extended through the Variational Information Bottleneck (VIB) \cite{alemi2017deep}, which employs stochastic variational inference for efficient approximations. Recently, IB has been adapted to more complex settings, such as multi-view \cite{wang2019deep, Federici2020Learning} and multimodal learning \cite{tian2021farewell}. For example, L-MIB, E-MIB, and C-MIB \cite{9767641} aim to learn effective multimodal representations by maximizing task-relevant mutual information, eliminating redundancy, and filtering noise, while exploring how MIB performs at different fusion stages. Secondly, MMIB-Zhang \cite{zhang2022information} improves multimodal representation learning by imposing mutual information constraints between modality pairs, enhancing the model's ability to retain relevant information. 
Additionally, DISENTANGLEDSSL \cite{wang2024an} relaxes the restrictions on achieving minimal sufficient information, thereby enabling the disentanglement of modality-shared and modality-specific information and enhancing interpretability. 
Lastly, DMIB \cite{Fang2024Dynamic} filters irrelevant information and noise, employing a sufficiency loss to preserve task-relevant data, ensuring robustness in noisy and redundant environments.\par

However, these methods often rely on \textit{ad hoc} regularization weights and overlook the imbalance of task-relevant information across modalities, limiting their ability to fully optimize the MIB framework.

\section{Notations}\label{sec:notation}
Here, we list the mathematical notations (\Cref{tab:notation}) used in this study.
\begin{table}[htbp]
    \centering
    \caption{Summary of notation.}
    \fontsize{7.8pt}{10pt}\selectfont
    \begin{tabular*}{\linewidth}{ll} 
\toprule
       \textbf{Notation}   & \textbf{Description} \\ \Xhline{0.6pt}
        $y$ & Task-relevant label.\\
        $v_i$&The $i$-th modality.\\ 
        $z_i$& The sufficient encoding of $v_i$ for $y$.\\
        $\xi$ & MIB encoding.\\ 
        $N$&The total number of observations.\\ 
        $H(*)$&The entropy of variable $*$.\\
        $F(*)$ &The information set inherent to variable * (i.e., $F(x)=H(x)$).\\
        $I$&The mutual information function.\\  
\bottomrule
    \end{tabular*}
    \label{tab:notation}
\end{table}

\section{Method}
\label{sec:method}
To clearly illustrate OMIB's framework, we start with the case of two data modalities (e.g., $v_1$ and $v_2$ in \Cref{Fig: framework}), which can be readily extended to multiple modalities by adding additional modality branches (see \Cref{multi method}). We also provide a rigorous theoretical foundation for our methodology in \Cref{sec:theoretical}. 

\paragraph{Warm-up training.} This phase consists of two task relevance branches (\textit{TRB}) corresponding to $v_1$ and $v_2$, respectively. In the $i$-th TRB, $v_i$ is first encoded into a sufficient representation $z_i \in \mathbb{R}^d$ for task-relevant labels $y$:
\begin{equation}
    z_i=Enc_i(v_i;\theta_{Enc_i}),
    s.t. I(z_i;y)=I(v_i;y),
\end{equation}
where $Enc_i$ is an encoder, $\theta_{Enc_i}$ denotes its parameters. To ensure maximal sufficiency of $z_i$ for $y$, we concatenate it with a stochastic Gaussian noise, $e_i\in \mathbb{R}^{k}=N(0,I)$, before feeding it to a task-relevant prediction head $Dec_i$ (see Appendix \ref{net:Implem}) to yield the predicted output $\hat{y}_i$:
\begin{equation}
    \hat{y}_i=Dec_i([z_i,e_i])
\end{equation}
Via this step, $Enc_i$ is optimized to extract maximal task-relevant information from $v_i$, as it requires a higher signal-to-noise ratio in $z_i$ to yield accurate prediction from its corrupted version. The loss function for updating $Enc_i$ and $Dec_i$ is:
\begin{equation}\label{loss TRB}
\begin{split}
    L_{TRB_i}&=E_{v_i}[-\log p(\hat{y}_i|z_i,e_i)]\\&=-\frac{1}{N} \sum_{n=1}^{N} \log p(\hat{y}_i^n|z_i^n,e_i^n).
\end{split}
\end{equation} 
Note that the implementation of $\log p(\hat{y}_i|z_i,e_i)$ is task-specific. For classification tasks, it is implemented as $CE(\hat{y}_i||y)$, where $CE$ is the cross-entropy function; for SVDD-based anomaly detection \cite{pmlr-v80-ruff18a}, it is $||\hat{y}_i-c||$, where $c$ is the unit sphere center of normal observations (see \Cref{net:Implem}); for regression tasks, it is $-||\hat{y}_i-y||$. The algorithmic workflow of the warm-up training is described in \Cref{pesudo code}.

\paragraph{Main Training.} After the warm-up training, the model proceeds to main training, which includes an optimal multimodal fusion (\textit{OMF}) block in addition to the TRBs. In the OMF, $z_i,\forall i \in \{1,2\},$ is used to generate the mean $\mu_i\in \mathbb{R}^{k}$ and variance $\Sigma_i\in \mathbb{R}^{k\times k}$ of a Gaussian distribution using a variational autoencoder ($VAE_i$):
\begin{equation}
    \mu_i,\Sigma_i=VAE_i(z_i,\theta_{VAE_i}),
\end{equation}
where $\theta_{VAE_i}$ represents the parameters of $VAE_i$. For efficient training and direct gradient backpropagation, the reparameterization trick \cite{kingma2013auto} is applied to generate $\zeta_i\in \mathbb{R}^k$:
\begin{equation}\label{eq:reparam}
    \zeta_i=\mu_i+\Sigma_i\times \epsilon_i,\ \text{where}\ \epsilon_i\sim N(0, I).
\end{equation}
$\zeta_1$ and $\zeta_2$ are fused using a Cross-Attention Network (CAN) \cite{vaswani2017attention}, whose architecture is detailed in  \Cref{net:Implem}:
\begin{equation}
    \xi=CAN(\zeta_1,\zeta_2,\theta_{CAN}),
\end{equation}
where $\xi$ is the post-fusion embedding, which is then passed to a task-relevant prediction head $\widehat{Dec}$  to generate the final prediction $\hat{y}$:
\begin{equation}
    \hat{y}=\widehat{Dec}(\xi,\theta_{\widehat{Dec}}).
\end{equation}
Meanwhile, $\xi$ replaces the stochastic noise $e_i$ in $v_i$'s TRB to fuse with $z_i$, yielding $\hat{y}_i$ for computing $L_{TRB_i}$ and updating $Enc_i$ and $Dec_i$:
\begin{equation}
    \hat{y}_i=Dec_i([z_i,\xi]).
\end{equation}
As established in \Cref{Lemma 1}, the loss function for updating the components in OMF (i.e., $VAE_i$, $CAN$, and $\widehat{Dec}$) to achieve optimal MIB, $\xi$, is given by:

\begin{equation}\label{equ:IB loss imp}
\begin{split}
    L_{OMF} &= \frac{1}{N} \sum_{n=1}^{N} \mathbb{E}_{\epsilon_1}\mathbb{E}_{\epsilon_2} \left[ -\log q(y^n|\xi^n) \right] \\
    &\quad + \beta \left( KL \left[ p(\zeta_1^n|z_1^n) || \mathcal{N}(0, I) \right] \right. \\
    &\quad \left. + r \, KL \left[ p(\zeta_2^n|z_2^n) || \mathcal{N}(0, I) \right] \right).
\end{split}
\end{equation}\par

where $\beta$ is a hyper-parameter constraining redundancy between $\zeta_i$ and $z_i$, and $r$ is a dynamically adjusted weight balancing the regularization of $v_2$ relative to $v_1$. 
The implementation of $-log\, q(y|\xi)$ is task-specific, as previously stated. $KL[p(\zeta_i|z_i)|| \mathcal{N}(0, I)]$ represents the KL-divergence between $\zeta_i$ and the standard normal distribution. As shown in \Cref{Lemma 2}, $r$ is explicitly computed during training as:
\begin{small}
\begin{equation} \label{equ:r}
    r= 1 - tanh\Bigl(\ln \frac{1}{N} \sum_{n=1}^{N} \mathbb{E}_{\epsilon_1}\mathbb{E}_{\epsilon_2}\Bigl[\frac{KL(p(\hat{y}_2^n|\xi^n,z_2^n)||p(\hat{y}^n|\xi^n))}{KL(p(\hat{y}_1^n|\xi^n,z_1^n)||p(\hat{y}^n|\xi^n))}\Bigl]\Bigl) 
\end{equation}
\end{small}\par
Furthermore, \Cref{prop:Ach OMIB} provides a theoretical upper bound for setting $\beta$, ensuring that our methodology achieves optimal MIB. 
The algorithmic workflow of the main training procedure is detailed in \Cref{pesudo code}.
\paragraph{Inference.} During inference, the TRBs are disabled, and the trained modality-specific encoder ($Enc_i$) and OMF generate optimal MIBs for test data to be used in downstream tasks.   

\section{Theoretical Foundation} \label{sec:theoretical}
Due to space constraints, we focus on the theoretical analysis of two data modalities in this section and defer the analysis of multiple data modalities ($\ge 3$) to \Cref{Multiple Theoretical Foundation}. 

\subsection{Optimal Information Bottleneck for Multimodal Data with Imbalanced Task-Relevant Information}\label{sec:OMIB}
As proposed in \cite{alemi2017deep,Federici2020Learning,9767641,wang2019deep}, the Information Bottleneck (IB) principle aims to optimize two key objectives:
 \begin{equation}
     \text{(1)}\ maximize\ I(y;z)\ \text{and}\ \text{(2)}\ minimize\ I(v;z)
 \end{equation}
where $y$ represents task-relevant labels, $v$ the input data, $z$ the IB encoding. The first objective maximizes $z$'s expressiveness for $y$, while the second objective enforces $z$'s conciseness.  These objectives can be formulated as: 
\begin{equation}
    \max_{z} I(y;z)\ s.t.\ I(v;z)\leq I_c,
\end{equation}
where $I_c$ is the information constraint that limits the amount of retained input information. Introducing a Langrange multiplier $\beta>0$, the objective function is reformulated as:
\begin{equation}
    \max_{z}I(y;z)-\beta I(v;z).
\end{equation}
For two data modalities, we propose a modified objective function to account for imbalanced task-relevant information across modalities:
\begin{equation} \label{equ:IB}
\min_{\xi} \ell(\xi) = \min_{\xi} -I(\xi;y)+\beta (I(\xi;v_1)+r I(\xi;v_2)),
\end{equation}
where $r>0$ is a dynamically adjusted parameter controlling the relative regularization of $v_2$ with respect to $v_1$.
In \Cref{equ:IB}, $v_i$ can be replaced with $z_i$. To see this point, let $\bar{v}_1$ denote the information in $v_1$ that is not encoded in $z_1$. Then, we have:
\begin{equation}
\begin{split}
I(\xi;v_1)&=I(\xi;z_1,\bar{v}_1)=I(\xi;z_1)+\underbrace{I(\xi;\bar{v}_1|z_1)}_{=0\ \because\ F(\xi)\ \cap F(\bar{v}_1)\ =\emptyset}\\ 
    &=I(\xi;z_1).
\end{split} 
\end{equation}
Similarly, $I(\xi;v_2)=I(\xi;z_2)$. Thus, the objective function in \Cref{equ:IB} can be rewritten as:
\begin{equation} \label{equ:IB alter}
\min_{\xi} \ell(\xi) = \min_{\xi} -I(\xi;y)+\beta (I(\xi;z_1)+r I(\xi;z_2)).
\end{equation}

\begin{proposition}[\textbf{Variational upper bound of OMIB's objective function}]
\label{Lemma 1}
 The loss function $ L_{OMF}$ in \Cref{equ:IB loss imp} provides a variational upper bound for optimizing the objective function in \Cref{equ:IB alter} and can be explicitly calculated during training.  
\end{proposition}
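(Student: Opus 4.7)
The plan is to derive an upper bound for each of the three terms appearing in the objective $\ell(\xi)$ of \Cref{equ:IB alter}, matching them to the three summands of $L_{OMF}$ in \Cref{equ:IB loss imp}, and then to verify that each piece is explicitly computable from quantities emitted by the network.

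\textbf{Step 1 (bounding the sufficiency term $-I(\xi;y)$).} I would write $-I(\xi;y) = -H(y) - \mathbb{E}_{p(y,\xi)}[\log p(y|\xi)]$ and introduce a variational approximation $q(y|\xi)$ realized by $\widehat{Dec}$. Non-negativity of $KL(p(y|\xi)\,\|\,q(y|\xi))$ yields $\mathbb{E}_{p(y,\xi)}[\log p(y|\xi)] \geq \mathbb{E}_{p(y,\xi)}[\log q(y|\xi)]$, so
\[
-I(\xi;y) \leq -\mathbb{E}_{p(y,\xi)}[\log q(y|\xi)] - H(y),
\]
and $H(y)$ is discarded as a constant in the trainable parameters. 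Replacing the data expectation by the empirical mean and the $\xi$-expectation by samples produced through \Cref{eq:reparam} recovers exactly the first summand of $L_{OMF}$.

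\textbf{Step 2 (bounding each redundancy term $I(\xi;z_i)$).} The structural observation I would lean on is that $z_i \to \zeta_i \to \xi$ forms a Markov chain: $\zeta_i$ depends on $z_i$ only through $VAE_i$ and the fresh noise $\epsilon_i$; $\xi = CAN(\zeta_1,\zeta_2)$ is deterministic given $(\zeta_1,\zeta_2)$; and $(\zeta_j,\epsilon_j)$ for $j \neq i$ is independent of $z_i$ conditional on $z_j$. The data processing inequality then gives $I(\xi;z_i) \leq I(\zeta_i;z_i)$. Next I would expand
\[
I(\zeta_i;z_i) = \mathbb{E}_{p(z_i)}\bigl[KL\bigl(p(\zeta_i|z_i)\,\|\,p(\zeta_i)\bigr)\bigr]
\]
and apply the standard variational trick: replacing the intractable marginal $p(\zeta_i)$ by a fixed $m(\zeta_i)$ only enlarges the right-hand side, since the difference equals the non-negative quantity $KL(p(\zeta_i)\,\|\,m(\zeta_i))$. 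Setting $m = \mathcal{N}(0,I)$ and taking the empirical data average produces the second and third summands of $L_{OMF}$, weighted by $\beta$ and $\beta r$.

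\textbf{Step 3 (explicit training-time evaluation).} The term $-\log q(y|\xi)$ collapses, as already stated in the paper, to cross-entropy, squared error, or the SVDD distance according to the task head $\widehat{Dec}$, hence is directly computable. The KL between $\mathcal{N}(\mu_i,\Sigma_i)$ and $\mathcal{N}(0,I)$ has a standard closed form once $VAE_i$ emits $(\mu_i,\Sigma_i)$. Outer expectations over the data distribution are replaced by the empirical mean $\tfrac{1}{N}\sum_{n=1}^{N}$, while the inner expectations $\mathbb{E}_{\epsilon_1}\mathbb{E}_{\epsilon_2}$ are approximated by Monte Carlo samples drawn through \Cref{eq:reparam}, keeping gradients available end-to-end.

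\textbf{Anticipated main obstacle.} The delicate point is justifying the data processing step: because $\xi$ mixes information from both branches, one has to rule out a cross-branch contribution through $z_j$ when bounding $I(\xi;z_i)$. This reduces to verifying explicitly that the dependence of the fusion block on $z_i$ factors solely through $\zeta_i$, i.e.\ that $(\zeta_j,\epsilon_j)$ for $j\neq i$ are independent of $z_i$ given $z_j$. The property holds by construction of the OMF block and the independent sampling in \Cref{eq:reparam}, but it is the only non-mechanical ingredient of the argument and should be written out rather than taken for granted.
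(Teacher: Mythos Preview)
Your proposal is correct and matches the paper's proof essentially step for step: the paper bounds $-I(\xi;y)$ via a variational decoder $q(y|\xi)$ exactly as in your Step~1, bounds each $I(\xi;z_i)$ by $I(\zeta_i;z_i)$ (phrased there as $I(\xi;z_i)\le I(\zeta_1,\zeta_2;z_i)=I(\zeta_i;z_i)$ via the chain rule and the independence $\zeta_j\perp z_i$ rather than a direct DPI call, but this is the same inequality), and then replaces the intractable marginal $p(\zeta_i)$ by $\mathcal{N}(0,I)$ as in your Step~2. The cross-branch independence you single out as the main obstacle is precisely the one non-mechanical ingredient the paper also relies on.
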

\begin{proof}
See Appendix \ref{AM-VIB}.
\end{proof}
Moreover, when a substantial portion of task-relevant information remains in $v_2$ relative to $v_1$, the value of $r$ should be small to encourage incorporating more information from $v_2$ in subsequent training iterations. Simultaneously, $r$ should be bounded to prevent over-regularizing information from $v_2$. Thus, $r$ can be mathematically expressed as: 
\begin{equation}\label{eq:ratio}
    \begin{split}
         r \propto \frac{I(y;v_1|\xi)}{I(y;v_2|\xi)}, r\in (0, u),
    \end{split} 
\end{equation}
where $I(y;v_i|\xi)$ represents the amount of task-relevant information in $v_i$ not encoded in $\xi$, and $u$ is an upper bound. In this study, $u$ is set to 2, as it is implemented using a $tahn$ function as in \Cref{equ:r}, which is justified by the following proposition.

\begin{proposition}[\textbf{Explicit formula for $r$}]
\label{Lemma 2}
\Cref{equ:r} satisfies \Cref{eq:ratio}, providing an explicit formula for computing $r$ during training.  
\end{proposition}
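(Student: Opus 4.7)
The plan is to recognize the argument of the $\tanh$ in \Cref{equ:r} as a Monte Carlo surrogate for the ratio $I(y;v_2\mid\xi)/I(y;v_1\mid\xi)$, then verify that the composition $r=1-\tanh(\ln(\cdot))$ yields a strictly monotone map into the required range $(0,u)$ with $u=2$, so that the proportionality requirement in \Cref{eq:ratio} is satisfied in the monotone sense.

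First, I would invoke the standard decomposition of conditional mutual information,
\begin{equation*}
I(y;z_i\mid\xi)=\mathbb{E}_{\xi,z_i}\bigl[KL\bigl(p(y\mid\xi,z_i)\,\|\,p(y\mid\xi)\bigr)\bigr],
\end{equation*}
and replace the intractable posteriors $p(y\mid\xi,z_i)$ and $p(y\mid\xi)$ by the variational heads $p(\hat y_i\mid\xi,z_i)$ and $p(\hat y\mid\xi)$ introduced in \Cref{sec:method}. Averaging over the $N$ training samples and over $\epsilon_1,\epsilon_2$ (which index $\zeta_i$ through the reparameterisation \Cref{eq:reparam}) gives a consistent empirical estimate of $I(y;z_i\mid\xi)$. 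Next I would import from the derivation just above \Cref{equ:IB alter} the identity $I(\xi;v_i)=I(\xi;z_i)$, and verify that the same argument (conditioning on $\xi$ and using that $\bar v_i$ is independent of $y$ given $z_i$) yields the conditional version $I(y;v_i\mid\xi)=I(y;z_i\mid\xi)$. Hence the empirical ratio inside the logarithm of \Cref{equ:r} is a Monte Carlo estimator of $I(y;v_2\mid\xi)/I(y;v_1\mid\xi)$.

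Second, I would analyse the outer transformation. Let $R:=I(y;v_2\mid\xi)/I(y;v_1\mid\xi)>0$. Because $\tanh:\mathbb{R}\to(-1,1)$ is strictly increasing, the map $R\mapsto 1-\tanh(\ln R)$ is strictly decreasing and takes values in $(0,2)$, attaining $1$ exactly when $R=1$. Equivalently, writing $R^{-1}=I(y;v_1\mid\xi)/I(y;v_2\mid\xi)$, the function $R^{-1}\mapsto 1-\tanh(-\ln R^{-1})=1+\tanh(\ln R^{-1})$ is strictly increasing in $R^{-1}$, which is precisely the monotone dependence asserted by \Cref{eq:ratio} with the upper bound $u=2$. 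Thus whenever a larger share of task-relevant information remains outside $\xi$ in $v_2$ than in $v_1$, the weight $r$ shrinks below $1$, and conversely.

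The main obstacle, in my view, is the interpretation of the symbol ``$\propto$'' in \Cref{eq:ratio}: the claim cannot mean ordinary linear proportionality, since $r$ is explicitly capped in $(0,u)$, so the proof must argue for monotone (order-preserving) proportionality and make explicit that the saturating composition $\tanh\circ\ln$ is precisely what realises this while enforcing the bound. A secondary technical point to address is the expectation-of-ratio versus ratio-of-expectations mismatch introduced by placing $\mathbb{E}_{\epsilon_1}\mathbb{E}_{\epsilon_2}$ inside the ratio in \Cref{equ:r}; I would handle this by noting that the monotone-calibration role of $r$ only requires a consistent estimator that preserves order with $R$, which the sample-wise ratio already supplies, while a formal asymptotic argument (delta-method or Jensen-type bound under bounded second moments of the per-sample KL terms) can be supplied if one wishes to sharpen the statement.
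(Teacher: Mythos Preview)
Your proposal is correct and follows essentially the same route as the paper: establish $I(y;v_i\mid\xi)=I(y;z_i\mid\xi)$ via sufficiency of $z_i$, express the conditional mutual information as $\mathbb{E}_{\xi,z_i}\bigl[KL(p(y\mid\xi,z_i)\,\|\,p(y\mid\xi))\bigr]$, approximate empirically, and verify that $1-\tanh(\ln(\cdot))$ is a strictly monotone map into $(0,2)$. The paper derives the KL identity from scratch through a chain of entropy manipulations rather than citing it as standard, and it silently passes from a ratio of expectations to an expectation of ratios---a gap you correctly flag and address---so if anything your treatment is the more careful of the two.
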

\begin{proof}
See Appendix \ref{AM-VIB}.
\end{proof}
In the next section, we establish a theoretical bound for $\beta$, ensuring that $\xi$ attains optimality during the optimization of the objective function in \Cref{equ:IB alter}.

\subsection{Achievability of Optimal Multimodal Information Bottleneck}\label{sec:Achieve OMIB}
\begin{assumption}\label{Assumption 1.1} 
As illustrated in \Cref{Fig: Information diagrams-1}, given two modalities, $v_1$ and $v_2$, the task-relevant information set $\{a\}$ consists of three components: $a_0, a_1$, and $a_2$. Specifically, $a_0$ is shared between both modalities, while $a_1$ and $a_2$ are specific to $v_1$ and $v_2$, respectively.  The task-relevant labels $y$ are determined by $\{a\}$. Moreover, $v_1$ and $v_2$ contain modality-specific superfluous information $b_{1}$ and $b_{2}$, respectively, in addition to shared superfluous information $b_0$.
\end{assumption} 

\begin{definition}[\textbf{Optimal multimodal information bottleneck}]
\label{def:optimal MIB}
 Under \cref{Assumption 1.1}, the optimal MIB, $\xi_{opt}$, for $v_1$ and $v_2$ satisfies:
 \begin{equation}
     F(\xi_{opt})=\{a_0,a_1,a_2\},
 \end{equation}
 ensuring that $\xi_{opt}$ encompasses all task-relevant information ($a_0,a_1$, and $a_2$) while exempting from superfluous information ($b_0, b_1$, and $b_2$).   
\end{definition}

\begin{figure*}[t]
\centering
\includegraphics[width=0.9\linewidth]{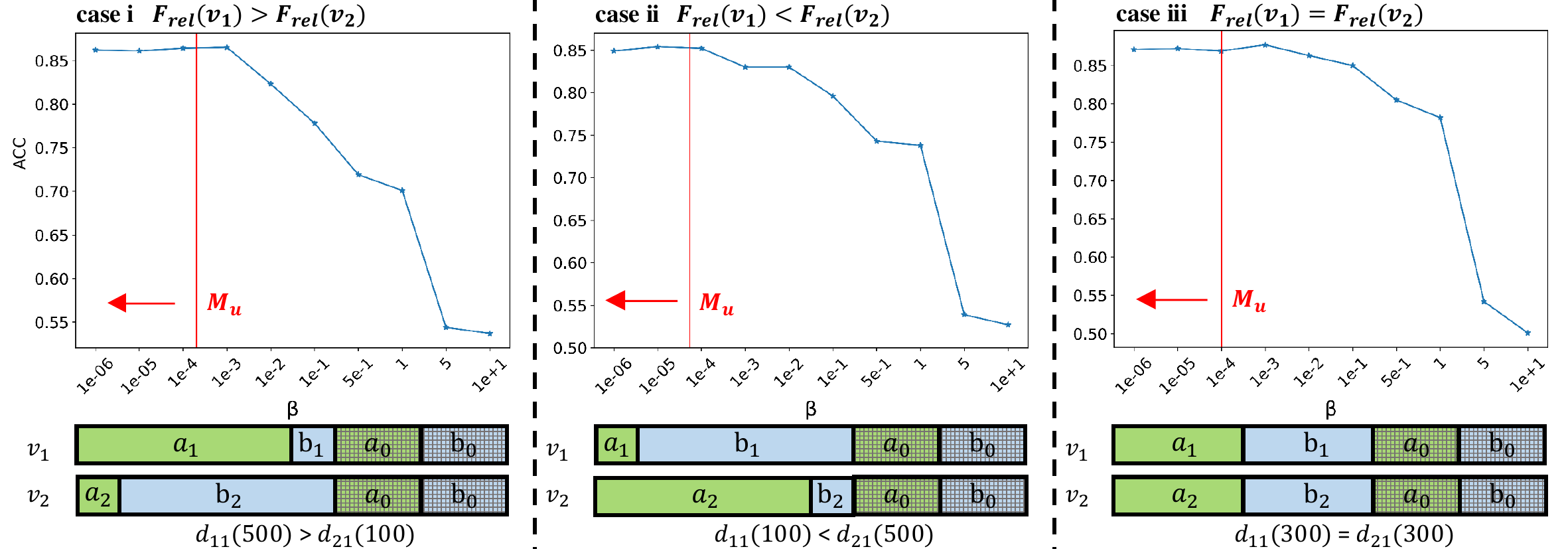} 
\caption{The impact of $\beta$ values on classification accuracy on synthetic data. $v_1$ and $v_2$ represent sample vectors of two modalities, respectively. 
$F_{\text{rel}}(\cdot)$ denotes task-relevant information. ``$a$" sub-vectors denote task-relevant information, while ``$b$" superfluous information. $d_{11}$ and $d_{21}$ denote the dimensions of  modality-specific $a_1$ and $a_2$. $M_u$ is the computed $\beta$ upper bound.
}.
\label{Fig: exp1}
\end{figure*}\par

\begin{lemma}[\textbf{Inclusiveness of task-relevant information}]
    \label{prop:inclusiveness}
Under \Cref{Assumption 1.1}, the objective function in \Cref{equ:IB alter} guarantees:
\begin{equation}
    F(\xi)\supseteq\{a_0,a_1,a_2\},
\end{equation}
provided that $\beta \in (0, M_u]$, where $M_u:=\frac{1}{(1+r) (H(v_1)+H(v_2)-I(v_1;v_2))}.$

\end{lemma}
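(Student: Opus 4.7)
The plan is to prove the inclusion by a swap/contradiction argument. Suppose $\xi^{\star}$ is any minimizer of $\ell$ in \Cref{equ:IB alter} but $F(\xi^{\star})$ fails to contain some atom $a_j$ with $j \in \{0,1,2\}$. I would construct a competitor $\xi' = (\xi^{\star}, a_j)$ that augments $\xi^{\star}$ by carrying $a_j$ as additional (deterministic) side information, and show that $\ell(\xi') \le \ell(\xi^{\star})$ with strict inequality whenever $\beta \in (0, M_u]$, producing the desired contradiction. Running this for each candidate $j$ would then force $F(\xi^{\star}) \supseteq \{a_0, a_1, a_2\}$.

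To quantify the swap, I would decompose
\begin{equation*}
\ell(\xi') - \ell(\xi^{\star}) = -\bigl[I(\xi';y) - I(\xi^{\star};y)\bigr] + \beta\bigl[\Delta_1 + r\Delta_2\bigr],
\end{equation*}
with $\Delta_i := I(\xi';z_i) - I(\xi^{\star};z_i) = I(a_j;z_i\mid\xi^{\star})$ by the chain rule for mutual information. Under \Cref{Assumption 1.1}, $y$ is determined by the disjoint atoms $\{a_0,a_1,a_2\}$, so the sufficiency gain collapses exactly to $H(a_j\mid\xi^{\star}) > 0$. For the conciseness penalty I would use the uniform crude bounds $\Delta_i \le H(z_i) \le H(v_i)$, valid because each $z_i$ is a deterministic encoding of $v_i$, which yields $\Delta_1 + r\Delta_2 \le H(v_1) + r\,H(v_2)$.

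The denominator of $M_u$ then falls out from the elementary estimate $H(v_1) + r\,H(v_2) \le (1+r)\max\{H(v_1),H(v_2)\} \le (1+r)H(v_1,v_2)$ combined with the identity $H(v_1,v_2) = H(v_1) + H(v_2) - I(v_1;v_2)$. Substituting back gives
\begin{equation*}
\ell(\xi') - \ell(\xi^{\star}) \le -H(a_j\mid\xi^{\star}) + \beta(1+r)\bigl(H(v_1)+H(v_2)-I(v_1;v_2)\bigr),
\end{equation*}
which is non-positive for every $\beta \le M_u$ (under the standard convention that each task-relevant atom $a_j$ carries at least unit entropy in the chosen information units), closing the contradiction.

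I expect the main obstacle to be the case analysis hidden in \Cref{Assumption 1.1}: when $a_j$ is the shared atom $a_0$ both $\Delta_1$ and $\Delta_2$ are non-trivial, whereas for a modality-specific $a_1$ or $a_2$ only one of the $\Delta_i$ contributes, and one must verify from the Venn structure in \Cref{Fig: Information diagrams-1} that the residual conditional mutual informations collapse cleanly to the marginal entropies used above. A secondary subtlety is to argue that the augmentation $\xi^{\star} \mapsto \xi'$ stays within the variational family implicit in \Cref{equ:IB alter}, which follows from closure under deterministic concatenation and so should not require new machinery beyond what is already standard in IB-style arguments.
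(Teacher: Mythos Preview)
Your swap/contradiction strategy matches the paper's, but the penalty estimate $\Delta_i \le H(v_i)$ is too crude and this is where the argument breaks. After plugging in $\beta = M_u$, your displayed bound reads $\ell(\xi')-\ell(\xi^{\star})\le -H(a_j\mid\xi^{\star})+1$, which is non-positive only if $H(a_j\mid\xi^{\star})\ge 1$. But $H(a_j\mid\xi^{\star})$ is the \emph{conditional} entropy of $a_j$ given $\xi^{\star}$: it measures precisely how much of $a_j$ is still missing from $\xi^{\star}$, and it can be made arbitrarily small (take a $\xi^{\star}$ that already captures all but an $\varepsilon$-bit sliver of $a_j$). Your ``unit entropy convention'' is not standard and in any case speaks to $H(a_j)$, not $H(a_j\mid\xi^{\star})$, so it cannot rescue the inequality. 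The contradiction therefore fails exactly in the regime where only a tiny piece of an atom is missing, which is the regime you need to rule out.

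The paper avoids this by computing the $\Delta_i$ \emph{exactly} from the Venn structure of \Cref{Assumption 1.1}: for the missing piece $\check{\xi}_j$ (the part of $a_j$ disjoint from $\xi$) one has $I(v_i;\check{\xi}_j\mid\xi)=I(v_i;\check{\xi}_j)=H(\check{\xi}_j)$ whenever $\check{\xi}_j\subseteq F(v_i)$, and $=0$ otherwise. Hence both the sufficiency gain and the conciseness penalty come out proportional to $H(\check{\xi}_j)$, this common factor cancels, and the condition on $\beta$ becomes independent of how small the missing piece is---which is exactly what the contradiction needs. The paper additionally replaces $I(\xi;y)$ by the normalized quantity $\widehat{I}(\xi;y)=I(\xi;y)/F(\xi,y)$ before running the swap; this normalization is what injects the factor $H(v_1)+H(v_2)-I(v_1;v_2)=F(v_1,v_2)\ge F(\xi,y)$ into the denominator and yields the specific form of $M_u$ in the statement. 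Your write-up uses neither the exact $\Delta_i$ nor this normalization, so even with the right overall architecture it cannot reach the claimed bound.
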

\begin{proof}
    See Appendix \ref{appendix:achievability}
\end{proof}
Note that $H(v_1)+H(v_2)-I(v_1;v_2)$ represents the total information encompassed by the two data modalities. Intuitively, a larger total information content requires incorporating more information from each modality into the MIB. This is achieved by setting a lower $M_u$, ensuring that all task-relevant information is included in the MIB.

\begin{lemma}[\textbf{Exclusiveness of superfluous information}]
    \label{prop:exclusiveness}
    Under \Cref{Assumption 1.1}, the objective function in \Cref{equ:IB alter} ensures: \begin{equation}
F(\xi) \subseteq \{a_0,a_1,a_2\}\ 
\end{equation} 
\end{lemma}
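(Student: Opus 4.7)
My plan is to argue by contradiction using a data-processing/refinement argument, exploiting the fact that $y$ depends only on the task-relevant atoms $\{a_0,a_1,a_2\}$, while the regularization terms in \Cref{equ:IB alter} penalize every bit of information $\xi$ shares with $z_1$ or $z_2$, whether task-relevant or not. Intuitively, any superfluous content in $\xi$ contributes nothing to $I(\xi;y)$ but strictly inflates $\beta(I(\xi;z_1)+rI(\xi;z_2))$, so it cannot survive the minimization.

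First, I would use the fact that any feasible $\xi$ is a (possibly stochastic) function of $(z_1,z_2)$, so the data-processing inequality gives $F(\xi)\subseteq F(z_1)\cup F(z_2)\subseteq F(v_1)\cup F(v_2)=\{a_0,a_1,a_2,b_0,b_1,b_2\}$ under \Cref{Assumption 1.1}. I would then decompose $F(\xi)=R\cup S$ with $R\subseteq\{a_0,a_1,a_2\}$ and $S\subseteq\{b_0,b_1,b_2\}$, and suppose for contradiction that an optimizer $\xi^{*}$ of $\ell$ satisfies $S\neq\emptyset$.

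Second, I would build a competitor $\xi'$ that retains only the relevant content of $\xi^{*}$. Formally, choose $\xi'$ so that $R\to\xi'\to\xi^{*}$ is a Markov chain with $F(\xi')=R$ (e.g., marginalize out the $S$-carrying coordinates of $\xi^{*}$, or equivalently resample $\xi^{*}$ conditional on $R$). Because the atoms $\{a_0,a_1,a_2,b_0,b_1,b_2\}$ are pairwise information-disjoint by construction of the Venn diagram in \Cref{Fig: Information diagrams-1}, this refinement is well defined. Since $y$ is a deterministic function of $\{a_0,a_1,a_2\}$ by \Cref{Assumption 1.1}, we have $y\perp S\mid R$, whence $I(\xi^{*};y)=I(R;y)=I(\xi';y)$. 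The chain rule applied to the regularization gives, for each $i\in\{1,2\}$,
\[
I(\xi^{*};z_i)=I(\xi';z_i)+I(S;z_i\mid\xi'),
\]
and at least one conditional term is strictly positive because $S\subseteq F(z_1)\cup F(z_2)$ and any nontrivial atom $s\in S$ has $H(s)>0$. Consequently $\ell(\xi')<\ell(\xi^{*})$ for every $\beta>0$ and $r>0$, contradicting the optimality of $\xi^{*}$ and forcing $S=\emptyset$, i.e., $F(\xi)\subseteq\{a_0,a_1,a_2\}$.

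The main obstacle is rigorously justifying the refinement $\xi^{*}\mapsto\xi'$ and the strict positivity $I(S;z_i\mid\xi')>0$ in a continuous-variable setting, since ``deleting an atom'' is not literally a measure-theoretic operation. I would handle this either by working at the level of partial information decomposition atoms---treating the $\{a_j\}$ and $\{b_j\}$ as independent latent components as the Venn-diagram geometry in \Cref{Fig: Information diagrams-1} implicitly assumes---or by passing to the minimal sufficient statistic of $\xi^{*}$ for $y$, which by construction discards any coordinate of $\xi^{*}$ that is independent of $y$ and, via the data-processing inequality, cannot increase either regularization term. Either route collapses the argument to the additive information diagram, where the strict decrease of the objective is immediate.
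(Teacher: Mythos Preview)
Your proposal is correct and follows essentially the same refinement argument as the paper: show that any superfluous content in $\xi$ leaves the predictive term unchanged while strictly inflating the regularizer, so no minimizer of $\ell$ can carry it. The paper's version differs only cosmetically---it argues in the ``adding'' direction (taking $\ddot\xi=\{\xi,\hat z_j\}$ and showing $\ell(\ddot\xi)>\ell(\xi)$) rather than your ``deleting'' direction, treats the three superfluous atoms $b_1,b_2,b_0$ in separate cases rather than bundling them into your set $S$, and, as a technical wrinkle you could not have seen, replaces $I(\xi;y)$ by the normalized quantity $\widehat I(\xi;y)=I(\xi;y)/H(\xi,y)$ before running the computation, which makes even the predictive term strictly improve under the refinement; your version with ordinary $I$ already suffices since the regularizer alone gives the strict inequality.
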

\begin{proof}
    See \Cref{appendix:achievability}
\end{proof}
\begin{proposition}[\textbf{Achievability of optimal MIB}]
    \label{prop:Ach OMIB}
Under \Cref{Assumption 1.1}, the optimal MIB $\xi_{opt}$ is achievable through optimization of \Cref{equ:IB alter} with $\beta \in (0, M_u]$.
\end{proposition}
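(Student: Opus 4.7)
The plan is to derive Proposition \ref{prop:Ach OMIB} as a direct corollary by combining the two preceding lemmas, since each controls one of the two set-containment directions that together characterize optimality per Definition \ref{def:optimal MIB}. The statement of optimality, $F(\xi_{opt}) = \{a_0, a_1, a_2\}$, is equivalently the conjunction $F(\xi) \supseteq \{a_0, a_1, a_2\}$ and $F(\xi) \subseteq \{a_0, a_1, a_2\}$, so the argument splits cleanly along the two lemmas.

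First, I would invoke Lemma \ref{prop:inclusiveness}, which requires the hypothesis on $\beta$. Assuming $\beta \in (0, M_u]$ with $M_u = \frac{1}{(1+r)(H(v_1)+H(v_2)-I(v_1;v_2))}$, the lemma guarantees the inclusion $F(\xi) \supseteq \{a_0, a_1, a_2\}$, i.e., all task-relevant atoms identified in Assumption \ref{Assumption 1.1} are represented in the optimized bottleneck. This is the step where the quantitative bound on $\beta$ enters; it ensures the conciseness penalty is never so aggressive that task-relevant content is inadvertently discarded.

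Second, I would invoke Lemma \ref{prop:exclusiveness}, which holds under Assumption \ref{Assumption 1.1} without further restriction on $\beta$ (beyond positivity, implicit in the objective). It yields $F(\xi) \subseteq \{a_0, a_1, a_2\}$, ruling out the superfluous components $b_0, b_1, b_2$ from the bottleneck's information content. Combining the two containments gives $F(\xi) = \{a_0, a_1, a_2\}$, which matches Definition \ref{def:optimal MIB} exactly, so the optimum of \Cref{equ:IB alter} is $\xi_{opt}$.

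There is no genuine obstacle at this stage: the real technical content lives in Lemmas \ref{prop:inclusiveness} and \ref{prop:exclusiveness}, particularly in deriving the explicit bound $M_u$ via a comparison between the gain $I(\xi;y)$ obtained by absorbing a task-relevant atom and the penalty $\beta(I(\xi;z_1)+rI(\xi;z_2))$ it incurs. The one small subtlety I would make explicit in the writeup is that the statement is about the optimizer of \Cref{equ:IB alter}, not of the variational surrogate $L_{OMF}$; since Proposition \ref{Lemma 1} establishes that $L_{OMF}$ is an upper bound on the true objective, convergence in training tightens that bound, so the conclusion extends to the trained $\xi$ up to the usual variational gap.
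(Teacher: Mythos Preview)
Your proposal is correct and mirrors the paper's own proof exactly: the paper combines Lemma~\ref{prop:inclusiveness} (requiring $\beta\in(0,M_u]$) and Lemma~\ref{prop:exclusiveness} to obtain the two containments, then concludes $F(\xi)=\{a_0,a_1,a_2\}=F(\xi_{opt})$ by Definition~\ref{def:optimal MIB}. Your additional remarks about the variational surrogate are accurate context but go beyond what the paper states in this short proof.
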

\begin{proof}
 \Cref{prop:inclusiveness} and \Cref{prop:exclusiveness} jointly demonstrate that $F(\xi)\supseteq\{a_0,a_1,a_2\}$ and $F(\xi) \subseteq \{a_0,a_1,a_2\}$, given $\beta \in (0, M_u]$. Thus, $F(\xi)=\{a_0,a_1,a_2\}$, which corresponds to $\xi_{opt}$ in \Cref{def:optimal MIB}. This completes the proof. 
\end{proof}
In this study, we set $M_u:=\frac{1}{3(H(v_1)+H(v_2)-I(v_1;v_2))}<\frac{1}{(1+r) (H(v_1)+H(v_2)-I(v_1;v_2))}$ as a tighter upper bound for $\beta$ given that $r\in (0,2)$, and $M_l:=\frac{1}{3(H(v_1)+H(v_2))}\leq M_u$ as a lower bound for $\beta$ to accelerate training. Importantly, both $M_l$ and $M_u$ can be computed a priori from the training data using the Mutual Information Neural Estimator (MINE, \cite{belghazi2018mine}) to estimate $H(\cdot)$ and $I(\cdot;\cdot)$ (see \Cref{net:estimate}).

\section{Experiment} 
Due to space constraints, we defer detailed task-specific experimental settings to  \Cref{exp_set} and implementations of network architectures to \Cref{net:Implem}. Detailed descriptions of the benchmark methods and evaluation metrics are provided in \Cref{benchmark} and \Cref{evaluation} respectively. The best and second-best performing methods in each experiment are bolded and underlined, respectively.

\begin{table}[htbp]
\centering
\caption{Classification accuracy of synthetic features vs. OMIB-generated MIB on simulated datasets.}
\fontsize{8pt}{9.5pt}\selectfont
\setlength{\tabcolsep}{4pt} 
\resizebox{1.0\linewidth}{!}{
\begin{tabular}{c|c|c}
\Xhline{1.0pt}
Datasets   &  Imbalanced (SIM-I)      &  balanced(SIM-III)     \\ \Xhline{0.7pt}
Consistent\& relevant   & 0.707  & 0.686    \\ 
Modality-specific\& relevant     & 0.737 & 0.744          \\ 
Unimodal      & 0.748 / 0.82   &  0.792 / 0.78        \\ 
Authentic optimal MIB    & \textbf{0.909} & \textbf{0.908}       \\ 
Union of two modalities & 0.858 & 0.866\\
\Xhline{0.7pt}
OMIB-generated MIB      & \underline{0.892}  & \underline{0.890}   \\ 
\Xhline{1.0pt}
\end{tabular}}
\label{Balanced vs. Imbalanced}
\end{table}

\subsection{Datasets}
To facilitate the analysis of OMIB's performance and validate \Cref{prop:Ach OMIB}, we simulate three Gaussian-based two-modality dataset, \textbf{SIM-\{I-III\}}, for classification (see \Cref{gen_sg}). Each dataset contains all four types of information ($\{\text{consistent},\text{modality-specific}\}\times\{\text{task-relevant},\text{superfluous}\}$). Moreover, they are synthesized with varying distributions of task-relevant information across modalities. \par

The emotion recognition experiment is conducted on \textbf{CREMA-D} \cite{cao2014crema}, an audio-visual dataset in which actors express six basic emotions—happy, sad, anger, fear, disgust, and neutral—through both facial expressions and speech. The MSA experiment utilizes \textbf{CMU-MOSI} \cite{zadeh2016multimodal}, which encompasses visual, acoustic, and textual modalities, with sentiment intensity annotated on a scale from -3 to 3. The pathological tissue detection experiment involves eight datasets derived from healthy human breast tissues (\textbf{10x-hNB-\{A-H\}}) and human breast cancer tissues (\textbf{10x-hBC-\{A-H\}})  \cite{xu2024detecting}, where each dataset comprises gene expression and histology modalities. OMIB is trained on the healthy datasets and applied to the cancer datasets for pathological tissue detection. Detailed descriptions of these datasets are provided in \Cref{dataset:description} and \Cref{table:dataset}. 

\begin{table*}[htbp]
\centering
\caption{Comparison of multimodal fusion methods for emotion recognition on the CREMA-D.}
\fontsize{7.5pt}{9.5pt}\selectfont
\setlength{\tabcolsep}{2pt} 
\begin{tabular}{c|ccc|cccccc|c}
\Xhline{0.9pt}
\multirow{2}{*}{Methods} &\multicolumn{3}{c|}{non-MIB-based} &\multicolumn{6}{c|}{MIB-based} &\multirow{2}{*}{OMIB} \\ \Xcline{2-10}{0.5pt}
 & Concat & BiGated & MISA & deep IB & MMIB-Cui & MMIB-Zhang & E-MIB & L-MIB &C-MIB  \\ \hline
Acc &53.2  & 58.4 &57.7  &54.1  &57.3  &56.7  & \underline{61.4} & 58.1  & 57.0  &\textbf{63.6} \\ \Xhline{0.9pt}
\end{tabular}
\label{class:CREMA-D}
\end{table*}

\begin{table}[thbp]
\centering
\caption{Comparison of multimodal fusion methods for sentiment analysis on the CMU-MOSI dataset.}
\fontsize{6.5pt}{7.8pt}\selectfont
\setlength{\tabcolsep}{3pt}
\resizebox{1.0\linewidth}{!}{
\begin{tabular}{c|c|c|c|c|c}
\Xhline{0.7pt}
Method       & Acc7  ($\uparrow$) & Acc2 ($\uparrow$) & F1($\uparrow$) & MAE($\downarrow$) & Corr($\uparrow$) \\ \hline
Concat &41.5 &81.1 &82.0 &0.797 &0.745 \\
BiGated  &41.8 &82.1 &83.2 &0.787 &0.738 \\
MISA & 42.3 & 83.4 & 83.6 & 0.783 &0.761\\
\hline
deep IB &45.3 &83.2 &83.3 &0.747 &0.785 \\
MMIB-Cui &45.7 &84.3 	&84.4 	&0.726	&0.782 \\
MMIB-Zhang &46.3  &85.0 &85.0 &0.713 &0.788  \\ 
DMIB &40.4 &83.2 &83.3 &0.810 &0.784 \\
E-MIB &\textbf{48.6} &\underline{85.3} &\underline{85.3} &\underline{0.711} &\underline{0.798} \\
L-MIB &45.8 &84.6 &84.6 &0.732 &0.790 \\
C-MIB &\underline{48.2} &85.2 &85.2 &0.728 &0.793\\
\hline
OMIB &\textbf{48.6} &\textbf{86.9} &\textbf{87.1} &\textbf{0.709} &\textbf{0.802} \\ \Xhline{0.7pt}
\end{tabular}}
\label{MOSI}
\end{table} 

\subsection{Empirical Analysis of OMIB Performance Using Synthetic Data} \label{exp:Synthetic Gaussian}

To empirically validate the effectiveness of our proposed $\beta$'s upper bound in achieving optimal MIB, we simulate three two-modality datasets (\textbf{SIM-\{I-III\}}) corresponding to three experimental cases (\textbf{case i-iii}) (see \Cref{gen_sg}). Regarding task-relevant information, Modality I dominates Modality II in SIM-I, Modality II dominates Modality I in SIM-II, and both modalities contribute equally in SIM-III, thereby covering the three primary cross-modal task-relevant information distributions observed in practice. Each dataset is designed for a binary classification task with labels $y\in\{0,1\}$. In each experimental case, $\beta$ is gradually increased from $10^{-6}$ to 10, well exceeding the proposed upper bound $M_u$. The generated MIBs are fed into the trained OMF prediction head to predict $y$ during testing. As shown in \Cref{Fig: exp1}, the prediction accuracy consistently peaks across all cases when using MIBs generated with $\beta$ near or below $M_u$, but rapidly declines as $\beta$ further increases. This observation aligns with our theoretical analysis, empirically confirming that optimal MIB is achievable when $\beta\leq M_u$. Notably, since $M_u$ is a tight upper bound, peak performance may still be observed for $\beta$ values slightly above $M_u$. 

As detailed in \Cref{gen_sg}, let $x_1=[a_0;b_0;a_1;b_1]$ and $x_2=[a_0;b_0;a_2;b_2]$ denote feature vectors of two observations in Modality I and II, respectively. Here, $a_0$ and $b_0$ correspond to the task-relevant and superfluous sub-vectors shared by both modalities. $a_1,a_2$ are modality-specific, task-relevant sub-vectors, while $b_1, b_2$ are modality-specific, superfluous sub-vectors. By design, the authentic optimal MIB is $[a_0;a_1;a_2]$, which is used to predict $y$ and compared against the prediction using OMIB-generated MIB. Additionally, we evaluate prediction accuracy using other feature sub-vectors, including unimodal information ($x_1$ or $x_2$), consistent task-relevant information ($[a_0]$), modal-specific task-relevant information ($[a_1;a_2]$), and complete information ($[a_0;b_0;a_1;b_1;a_2;b_2]$). This experiment is conducted using SIM-I and SIM-II, corresponding to the cases of imbalanced and balanced task-relevant information, respectively. Table \ref{Balanced vs. Imbalanced} demonstrates that OMIB-generated MIB achieves prediction accuracy most comparable to the authentic optimal MIB, surpassing all other feature sub-vector configurations that either omit task-relevant information or include superfluous information. These results further validate the optimality of OMIB-generated MIB.

\begin{table*}[htbp]
\centering
\caption{Comparison of  multimodal fusion methods for anomalous tissue detection performance on the 10x-hBC-\{A-D\} datasets}
\fontsize{6.5pt}{7.5pt}\selectfont
\setlength{\tabcolsep}{2.7pt} 
\renewcommand{\arraystretch}{1}
\resizebox{1.0\linewidth}{!}{
\begin{tabular}{c|c|ccc|ccccccc|c}
\Xhline{0.7pt}
\multirow{2}{*}{\centering \makecell{Target \\ Dataset}}& \multirow{2}{*}{\centering Metric}& \multicolumn{3}{c|}{non-MIB-based} & \multicolumn{7}{c|}{MIB-based} &\multirow{2}{*}{OMIB}\\

\Xcline{3-12}{0.4pt}
& &Concat &BiGated &MISA &deep IB &MMIB-Cui &MMIB-Zhang &DMIB & E-MIB & L-MIB  & C-MIB & \\ 

\Xhline{0.5pt}
\multirow{2}{*}{10x-hBC-A} & AUC &0.537 &0.489 &0.498 &0.522 &0.623 &\underline{0.626} &0.423 & 0.511
&0.598 &0.496 & \textbf{0.728} \\
& F1 &0.884 &0.821 &0.873 &0.878 &0.894 &\underline{0.897} &0.865 & 0.877 &0.891 &0.881 & \textbf{0.904} \\
 
\Xhline{0.5pt}
\multirow{2}{*}{10x-hBC-B} & AUC &0.866 &0.518 &0.499 &0.379 &0.818 &0.817 &\underline{0.849} &0.643 &0.770 &0.481 &\textbf{0.903} \\
& F1   &0.654 &0.352 &0.213 &0.102 &0.559 &0.583 &\underline{0.607} &0.330 &0.483 &0.213 & \textbf{0.663} \\

\Xhline{0.5pt}
\multirow{2}{*}{10x-hBC-C} & AUC &0.638 &0.563 &0.586 &0.433 &\textbf{0.765} &0.662 &\underline{0.743} &0.598 &0.659 &0.511 &\underline{0.743}\\
& F1 &0.750 &0.727 &0.754 &0.693 &	\underline{0.822} &0.783 &\textbf{0.827} &0.759 &0.786 &0.723 &0.820 \\
 
\Xhline{0.5pt}
\multirow{2}{*}{10x-hBC-D} &AUC &0.555
 &0.540 &0.495 &0.484 &0.501 &0.604 &\underline{0.642} &0.530 &\textbf{0.652} &0.503 &0.640 \\
& F1 &0.509 &0.494 &0.450 &0.443 &0.465 &0.524 &0.540 &0.483 &\textbf{0.564} &0.465 &\underline{0.561} \\

\Xhline{0.5pt}
\multirow{2}{*}{Mean} & AUC &0.649 &0.528 &0.520 &0.455 &\underline{0.677} &\underline{0.677} &0.664 &0.571 &0.602 &0.498 &\textbf{0.754} \\
& F1 &\underline{0.699} &0.599 &0.573 &0.529 &0.685 &0.697 &0.710 &0.612 &0.681 &0.571 &\textbf{0.737} \\

\Xhline{0.8pt}
\end{tabular}}
\label{Anomaly detection}
\end{table*} 

\subsection{Emotion Recognition}\label{sec:ER}
Here, we compare the accuracy of classifying actors' emotion types in the CREMA-D dataset using OMIB and ten benchmark methods, including three non-MIB-based fusion methods (concatenation, FiLM \cite{perez2018film}, and BiGated \cite{kiela2018efficient}) and seven MIB-based state-of-the-art (SOTA) methods (E-MIB, L-MIB, and C-MIB \cite{9767641} ). The classification accuracy of each method is reported in Table \ref{class:CREMA-D}. OMIB outperforms all other methods, achieving improvements of 8.9\% and 3.6\% over the best-performing non-MIB-based (concatenation) and MIB-based (E-MIB) fusion methods, respectively. These results underscore OMIB's superiority in enhancing emotion recognition performance. 

\subsection{Multimodal Sentiment Analysis}
To evaluate OMIB's effectiveness in improving downstream tasks involving three modalities, we conduct MSA on the CMU-MOSI dataset, which includes visual, acoustic, and textual modalities. Specifically, OMIB and the same benchmark methods from \Cref{sec:ER} are used to predict a real-valued sentiment intensity score for each utterance, ranging from -3 to 3. 
Evaluation metrics for this experiment are mentioned in \Cref{evaluation}. Additionally, OMIB consistently outperforms all benchmark methods across all evaluation metrics, highlighting its ability to generate high-quality MIB in a three-modal setting for enhanced regression tasks such as the MSA.

\subsection{Anomalous Tissue Detection}
In this experiment, we aim to identify anomalous tissue regions from the eight human breast cancer datasets (10x-hBC-\{A-D\}), which include gene expression and histology modalities. Due to the scarcity of tissue region annotations, we adopt the SVDD strategy \cite{pmlr-v80-ruff18a} for anomaly detection. Specifically, the model is trained exclusively on the eight healthy datasets (10x-hNB-\{A-H\}) to learn a compact hypersphere in the latent space, confining the multimodal representations of inliers. The trained model is then applied to the four breast cancer target datasets, generating multimodal representations whose distances to the center of the hypersphere serve as anomaly scores, based on which anomalous regions are identified. The benchmark methods are the same as those in \Cref{sec:ER} and modified to accommodate the SVDD strategy. The implementation details of OMIB for this task, are provided in \Cref{net:Implem}. 
The detection results are evaluated using the AUC and F1 scores, calculated based on the anomalous scores (see \Cref{evaluation}).  \Cref{Anomaly detection} demonstrates that OMIB consistently surpasses the best-performing benchmark method by an average leap of 11.4\% in AUC and 3.8\% in F1-score across the target datasets, confirming its superiority in anomaly detection in a multi-modal setting. 

\begin{table}[htbp]
\centering
\caption{Ablation studies on the CREMA-D dataset.}
\fontsize{7pt}{8.7pt}\selectfont
\setlength{\tabcolsep}{2.4pt} 
\resizebox{1.0\linewidth}{!}{
\begin{tabular}{c|cccc|c}
\Xhline{0.7pt}
 & w/o Warm-up  & w/o cross-attn & w/o OMF & w/o $r$ & Full \\ \hline
Acc   & 60.3 & 61.5  & 59.5 &62.2 & \textbf{63.6} \\ 
\Xhline{0.7pt}
\end{tabular}}
\label{exp:ablation}
\end{table}

\subsection{Ablation Study}
To gain deeper insight into the key components of OMIB, we conduct a series of ablation experiments on the CREMA-D dataset (\Cref{exp:ablation}). First, we examine the effect of removing the warm-up training ('w/o warm-up'), which leads to a 5.5\% decline in accuracy. Next, we replace the CAN with simple concatenation fusion ('w/o cross-attn'), resulting in a 3.4\% drop in accuracy. We also evaluate the effect of replacing the entire OMF block with simple concatenation fusion ('w/o OMF'), which significantly degrades model performance by 6.9\% in accuracy. Finally, we assign equal regularization weights to $I(\xi;z_1)$ and $I(\xi;z_2)$ by omitting $r$ ('w/o $r$') and observe a performance decline of 2.3\% in accuracy. 
In a nutshell, the degraded performance observed after removing OMIB's key components highlights their critical roles in ensuring model performance. 

\subsection{Complexity Analysis}\label{complexity}

\begin{figure}[h] 
\centering
\includegraphics[width=0.95\linewidth]{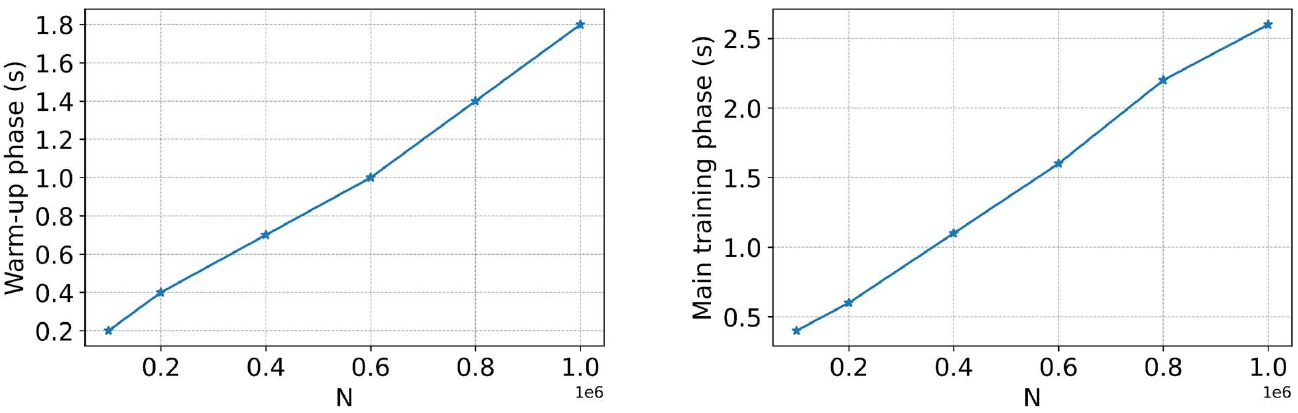} 
\caption{Runtime per epoch during warm-up and main training phase on synthetic data.}
\label{Fig: complex}
\end{figure}\par

We first provide a theoretical analysis of OMIB's complexity. OMIB's modality-specific encoder ($Enc$), task-relevant prediction head ($Dec$ and $\widehat{Dec}$), and VAEs are implemented as Multilayer Perceptron (MLP), convolutional network, or graph convolutional network, each with a complexity of $\mathcal{O}(N)$, where $N$ denotes the number of samples \cite{he2015convolutional, lecun2002efficient, wu2020comprehensive}. For the CAN network, our implementation (see  \Cref{net:Implem}) has a time complexity of $\mathcal{O}(N \cdot M^2)$ \cite{vaswani2017attention}, where $M$ represents the number of modalities. Since $M$ is typically small, $M^2$ can be treated as a constant. Thus, OMIB's overall theoretical complexity is $\mathcal{O}(N)$. We also empirically evaluate OMIB's scalability to input size using the SIM-III dataset. Explicitly, we sample six datasets with sizes: \(1 \times 10^5\), \(2 \times 10^5\), \(4 \times 10^5\), \(6 \times 10^5\), \(8 \times 10^5\), and \(1 \times 10^6\), while keeping the experimental settings identical to those of \textbf{case iii} in Section 6.2. We conduct separate analyses for the warm-up and main training phases, both of which demonstrate scalability to input size, as shown in \Cref{Fig: complex}.

\section{Conclusion}
We have proposed the OMIB framework, designed to learn optimal MIB representations that effectively capture all task-relevant information. Through theoretical analysis, we demonstrate that adjusting the weights of the IB loss across different modalities facilitates the achievement of optimal MIB. Our experimental results show that OMIB outperforms existing MIB-based methods. Furthermore, our approach is robust, successfully achieving optimal MIB regardless of whether the SNRs between modalities are balanced or imbalanced.\par

\section*{Impact Statement}
This paper presents work whose goal is to advance the field of Machine Learning. There are many potential societal consequences of our work, none which we feel must be specifically highlighted here.

\section*{Acknowledgments}
We would like to thank Wenlin Li, Yan Lu, Zhengke Duan, and Junqi Li for their help with the experiments.

\nocite{langley00}

\bibliography{ref}
\bibliographystyle{icml2025}

%%%%%%%%%%%%%%%%%%%%%%%%%%%%%%%%%%%%%%%%%%%%%%%%%%%%%%%%%%%%%%%%%%%%%%%%%%%%%%%
%%%%%%%%%%%%%%%%%%%%%%%%%%%%%%%%%%%%%%%%%%%%%%%%%%%%%%%%%%%%%%%%%%%%%%%%%%%%%%%
% APPENDIX
%%%%%%%%%%%%%%%%%%%%%%%%%%%%%%%%%%%%%%%%%%%%%%%%%%%%%%%%%%%%%%%%%%%%%%%%%%%%%%%
%%%%%%%%%%%%%%%%%%%%%%%%%%%%%%%%%%%%%%%%%%%%%%%%%%%%%%%%%%%%%%%%%%%%%%%%%%%%%%%
\newpage
\appendix
 \onecolumn

\section*{\LARGE Appendix}
\section{Proofs of Mutual Information Properties}\label{proof:prop}

\begin{properties}\label{properties 1}
Properties of mutual information and entropy:
\begin{flalign*}
\textbf{i}) \ &I(x;y)\geq 0, I(x;y|z)\geq 0. &
\end{flalign*}
\begin{flalign*}
\textbf{ii}) \ &I(x;y,z) = I(x;y)+I(x;z|y). &
\end{flalign*}
\begin{flalign*}
\textbf{iii}) \ I(x_1;x_2;\cdots;x_{n+1}) &= I(x_1;\cdots;x_n)\\ &-I(x_1;\cdots;x_n|x_{n+1}). &
\end{flalign*}
\begin{flalign*}
\textbf{iv}) \ \text{If} \ &F(x_1)\cap F(x_2)=\emptyset \longrightarrow I(x_1;x_3|x_2)=I(x_1;x_3)&
\end{flalign*}
\begin{flalign*} 
\textbf{v})\ \text{If} \ &F(v_2)\subseteq F(v_1)\longrightarrow I(v_1;v_2)=H(v_2), \\ &H(v_1,v_2)= F(v_1)\cup F(v_2)=F(v_1)=H(v_1) &
\end{flalign*}
\begin{flalign*}
\textbf{vi}) \ \text{If}\ &H(v_2)\cap H(v_1)\neq \emptyset \longrightarrow H(v_1,v_2)=H(v_1)+H(v_2)\\&-I(v_1;v_2) &
\end{flalign*}
\begin{flalign*}
\textbf{vii}) \ \text{If}\ &H(v_2)\cap H(v_1)=\emptyset \longrightarrow H(v_1,v_2)=H(v_1)+H(v_2)\\&=F(v_1)\cup F(v_2) &
\end{flalign*}
\end{properties}

\begin{proof}
The proofs of properties \textbf{i, ii,} and \textbf{iii} can be found in \cite{cover1999elements}. For property \textbf{iv}, we first observe that:
\begin{equation}
    \begin{split}
        F(y)\cap F(z)=\emptyset \longrightarrow p(y,z) = p(y)p(z)
    \end{split}
\end{equation}\par
This implies that $y$ and $z$ are statistically independent. Consequently, we have
\begin{equation}
    \begin{split}
        I(y;z) &= \sum_{y,z} p(y,z) log\, \frac{p(y,z)}{p(y)p(z)}\\
        &= \sum_{y,z} p(y,z) log\, \frac{p(y)p(z)}{p(y)p(z)}\\
        &= \sum_{y,z} p(y,z) log\, 1\\
        &=0
    \end{split}
\end{equation}\par
Given that $I(y;z)=I(x;y;z)+I(x;z|y)$, and noting that $I(x;y;z)\geq 0$ and $ I(x;z|y)\geq 0$, it follows that:
\begin{equation}
    \begin{split}
        I(y;z)=0 \longrightarrow I(x;y;z) = 0\, \text{and}\, I(x;z|y) = 0
    \end{split}
\end{equation}\par
Therefore, we obtain that:
\begin{equation}
    \begin{split}
        I(x;y|z) = I(x;y) - \overbrace{I(x;y;z)}^{=0} = I(x;y)
    \end{split}
\end{equation}

For property \textbf{v}:
    \begin{equation}
        \begin{split}
            H(v_1;v_2)&=\underset{v_1,v_2}{\iint }p(v_1,v_2)log(\frac{p(v_1,v_2)}{p(v_1)p(v_2)})\\ &=\underset{v_1,v_2}\iint p(v_1,v_2)log (\frac{\overbrace{p(v_2|v_1)}^{=1\ \text{as}\ F(v_2)\subseteq F(v_1)}p(v_1)}{p(v_1)p(v_2)})\\
&=\underset{v_2}{\int }-p(v_2)log(p(v_2))=H(v_2).
        \end{split}
    \end{equation}\par
In addition, for $I(v_1,v_2)$, we have:
\begin{equation}
    \begin{split}
            H(v_1,v_2)&=F(v_1)\cup F(v_2)\\&=\underset{v_1,v_2}{\iint }-p(v_1,v_2)log( p(v_1,v_2))\\ 
            &=\underset{v_1,v_2}\iint -p(v_1,v_2)log(p(v_2|v_1)p(v_1))\\
            &=\underset{v_1}\int -p(v_1)log(p(v_1))=H(v_1)=F(v_1).   
    \end{split}
\end{equation}\par

For property \textbf{vi}, we have:
\begin{equation}
    \begin{split}
    H(v_1) \cap H(v_2) \neq \emptyset \longrightarrow p(v_1, v_2) \neq p(v_1) p(v_2). \\
\end{split}
\end{equation}\par
The mutual information $I(v_1; v_2)$ is defined as:
\begin{equation}
    \begin{split}
I(v_1; v_2) &= \iint p(v_1, v_2) \log \frac{p(v_1, v_2)}{p(v_1) p(v_2)} \, dv_1 \, dv_2 \neq 0 \\
&= \iint p(v_1, v_2) \log p(v_1, v_2) \, dv_1 \, dv_2 - \iint p(v_1, v_2) \log p(v_1) \, dv_1 \, dv_2 \\
&\quad - \iint p(v_1, v_2) \log p(v_2) \, dv_1 \, dv_2 \\
&= -\iint p(v_1, v_2) \log p(v_1) \, dv_1 \, dv_2 - \iint p(v_1, v_2) \log p(v_2) \, dv_1 \, dv_2 \\
&\quad + \iint p(v_1, v_2) \log p(v_1, v_2) \, dv_1 \, dv_2 \\
&= -\int p(v_1) \log p(v_1) \, dv_1 - \int p(v_2) \log p(v_2) \, dv_2 \\
&\quad + \iint p(v_1, v_2) \log p(v_1, v_2) \, dv_1 \, dv_2 \\
&= H(v_1) + H(v_2) - H(v_1, v_2) \\
\end{split}
\end{equation}\par
Since $H(v_1) \cap H(v_2) \neq \emptyset \longrightarrow I(v_1; v_2) \neq 0$, we have $H(v_1, v_2) = H(v_1) + H(v_2) - I(v_1; v_2)$.\par
For property \textbf{vii}, we first clarified that:
    \begin{equation}
         F(v_2)\cap F(v_1)=\emptyset  \longrightarrow p(v_1,v_2)=p(v_1)p(v_2)
    \end{equation}\par
Therefore, we have:
    \begin{equation}
        \begin{split}
            H(v_1,v_2)&=\underset{v_1,v_2}{\iint }-p(v_1,v_2)log (p(v_1,v_2))\\ 
            &=\underset{v_1,v_2}{\iint }-p(v_1)p(v_2)log (p(v_1)p(v_2))\\
            &=\underset{v_1}{\int }-p(v_1)log (p(v_1))+\underset{v_2}{\int }-p(v_2)log(p(v_2))\\
            &=H(v_1)+H(v_2)
        \end{split}
    \end{equation}
\end{proof}

\section{Proofs of \Cref{Lemma 1} and \Cref{Lemma 2}}\label{AM-VIB}
For convenient reading, the equations used in the proofs are copied from the main text:
\begin{equation}\label{equ:IB loss imp-2}
\begin{split}
    L_{OMF} &= \frac{1}{N} \sum_{n=1}^{N} \mathbb{E}_{\epsilon_1}\mathbb{E}_{\epsilon_2} \left[ -\log q(y^n|\xi^n) \right]
    + \beta \left( KL \left[ p(\zeta_1^n|z_1^n) || \mathcal{N}(0, I) \right]
    + r \, KL \left[ p(\zeta_2^n|z_2^n) || \mathcal{N}(0, I) \right] \right).
\end{split}
\end{equation}
which is copied from \ \Cref{equ:IB loss imp}. 
\begin{equation} \label{equ:r-2}
    r= 1 - tanh\Bigl(\ln \frac{1}{N} \sum_{n=1}^{N} \mathbb{E}_{\epsilon_1}\mathbb{E}_{\epsilon_2}\Bigl[\frac{KL(p(\hat{y}_2^n|\xi^n,z_2^n)||p(\hat{y}^n|\xi^n))}{KL(p(\hat{y}_1^n|\xi^n,z_1^n)||p(\hat{y}^n|\xi^n))}\Bigl]\Bigl),
\end{equation}
which is copied from \ \Cref{equ:r}. 
\begin{equation} \label{equ:IB alter-2}
\min_{\xi} \ell(\xi) = \min_{\xi} -I(\xi;y)+\beta (I(\xi;z_1)+r I(\xi;z_2)),
\end{equation}
which is copied from \ \Cref{equ:IB alter}
\begin{equation} \label{eq:ratio-2}
    \begin{split}
        r&\propto \frac{I(y;v_1|\xi)}{I(y;v_2|\xi)},
    \end{split} 
\end{equation}
which is copied from \ \Cref{eq:ratio}. 

\begin{proposition}[\textbf{Proposition \ref{Lemma 1} restated}]
\label{Lemma 1-repeat}
The loss function, $L_{OMF}$, in \Cref{equ:IB loss imp-2} provides a variational upper bound for optimizing the objective function in \Cref{equ:IB alter-2}, and can be explicitly calculated during training.
\end{proposition}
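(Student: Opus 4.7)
The plan is to derive, for each of the three mutual-information terms appearing in \Cref{equ:IB alter-2}, a tractable variational upper bound, and then weight and sum them to reproduce $L_{OMF}$ in \Cref{equ:IB loss imp-2}. This adapts the Variational Information Bottleneck template of \cite{alemi2017deep} to the two-branch multimodal fusion setting so that, after bounding, every quantity becomes either a closed-form KL between Gaussians or a likelihood that is Monte-Carlo estimable via the reparameterization trick of \Cref{eq:reparam}; the claim of explicit computability during training then follows automatically.

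For the $-I(\xi;y)$ term, I would write $-I(\xi;y)=-\mathbb{E}_{p(y,\xi)}[\log p(y|\xi)]-H(y)$, drop the parameter-independent entropy $H(y)$, and introduce a variational decoder $q(y|\xi)$ realized by $\widehat{Dec}$. Non-negativity of $KL(p(y|\xi)\,\|\,q(y|\xi))$ then yields $-\mathbb{E}[\log p(y|\xi)]\le -\mathbb{E}[\log q(y|\xi)]$, giving the first summand of $L_{OMF}$. For each $I(\xi;z_i)$, I would exploit the generative chain $z_i\to\zeta_i\to\xi$ induced by $\zeta_i\sim\mathcal{N}(\mu_i(z_i),\Sigma_i(z_i))$ and $\xi=\mathrm{CAN}(\zeta_1,\zeta_2)$, invoke the Data Processing Inequality to obtain $I(\xi;z_i)\le I(\zeta_i;z_i)$, and then apply the standard variational bound
\begin{equation*}
I(\zeta_i;z_i)=\mathbb{E}_{p(z_i)}\bigl[KL(p(\zeta_i|z_i)\,\|\,p(\zeta_i))\bigr]\le \mathbb{E}_{p(z_i)}\bigl[KL(p(\zeta_i|z_i)\,\|\,\mathcal{N}(0,I))\bigr],
\end{equation*}
which follows from replacing the intractable marginal $p(\zeta_i)$ with the isotropic prior and using $KL(p(\zeta_i)\,\|\,\mathcal{N}(0,I))\ge 0$. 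Multiplying by $\beta$ and $\beta r$ respectively and summing with the likelihood bound produces a variational upper bound on $\ell(\xi)$.

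The final step is to replace population expectations by empirical averages over the $N$ training observations and to apply the reparameterization trick so gradients flow through $\xi$. The two Gaussian KL terms then admit closed-form expressions in $(\mu_i,\Sigma_i)$, and the log-likelihood term becomes a Monte-Carlo estimate under $\epsilon_1,\epsilon_2\sim\mathcal{N}(0,I)$, recovering exactly \Cref{equ:IB loss imp-2}. The main obstacle is justifying the Data Processing Inequality step $I(\xi;z_1)\le I(\zeta_1;z_1)$ (and symmetrically for modality~2), because $\xi$ depends jointly on $\zeta_1$ and $\zeta_2$. The required conditional independence $\zeta_2\perp z_1\mid \zeta_1$ must be read off the joint law induced by OMF: since $\zeta_2$ is sampled from $p(\zeta_2|z_2)$ with noise independent of branch~1, and the two encoder branches operate on their own inputs, the Markov chain $z_1\to(\zeta_1,\zeta_2)\to\xi$ factorizes so that conditioning on $\zeta_1$ leaves no $z_1$-information in $\zeta_2$. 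Once this observation is formalized via \Cref{properties 1}\textbf{iv}, the remainder reduces to standard VIB algebra.
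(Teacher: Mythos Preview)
Your proposal is correct and follows essentially the same route as the paper's proof: lower-bound $I(\xi;y)$ via a variational decoder $q(y|\xi)$ (dropping $H(y)$), upper-bound each $I(\xi;z_i)$ by $I(\zeta_i;z_i)$ and then by $\mathbb{E}\,KL(p(\zeta_i|z_i)\,\|\,\mathcal{N}(0,I))$ via a variational prior, and finish with empirical averages plus reparameterization. The only cosmetic difference is that the paper inserts the intermediate inequality $I(\xi;z_1)\le I(\zeta_1,\zeta_2;z_1)$ (from $\xi$ being a function of $(\zeta_1,\zeta_2)$) and then uses the chain rule $I(\zeta_1,\zeta_2;z_1)=I(\zeta_1;z_1)+I(\zeta_2;z_1\mid\zeta_1)$ together with the ``Markov property $\zeta_2\perp z_1$'' to kill the second term, whereas you package the same step as a single DPI on $z_1\to\zeta_1\to\xi$; your explicit identification of the needed conditional independence $\zeta_2\perp z_1\mid\zeta_1$ is in fact a bit more precise than the paper's marginal-independence annotation.
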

\begin{proof}
For $I(\xi;y)$, we have:
\begin{equation}
\begin{split}
I(\xi;y) &= \int dy d\xi p(y,\xi) log\, \frac{p(y,\xi)}{p(y)p(\xi)}\\
&= \int dy d\xi p(y,\xi) log\, \frac{p(y|\xi)}{p(y)}
\end{split}
\end{equation}
Let $q(y|\xi)$ be a variational approximation to $p(y|\xi)$, and we have:
\begin{equation}\label{eq:KL}
    KL[p(y|\xi)||q(y|\xi)]\geq 0 \Rightarrow \int dyp(y|\xi)\log p(y|\xi)\geq \int dyp(y|\xi)\log q(y|\xi)
\end{equation}
Based on the above inequality, we have \cite{alemi2017deep}: 
\begin{equation}
\begin{split}
I(\xi;y) &\geq \int dy d\xi p(y,\xi) log\, \frac{q(y|\xi)}{p(y)}\\
&= \int dy d\xi p(y,\xi) log\, q(y|\xi) - \int dy d\xi p(y,\xi) log\, p(y)\\
&= \int dy d\xi p(y,\xi) log\, q(y|\xi) + H(Y)\\
\end{split}
\end{equation}
$H(Y)$ can be ignored as it is fixed during training.  
Therefore:
\begin{equation}\label{ineq:1}
\begin{split}
I(\xi;y) &\geq \int dy d\xi p(y,\xi) log\, q(y|\xi)\\
&=\int dy d\xi d\zeta_1 d\zeta_2 dz_1dz_2p(z_1,z_2,y,\zeta_1,\zeta_2,\xi) log\, q(y|\xi)\\
\end{split}
\end{equation}
Furthermore, because $\xi$ is a function of $\zeta_1$ and $\zeta_2$ (i.e., $\xi=CAN(\zeta_1,\zeta_2)$), we have $I(\xi;z_1) \leq I(\zeta_1,\zeta_2;z_1)$ and $I(\xi;z_2) \leq I(\zeta_1,\zeta_2;z_2)$. Using the Markov property, we have $\zeta_1\perp z_2$ and $\zeta_2 \perp z_1$, which leads to:
\begin{equation}
\begin{split}
I(\xi;z_1) \leq I(\zeta_1,\zeta_2;z_1) = I(\zeta_1;z_1) + \overbrace{I(\zeta_2;z_1|\zeta_1)}^{\zeta_2\perp z_1} = I(\zeta_1;z_1)
\end{split}
\end{equation}
Similarly, $I(\xi;z_2)\leq I(\zeta_2;z_2)$. Therefore:  
\begin{equation}
\begin{split}
I(\xi;z_i) \leq I(\zeta_i;z_i) &=\int d\zeta_i dz_i p(\zeta_i,z_i) log\, \frac{p(\zeta_i|z_i)}{p(\zeta_i)}, \forall i\in \{1,2\}
\end{split}
\end{equation}
Let $r(\zeta_i)\sim \mathcal{N}(0, I)$ be a variational approximation to $p(\zeta_i)$, we have:
\begin{equation}\label{ineq:2}
\begin{split}
I(\xi;z_i) & \leq I(\zeta_i;z_i) = \int d\zeta_i dz_i p(\zeta_i,z_i) log\, p(\zeta_i|z_i)-\int p(\zeta_i)\log p(\zeta_i)d\zeta_i \\
&\leq \int d\zeta_i dz_i p(\zeta_i,z_i) log\, p(\zeta_i|z_i)-\int p(\zeta_i)\log r(\zeta_i)d\zeta_i\\
&= \int d\zeta_i dz_i p(\zeta_i,z_i) log\, \frac{p(\zeta_i|z_i)}{\mathcal{N}(0, I)}, \forall i \in \{1,2\}.
\end{split}
\end{equation}
Put \Cref{ineq:1} and \Cref{ineq:2} together, we have:
\begin{equation}
\begin{split}
L=&-I(\xi;y)+\beta \Bigl(I(\xi;z_1)+r I(\xi;z_2)\Bigl) \\ 
&\leq -\int dydz_1 dz_2p(y,z_1,z_2) \int d\xi d\zeta_1 d\zeta_2 p(\xi|\zeta_1,\zeta_2)p(\zeta_1|z_1)p(\zeta_2|z_2) log\, q(y|\xi) \\&+ \beta \Bigl(\int dz_1 p(z_1) \int d\zeta_1  p(\zeta_1|z_1) log\, \frac{p(\zeta_1|z_1)}{\mathcal{N}(0, I)} + r \int dz_2 p(z_2)\int d\zeta_2  p(\zeta_2|z_2) log\, \frac{p(\zeta_2|z_2)}{\mathcal{N}(0, I)}\Bigl) 
\end{split}
\end{equation}
Note that $p(z_1, z_2, y), p(z_1)$, and $p(z_2)$ can be approximated using the empirical data distribution \cite{alemi2017deep, wang2019deep},
which leads to the objective function:
\begin{equation}
\begin{split}
L &\approx \frac{1}{N} \sum_{n=1}^{N} \Bigl[ - \int d\xi d\zeta_1 d\zeta_2 \, p(\xi^n|\zeta_1^n,\zeta_2^n)p(\zeta_1^n|z_1^n)p(\zeta_2^n|z_2^n) log \, q(y^n|\xi^n) \\&+ \beta\Bigl(\int d\zeta_1 p(\zeta_1^n|z_1^n)log\, \frac{p(\zeta_1^n|z_1^n)}{\mathcal{N}(0, I)} + r \int d\zeta_2 p(\zeta_2^n|z_2^n)log\, \frac{p(\zeta_2^n|z_2^n)}{\mathcal{N}(0, I)}\Bigl) \Bigl]
\end{split}
\end{equation}
Given $\zeta_i=\mu_i+\Sigma_i\times \epsilon_i$ in \Cref{eq:reparam}, we have:
\begin{equation}
\begin{split}
\mathcal{L}  
&=\frac{1}{N} \sum_{n=1}^{N} \mathbb{E}_{\epsilon_1}\mathbb{E}_{\epsilon_2} \left[ -\log q(y^n|\xi^n) \right]
    + \beta \Bigl( KL \left[ p(\zeta_1^n|z_1^n) || \mathcal{N}(0, I) \right]
    + r \, KL \left[ p(\zeta_2^n|z_2^n) || \mathcal{N}(0, I) \right] \Bigl)\\
&=L_{OMF}
\end{split}
\end{equation}\par
This completes the proof.
\end{proof}

\begin{proposition}[\textbf{Proposition \ref{Lemma 2} restated}]\label{Lemma 2-repeat}
\Cref{equ:r-2} satisfies \Cref{eq:ratio-2}, thus providing an explicit formula for computing $r$ during training.  
\end{proposition}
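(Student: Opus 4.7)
The plan is to establish the proposition in two steps: first, identify each KL term in \Cref{equ:r-2} with the conditional mutual information $I(y;v_i|\xi)$ up to the variational approximation induced by the trained prediction heads; second, verify that the outer transformation $1-\tanh(\ln(\cdot))$ yields the required monotone proportionality together with the boundedness $r\in(0,2)$.

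For the first step, I would begin from the sufficiency condition $I(z_i;y)=I(v_i;y)$ enforced during warm-up training. Combined with the data-processing chain $v_i\to z_i\to\xi$, this gives $I(y;v_i|\xi)=I(y;z_i|\xi)$. Expanding the latter in its KL form,
\begin{equation*}
I(y;z_i|\xi) = \mathbb{E}_{\xi,z_i}\bigl[KL\bigl(p(y|\xi,z_i)\,\|\,p(y|\xi)\bigr)\bigr].
\end{equation*}
Since $Dec_i$ is trained so that $p(\hat{y}_i|\xi,z_i)$ approximates $p(y|\xi,z_i)$, and $\widehat{Dec}$ so that $p(\hat{y}|\xi)$ approximates $p(y|\xi)$, substituting these variational surrogates and replacing the expectation over $(\xi,z_i)$ with its Monte-Carlo estimate (empirical mean over $n$ plus the stochastic expectations over $\epsilon_1,\epsilon_2$ from the reparameterization trick in \Cref{eq:reparam}) reproduces exactly the numerator and denominator appearing inside the ratio in \Cref{equ:r-2}, now interpreted as empirical estimators of $I(y;v_2|\xi)$ and $I(y;v_1|\xi)$ respectively.

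For the second step, set $R:=I(y;v_1|\xi)/I(y;v_2|\xi)$. Under the identification from step one, the argument of $\tanh$ in \Cref{equ:r-2} equals $\ln(1/R)=-\ln R$, so by the oddness of $\tanh$ the formula collapses to $r = 1+\tanh(\ln R)$. Because both $x\mapsto\ln x$ on $(0,\infty)$ and $x\mapsto\tanh x$ on $\mathbb{R}$ are strictly increasing, $r$ is a strictly increasing function of $R$; the limits $R\to 0^{+}$ and $R\to\infty$ give $r\to 0^{+}$ and $r\to 2^{-}$, while $R=1$ gives $r=1$. This simultaneously yields $r\in(0,2)$ with $u=2$ as in \Cref{eq:ratio-2}, and the monotone proportionality $r \propto R$ in the sense of a strictly increasing, bounded reparameterization of the mutual-information ratio; in particular a larger share of remaining task-relevant information in $v_2$ relative to $v_1$ (small $R$) drives $r$ toward $0$, downweighting the regularization on $v_2$, exactly as required by the informal justification preceding \Cref{eq:ratio}.

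The main obstacle is the first step: the passage from the true quantities $I(y;v_i|\xi)$ to the empirical KL terms evaluated on the trained networks depends on the quality of the variational approximations $p(\hat{y}_i|\xi,z_i)\approx p(y|\xi,z_i)$ and $p(\hat{y}|\xi)\approx p(y|\xi)$, which should be controlled via an inequality analogous to \Cref{eq:KL}, so that the residual gap vanishes as the prediction heads converge. Once this identification is granted, the second step is purely analytic and reduces to checking elementary monotonicity and limiting behavior of $R\mapsto 1-\tanh(\ln(1/R))$.
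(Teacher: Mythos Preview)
Your proposal is correct and follows essentially the same route as the paper: both arguments reduce $I(y;v_i|\xi)$ to $I(y;z_i|\xi)$ via sufficiency of $z_i$, rewrite the latter as $\mathbb{E}_{\xi,z_i}\bigl[KL(p(y|\xi,z_i)\,\|\,p(y|\xi))\bigr]$, replace the conditionals by the trained prediction-head outputs and the expectation by its Monte-Carlo estimate, and then conclude that $r=1-\tanh(\ln(\cdot))$ is a bounded monotone function of the ratio. Your invocation of the standard conditional-MI/KL identity is cleaner than the paper's longer algebraic derivation of the same fact, and your explicit monotonicity/limit analysis in step two supplies what the paper leaves as a bare ``$\propto$''; otherwise the arguments coincide.
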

\begin{proof} 
Firstly, $z_1$ and $z_2$ are sufficient encodings of modalities $v_1$ and $v_2$ for $y$, respectively. 
Let $\bar{v}_i$ represent the superfluous information in $v_i$ that is not encoded in $z_i$. Then, we have:
\begin{equation}
\begin{split}
    I(y;v_i|\xi) &= I(y;z_i,\bar{v}_i|\xi) \\ 
    &= I(y;z_i|\xi) + \underbrace{I(y;\bar{v}_i|z_i,\xi)}_{=0\ \because\ F(y) \cap F(\bar{v}_i) = \emptyset}\\
    &= I(y;z_i|\xi), \forall i\in \{1,2\}.
\end{split} 
\end{equation}
\par
Let $z_4 = \{z_1,\xi\}$ and $z_5 = \{z_2,\xi\}$, then we have:
\begin{equation}
\begin{split}
    I(y;v_1|\xi) &= I(y;z_1|\xi) = I(z_1,\xi;y) - I(\xi;y) = I(z_4;y) - I(\xi;y),\\
    I(y;v_2|\xi) &= I(y;z_2|\xi) = I(z_2,\xi;y) - I(\xi;y) = I(z_5;y) - I(\xi;y).
\end{split}
\end{equation}
Then $I(y;z_1|\xi)$ can be expressed as:
\begin{equation}
\begin{split}
I(y;z_1|\xi) &=I(z_4;y)-I(\xi;y) \\&= H(y)-H(y|z_4) - H(y) + H(y|\xi)\\ 
&=H(y|\xi) - H(y|z_4)\\
&=-\int p(\xi)d\xi \int p(y|\xi) log\, p(y|\xi)dy+\int p(z_4)dz_4 \int p(y|z_4) log\, p(y|z_4)dy\\ 
&=-\iint p(\xi)p(y|\xi) log\, [p(y|z_4)\frac{p(y|\xi)}{p(y|z_4)}] d\xi dy\\& + \iint p(z_4)p(y|z_4) log\, [p(y|\xi)\frac{p(y|z_4)}{p(y|\xi)}] dz_4dy\\
&=-\iint p(\xi)p(y|\xi) log\, \frac{p(y|\xi)}{p(y|z_4)} d\xi dy -\iint p(\xi)p(y|\xi) log\, p(y|z_4) d\xi dy\\& + \iint p(z_4)p(y|z_4) log\, \frac{p(y|z_4)}{p(y|\xi)} dz_4dy  + \iint p(z_4)p(y|z_4) log\, p(y|\xi) dz_4dy\\
&=-\int p(\xi)KL(p(y|\xi)||p(y|z_4))d\xi - \int p(y) log\, p(y|z_4)dy\\& +\int p(z_4)KL(p(y|z_4)||p(y|\xi))dz_4 + \int p(y) log\, p(y|\xi)dy\\
&= \int p(z_4)KL(p(y|z_4)||p(y|\xi))dz_4 + \int p(y) log\, \frac{p(y|\xi)}{p(y|z_4)}dy\\& -\int p(\xi)KL(p(y|\xi)||p(y|z_4))d\xi\\ 
&= \int p(z_4)KL(p(y|z_4)||p(y|\xi))dz_4 + \int p(\xi)p(y|\xi) log\, \frac{p(y|\xi)}{p(y|z_4)}dyd\xi \\& -\int p(\xi)KL(p(y|\xi)||p(y|z_4))d\xi\\
&= \int p(z_4)KL(p(y|z_4)||p(y|\xi))dz_4 + \int p(\xi)KL(p(y|\xi)||p(y|z_4))d\xi \\& -\int p(\xi)KL(p(y|\xi)||p(y|z_4))d\xi\\
&= \int p(z_4)KL(p(y|z_4)||p(y|\xi))dz_4 \\
&=\mathbb{E}_{z_4}[KL(p(y|z_4)||p(y|\xi))]\\
\end{split}
\end{equation}\par

Similarly, $I(y;z_2|\xi)=\mathbb{E}_{z_5}[KL(p(y|z_5)||p(y|\xi))]$, and $\frac{I(y;v_1|\xi)}{I(y;v_2|\xi)}$ can be calculated as:
\begin{equation}
\begin{split}
\frac{I(y;v_2|\xi)}{I(y;v_1|\xi)} &=\frac{\mathbb{E}_{z_5} [KL(p(y|z_5)\parallel p(y|\xi))]}{\mathbb{E}_{z_4} [KL(p(y|z_4)\parallel p(y|\xi))]}\\
&= \frac{1}{N} \sum_{n=1}^{N} \mathbb{E}_{\epsilon_1}\mathbb{E}_{\epsilon_2}\Bigl[\frac{KL(p(\hat{y}_2^n|\xi^n,z_2^n)||p(\hat{y}^n|\xi^n))}{KL(p(\hat{y}_1^n|\xi^n,z_1^n)||p(\hat{y}^n|\xi^n))}\Bigl]
\end{split}
\end{equation}
Finally, we have:
\begin{equation}
\begin{split}
    r&= 1 - tanh\Bigl(\ln \frac{1}{N} \sum_{n=1}^{N} \mathbb{E}_{\epsilon_1}\mathbb{E}_{\epsilon_2}\Bigl[\frac{KL(p(\hat{y}_2^n|\xi^n,z_2^n)||p(\hat{y}^n|\xi^n))}{KL(p(\hat{y}_1^n|\xi^n,z_1^n)||p(\hat{y}^n|\xi^n))}\Bigl]\Bigl)\\
    &=1-tanh(\ln \frac{I(y;v_2|\xi)}{I(y;v_1|\xi)})\propto\frac{I(y;v_1|\xi)}{I(y;v_2|\xi)}
\end{split}
\end{equation}
This completes the proof.
\end{proof}

\section{Proofs of \Cref{prop:inclusiveness} and \Cref{prop:exclusiveness}}\label{appendix:achievability}

As proposed in \Cref{sec:OMIB}, the objective function of MIB can be written as:
\begin{equation} \label{equ:IB alter-repeat}
\min_{\xi} \ell(\xi) = \min_{\xi} -I(\xi;y)+\beta (I(\xi;z_1)+r I(\xi;z_2))
\tag{17}
\end{equation} \par
Based on \Cref{Assumption 1.1} in \Cref{sec:Achieve OMIB}, we have:
 \begin{equation}
 \begin{split}
      &F(y)=\{a\}=\{a_0,a_1,a_2\},\\
      &F(v_1)=\{a_0,a_1,b_{1},b_{0}\},F(v_2)=\{a_0,a_2,b_{2},b_{0}\},\\ 
      &\{a_0\}\cap\{a_1\}=\emptyset,\{a_0\}\cap\{a_2\}=\emptyset, \{a_1\}\cap\{a_2\}=\emptyset,\\ 
      &\{b_{i}\}\cap\{a_0\}=\emptyset,\{b_{i}\}\cap\{a_1\}=\emptyset,\{b_{i}\}\cap\{a_2\}=\emptyset, \forall i \in \{0,1,2\}\\
      &I(y;v_1)=\{a\}\cap F(v_1)= \{a_0,a_1\},\\
      &I(y;v_2)=\{a\}\cap F(v_2)= \{a_0,a_2\}.\\
 \end{split}
 \nonumber 
 \end{equation}
\begin{definition}\label{Definition 1.2} 
The relative mutual information between encoding $z$ and task-relevant label $y$ is defined as the ratio of their mutual information to their total information:
\begin{equation}
    \widehat{I}(z;y)=\frac{I(z;y)}{F(z)\cup F(y)}=\frac{I(z;y)}{F(z,y)}=\frac{I(z;y)}{H(z)+H(y)-I(z;y)}
    \nonumber 
\end{equation}

\end{definition}
Compared to mutual information, relative mutual information more accurately reflects the amount of task-relevant information (i.e., $I(\xi;y)$) in total information (i.e., $F(\xi)\cup F(y)$), which aligns more with the objective of maximizing task-relevant information in MIB. Consequently, we replace $I(\xi;y) $ with $\widehat{I}(\xi;y)$ in \Cref{equ:IB alter-repeat} in the following analysis. 

\begin{lemma}[\textbf{Lemma \ref{prop:inclusiveness} restated}]\label{prop:inclusiveness-repeat}
Under \Cref{Assumption 1.1}, the objective function in \Cref{equ:IB alter-repeat} ensures:
\begin{equation}
    F(\xi)\supseteq\{a_0,a_1,a_2\},
\end{equation}
when $\beta \leq M_u$, where $M_u:=\frac{1}{(1+r) (H(v_1)+H(v_2)-I(v_1;v_2))}.$
\end{lemma}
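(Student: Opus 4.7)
The plan is a proof by contradiction, using the relative mutual information $\widehat{I}$ from \Cref{Definition 1.2} in place of $I(\xi;y)$ in \Cref{equ:IB alter-repeat} (as indicated in the paragraph preceding the lemma). Suppose $\xi^*$ is a minimizer of $\ell$ but $F(\xi^*) \not\supseteq \{a_0,a_1,a_2\}$, so that some task-relevant atom $a_k$ (with $k \in \{0,1,2\}$) is missing, i.e.\ $F(a_k) \cap F(\xi^*) = \emptyset$. I would construct an augmented candidate $\xi' := \xi^* \cup \{a_k\}$ in the Venn-style information-set calculus the paper uses, and show that $\ell(\xi') - \ell(\xi^*) \leq 0$ whenever $\beta \leq M_u$, contradicting the strict suboptimality required for $\xi^*$ to exclude $a_k$.

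First I would quantify the gain on the task-relevance side. Because $a_k \subseteq F(y)$ (by \Cref{Assumption 1.1}) and $F(a_k)$ is disjoint from $F(\xi^*)$, parts (v) and (vii) of \Cref{properties 1} give $H(\xi') = H(\xi^*) + H(a_k)$ and $I(\xi';y) = I(\xi^*;y) + H(a_k)$. Substituting into \Cref{Definition 1.2} and noting that the denominator $H(\xi^*) + H(y) - I(\xi^*;y)$ is left unchanged by the augmentation yields
\begin{equation*}
\widehat{I}(\xi';y) - \widehat{I}(\xi^*;y) \;=\; \frac{H(a_k)}{H(\xi^*) + H(y) - I(\xi^*;y)}.
\end{equation*}
Since $\xi^*$ is derived from $(v_1,v_2)$ and $y$ is determined by $\{a\} \subseteq F(v_1,v_2)$, both $F(\xi^*)$ and $F(y)$ are contained in $F(v_1,v_2)$, so by \Cref{properties 1}(vi) the denominator is at most $H(v_1)+H(v_2)-I(v_1;v_2)$. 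On the regularization side, the chain rule \Cref{properties 1}(ii) gives $I(\xi';z_i) - I(\xi^*;z_i) = I(a_k;z_i\mid\xi^*) \leq H(a_k)$ for each modality, so the regularizer increases by at most $\beta(1+r)H(a_k)$.

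Combining, I obtain
\begin{equation*}
\ell(\xi') - \ell(\xi^*) \;\leq\; H(a_k)\Bigl[\beta(1+r) \;-\; \frac{1}{H(v_1)+H(v_2)-I(v_1;v_2)}\Bigr],
\end{equation*}
which is non-positive exactly when $\beta \leq M_u$. Hence augmenting with any missing $a_k$ never strictly worsens the objective, and an optimizer can always be chosen with $F(\xi) \supseteq \{a_0,a_1,a_2\}$. The main obstacle I expect is the set-theoretic augmentation $\xi^* \mapsto \xi^*\cup\{a_k\}$: one has to show that such an augmented encoding is realizable in the class over which $\xi$ is optimized, and that the additive decompositions of $H$ and $I$ invoked above truly follow from the disjointness hypotheses via \Cref{properties 1}(iv, v, vii) rather than merely from Venn-diagram intuition. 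A secondary subtlety is the boundary $\beta = M_u$, where the inequality is non-strict; this is consistent with the lemma's ``$\supseteq$'' conclusion being about the existence, not uniqueness, of an inclusive optimum.
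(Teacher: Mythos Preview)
Your proposal follows the same core idea as the paper: augment $\xi$ with a missing task-relevant piece and show the objective decreases when $\beta\leq M_u$. The paper carries out the three cases $a_0,a_1,a_2$ separately---exploiting, e.g., that a piece of $a_1$ is disjoint from $F(v_2)$ so adding it leaves $I(\xi;v_2)$ unchanged---and then minimizes the three resulting thresholds via a helper lemma (\Cref{lem:2}) to land on $M_u$. Your uniform bound $I(a_k;z_i\mid\xi^*)\leq H(a_k)$ collapses this casework and reaches $M_u$ in one shot; the paper's case analysis is sharper for $a_1,a_2$ individually but ultimately bottlenecked by the $a_0$ case, so nothing is lost.

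One imprecision to fix: $F(\xi^*)\not\supseteq\{a_0,a_1,a_2\}$ does \emph{not} imply that some entire atom $a_k$ satisfies $F(a_k)\cap F(\xi^*)=\emptyset$; it only guarantees a nonempty piece $\check{\xi}_k\subseteq a_k$ is absent. The paper works explicitly with $\check{\xi}_k:=\bigl(\{a_0,a_1,a_2\}\setminus F(\xi)\bigr)\cap\{a_k\}$. Your argument goes through verbatim with $a_k$ replaced by $\check{\xi}_k$, since every property you invoke (disjointness from $\xi^*$, containment in $F(y)$ and hence in $F(v_1,v_2)$) holds for $\check{\xi}_k$ as well.
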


\begin{proof}
Let $\{\check{\xi}_1\}=(\{a_0,a_1,a_2\}/(\{a_0,a_1,a_2\}\cap F(\xi)))\cap \{a_1\}$ represent the task-relevant information in $a_1$ that is not included in $\xi$. It is obvious:
\begin{equation}    
    \begin{aligned}
        &\{\check{\xi}_1\} \subset F(y),\{\check{\xi}_1\} \cap F(\xi)=\emptyset,\\
        &\{\check{\xi}_1\} \cap \{a_0\}= \emptyset,\{\check{\xi}_1\}\cap \{a_2\}= \emptyset, \\ 
        &\{\check{\xi}_1\}\cap F(v_2)=\emptyset, \{\check{\xi}_1\}\cap F(z_2)=\emptyset. 
    \end{aligned}
    \label{eq:def_check_z1} 
\end{equation}

If $\{\check{\xi}_1\} \neq \emptyset$, let $\xi'=\{\xi, \check{\xi}_1\}$. Using properties in \Cref{proof:prop},  we have:
\begin{equation}
\begin{split}
    I(\xi;v_1) - I(\xi';v_1) &= I(\xi;v_1) - I(\xi, \check{\xi}_1;v_1) \\
    &= I(\xi;v_1) - I(\xi;v_1) - \overbrace{I(v_1;\check{\xi}_1|\xi)}^{\{\check{\xi}_1\}\cap F(\xi)=\emptyset\eqref{eq:def_check_z1}}\\
    &= - I(v_1;\check{\xi}_1) <0
   \end{split}
   \nonumber 
\end{equation}\par

\begin{equation}
\begin{split}
I(\xi; v_2) - I(\xi'; v_2) &= I(\xi; v_2) - I(\xi, \check{\xi}_1; v_2) \\
&= I(\xi; v_2) - I(\xi; v_2) - \overbrace{I(v_2; \check{\xi}_1 \mid \xi)}^{\substack{\because \{\check{\xi}_1\} \cap F(v_2) = \emptyset\eqref{eq:def_check_z1} \\ \therefore I(v_2; \check{\xi}_1 \mid \xi) = 0}} \\
&= 0
\end{split}
\nonumber
\end{equation}\par

\begin{equation}
    \begin{split}
\widehat{I}(\xi';y)-\widehat{I}(\xi;y)
&=\frac{I(\xi,\check{\xi}_1;y)}{F(\xi,\check{\xi}_1,y)}-\frac{I(\xi;y)}{F(\xi,y)}\\
&= \frac{I(\xi,\check{\xi}_1;y)}{F(\xi,\check{\xi}_1,y)} - \frac{I(\xi;y)}{\underbrace{F(\xi,y)\cup F(\check{\xi}_1)}_{=F(\xi,y)\ \text{as}\ \{\check{\xi}_1\}\subset F(y)\eqref{eq:def_check_z1}}}\\
&= \frac{\overbrace{I(y;\check{\xi}_1|\xi)}^{\{\check{\xi}_1\}\cap F(\xi)=\emptyset\ \eqref{eq:def_check_z1}}}{F(\xi,\check{\xi}_1,y)}\\ 
&=\frac{I(y;\check{\xi}_1)}{F(\xi,\check{\xi}_1,y)} >0\\
\end{split}
\nonumber 
\end{equation}
For $\ell(\xi) - \ell(\xi')$, we have:
\begin{equation}
\begin{split}
   \ell(\xi) - \ell(\xi') &= \widehat{I}(\xi';y) - \widehat{I}(\xi;y) + \beta( I(\xi;v_1) -  I(\xi';v_1) +r I(\xi;v_2) - rI(\xi';v_2))\\
   &= \frac{I(y;\check{\xi}_1)}{F(\xi,\check{\xi}_1,y)} - \beta I(v_1;\check{\xi}_1)\\
   \end{split}
   \nonumber 
\end{equation}\par
When $\beta<\frac{I(y;\check{\xi}_1)}{I(v_1;\check{\xi}_1)F(\xi,\check{\xi}_1,y)}$, $\ell(\xi) - \ell(\xi') >0$, so optimizing the loss function will drive $\xi$ toward $\xi'$ until $\{\check{\xi}_1\}=\emptyset$, namely $F(\xi)\supseteq\{a_1\}$. We further suppose $\{\check{\xi}_2\}=(\{a_0,a_1,a_2\}/(\{a_0,a_1,a_2\}\cap F(\xi)))\cap \{a_2\}$ represent the task-relevant information in $a_2$ that is not included in $\xi$. Similarly, if $\{\check{\xi}_2\}\neq \emptyset$ and $\beta <\frac{I(y;\check{\xi}_2)}{r I(v_2;\check{\xi}_2)F(\xi,\check{\xi}_2,y)}$, the optimization will update $\xi$ until $\{\check{\xi}_2\}=\emptyset$, namely $F(\xi)\supseteq\{a_2\}$. \par
Moreover, let $\{\check{\xi}_0\}=(\{a_0,a_1,a_2\}/(\{a_0,a_1,a_2\}\cap F(\xi)))\cap \{a_0\}$ represent the task-relevant information in $a_0$ that is not included in $\xi$. If $\{\check{\xi}_0\} \neq \emptyset$, let $\xi'=\{\xi, \check{\xi}_0\}$. Then we have:
\begin{equation}
\begin{split}
    I(\xi;v_i) - I(\xi';v_i) &= I(\xi;v_i) - I(\xi,\check{\xi}_0;v_i)\\   &= I(\xi;v_i) - I(\xi;v_i) - \overbrace{I(v_i;\check{\xi}_0|\xi)}^{\{\check{\xi}_0\}\cap F(\xi)=\emptyset}\\
    &= - I(v_i;\check{\xi}_0) <0, \forall i \in \{1,2\}.
   \end{split}
   \nonumber 
\end{equation}
\begin{equation}
    \begin{split}
\widehat{I}(\xi';y)-\widehat{I}(\xi;y)
&=\frac{I(\xi,\check{\xi}_0;y)}{F(\xi,\check{\xi}_0,y)}-\frac{I(\xi;y)}{F(\xi,y)}\\
&= \frac{I(\xi,\check{\xi}_0;y)}{F(\xi,\check{\xi}_0,y)} - \frac{I(\xi;y)}{\underbrace{F(\xi,y)\cup F(\check{\xi}_0)}_{=F(\xi,y)\ \text{as}\ \{\check{\xi}_0\}\subset F(y)}}\\
&= \frac{\overbrace{I(y;\check{\xi}_0|\xi)}^{\{\check{\xi}_0\}\cap F(\xi)=\emptyset}}{F(\xi,\check{\xi}_0,y)}\\
&=\frac{I(y;\check{\xi}_0)}{F(\xi,\check{\xi}_0,y)} >0\\ 
\end{split}
\nonumber 
\end{equation}\par
For $\ell(\xi) - \ell(\xi')$, we have:
\begin{equation}
\begin{split}
   \ell(\xi) - \ell(\xi') &= \widehat{I}(\xi';y) - \widehat{I}(\xi;y) + \beta(I(\xi;v_1) - I(\xi';v_1) + r I(\xi;v_2) - r I(\xi';v_2))\\
   &= \frac{I(y;\check{\xi}_0)}{F(\xi,\check{\xi}_0,y)} - \beta(I(v_1;\check{\xi}_0) + r I(v_2;\check{\xi}_0))\\
   \end{split}
   \nonumber 
\end{equation}\par
Therefore,  when $\beta<\frac{I(y;\check{\xi}_0)}{F(\xi,\check{\xi}_0,y)(I(v_1;\check{\xi}_0) + r I(v_2;\check{\xi}_0))}$, the optimization will update $\xi$ until $\{\check{\xi}_0\}=\emptyset$, namely$F(\xi)\supseteq\{a_0\}$ .\par
Put together, the optimization procedure ensures $F(\xi)\supseteq\{a_0,a_1,a_2\}$ when:
\begin{equation}\label{eq:UB}
\beta <UB_{\beta}= min\Bigl(\frac{I(y;\check{\xi}_1)}{I(v_1;\check{\xi}_1)F(\xi,\check{\xi}_1,y)},\frac{I(y;\check{\xi}_2)}{r I(v_2;\check{\xi}_2)F(\xi,\check{\xi}_2,y)},\frac{I(y;\check{\xi}_0)}{F(\xi,\check{\xi}_0,y)(I(v_1;\check{\xi}_0) + r I(v_2;\check{\xi}_0))}\Bigr).
\end{equation} 
Finally, we prove in Lemma \ref{lem:2} below that $M_u=\frac{1}{(1+r) (H(v_1)+H(v_2)-I(v_1;v_2))}$ is a lower bound of $UB_{\beta}$. 
When $\beta < M_u$, the optimization procedure guarantees  $F(\xi)\supseteq\{a_0,a_1,a_2\}$. This completes the proof.
\end{proof}

\begin{lemma}\label{lem:2}
$UB_{\beta}$ in \Cref{eq:UB} satisfies:
$UB_{\beta} > M_u$, where $M_u=\frac{1}{(1+r) (H(v_1)+H(v_2)-I(v_1;v_2))}$. \par
$H(\cdot)$ and $I(\cdot;\cdot)$ can be estimated using MINE \cite{belghazi2018mine} (see Appendix \ref{net:estimate}).
\end{lemma}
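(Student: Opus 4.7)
The plan is to show that each of the three fractions inside the $\min$ defining $UB_\beta$ is bounded below by $M_u$, after which the bound on $UB_\beta$ follows immediately. The key simplification is that $\check{\xi}_0, \check{\xi}_1, \check{\xi}_2$ are by construction subsets of $F(y)$ and of the relevant modalities: $\check{\xi}_1 \subseteq \{a_1\} \subseteq F(v_1)$, $\check{\xi}_2 \subseteq \{a_2\} \subseteq F(v_2)$, and $\check{\xi}_0 \subseteq \{a_0\} \subseteq F(v_1) \cap F(v_2)$. This set containment structure will let me collapse the mutual information terms to plain entropies.

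First I would apply Property~v from \Cref{properties 1} (if $F(x_2)\subseteq F(x_1)$ then $I(x_1;x_2)=H(x_2)$) to obtain $I(y;\check{\xi}_i)=H(\check{\xi}_i)$ for $i\in\{0,1,2\}$, $I(v_1;\check{\xi}_1)=H(\check{\xi}_1)$, $I(v_2;\check{\xi}_2)=H(\check{\xi}_2)$, and $I(v_1;\check{\xi}_0)=I(v_2;\check{\xi}_0)=H(\check{\xi}_0)$. Applying the same property to $F(\check{\xi}_i)\subseteq F(y)$, the joint information simplifies as $F(\xi,\check{\xi}_i,y)=F(\xi,y)$ for each $i$. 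Substituting these identities, the three terms inside the $\min$ reduce to
\begin{equation*}
\frac{1}{F(\xi,y)},\qquad \frac{1}{r\,F(\xi,y)},\qquad \frac{1}{(1+r)\,F(\xi,y)},
\end{equation*}
and the entropies $H(\check{\xi}_i)$ cancel cleanly in numerator and denominator.

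Next I would bound $F(\xi,y)$ from above. Since $y$ is determined by $\{a_0,a_1,a_2\}\subseteq F(v_1)\cup F(v_2)$ and $\xi$ is built from the two modalities through the encoders, VAEs, and cross-attention, all its informative content lies within $F(v_1)\cup F(v_2)$; hence $F(\xi,y)=F(\xi)\cup F(y)\subseteq F(v_1)\cup F(v_2)=H(v_1,v_2)$. Applying Property~vi gives $H(v_1,v_2)=H(v_1)+H(v_2)-I(v_1;v_2)$. Consequently each of the three reduced fractions above is at least $M_u=\tfrac{1}{(1+r)(H(v_1)+H(v_2)-I(v_1;v_2))}$, where the factor $(1+r)$ in $M_u$ absorbs the prefactor $1, r,$ or $1+r$ appearing in each term (using $r>0$ so $1,\,r<1+r$). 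Taking the minimum over the three terms yields $UB_\beta\ge M_u$.

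The main obstacle is really the bookkeeping: making sure the simplifications use the containments $F(\check{\xi}_i)\subseteq F(y)$ and $F(\check{\xi}_i)\subseteq F(v_j)$ rigorously (they rest on \Cref{Assumption 1.1} plus the disjointness of the $a$'s from the $b$'s), and then justifying $F(\xi)\subseteq F(v_1)\cup F(v_2)$ despite the Gaussian reparameterization noise injected inside the OMF block (which is independent of all task-relevant content and therefore does not enlarge the informative content of $\xi$ beyond $F(v_1,v_2)$). A secondary subtlety is the strict inequality in the lemma statement: for the third term, equality in $F(\xi,y)\le H(v_1,v_2)$ is attainable when $\xi$ still carries all superfluous information, so strictness there requires invoking $r>0$ through one of the first two terms actually attaining the $\min$, or equivalently noting that along the optimization trajectory at least one modality-specific component leaves the bound non-tight; I would flag this point and argue strictness from $r>0$ in the dominant term.
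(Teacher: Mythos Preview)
Your proposal is correct and follows essentially the same route as the paper: reduce each of the three fractions to $\tfrac{1}{F(\xi,y)}$, $\tfrac{1}{r\,F(\xi,y)}$, $\tfrac{1}{(1+r)F(\xi,y)}$ via Property~v and the containment $\{\check{\xi}_i\}\subseteq F(y)$, then bound $F(\xi,y)$ by $H(v_1,v_2)=H(v_1)+H(v_2)-I(v_1;v_2)$. The only organizational difference is that the paper first identifies the third term $\tfrac{1}{(1+r)F(\xi,y)}$ as the actual minimum (since $1,r<1+r$) and then bounds only that term by $M_u$, whereas you bound all three; the effect is identical.

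Your closing discussion of strictness is overworked and slightly off: the minimum is always achieved by the third term, not the first two, so you cannot rescue strictness by appealing to the $1$ or $r$ prefactors. The paper simply asserts the strict inclusion $F(\xi,y)\subset F(v_1,v_2)$ without further ado, and your worry about the reparameterization noise is not something the paper engages with at all.
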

\begin{proof}
As shown in \Cref{eq:UB}
\begin{equation}
\begin{split}
UB_{\beta}=min\Bigl(\frac{I(y;\check{\xi}_1)}{I(v_1;\check{\xi}_1)F(\xi,\check{\xi}_1,y)},\frac{I(y;\check{\xi}_2)}{r I(v_2;\check{\xi}_2)F(\xi,\check{\xi}_2,y)},\frac{I(y;\check{\xi}_0)}{F(\xi,\check{\xi}_0,y)(I(v_1;\check{\xi}_0) + r I(v_2;\check{\xi}_0))}\Bigr)
\end{split}
   \nonumber 
\end{equation}\par
$\because \{\check{\xi}_1\}\subseteq F(\xi,y)$, we have $F(\xi,\check{\xi}_1,y)= F(\xi,y)$ so that:
\begin{equation}
\begin{split}
   \frac{I(y;\check{\xi}_1)}{I(v_1;\check{\xi}_1)F(\xi,\check{\xi}_1,y)} &= \frac{I(y;\check{\xi}_1)}{I(v_1;\check{\xi}_1)F(\xi,y)}
   \end{split}
   \nonumber 
\end{equation}\par
$\because \{\check{\xi}_1\}\subseteq \{a_1\}, \{a_1\}\subseteq \{v_1\},\ \text{and}\ \{a_1\}\subseteq \{y\}$, $\therefore \{\check{\xi}_1\}\subseteq \{v_1\}\ \text{and}\ \{\check{\xi}_1\}\subseteq \{y\}$. Then, according to property \textbf{v} in \Cref{properties 1}, $I(y;\check{\xi}_1) = H(\check{\xi}_1)$ and $I(v_1;\check{\xi}_1) = H(\check{\xi}_1)$, which leads to:
\begin{equation}
\begin{split}
   \frac{I(y;\check{\xi}_1)}{I(v_1;\check{\xi}_1)F(\xi,y)} &= \frac{H(\check{\xi}_1)}{H(\check{\xi}_1)F(\xi,y)}\\
   &= \frac{1}{F(\xi,y)}
   \end{split}
   \nonumber 
\end{equation}\par
Similarly, $\frac{I(y;\check{\xi}_2)}{r I(v_2;\check{\xi}_2)F(\xi,\check{\xi}_2,y)}$ is simplify to $ \frac{1}{r F(\xi, y)}$. 
Moreover, $F(\xi,\check{\xi}_0,y)= F(\xi,y)$ since $\{\check{\xi}_0\}\subseteq F(\xi,y)$. Then, it follows that:
\begin{equation}
\begin{split}
   \frac{I(y;\check{\xi}_0)}{F(\xi,\check{\xi}_0,y)(I(v_1;\check{\xi}_0) + r I(v_2;\check{\xi}_0))} &= \frac{I(y;\check{\xi}_0)}{F(\xi,y)(I(v_1;\check{\xi}_0) + r I(v_2;\check{\xi}_0))} 
   \end{split}
   \nonumber 
\end{equation}\par

$\because \{\check{\xi}_0\}\subseteq \{a_0\}, \{a_0\}\subseteq \{v_1\}, \{a_0\}\subseteq\{v_2\},\ \text{and}\ \{a_0\}\subseteq\{y\}$, $\therefore \{\check{\xi}_0\}\subseteq \{v_1\}$, $\{\check{\xi}_0\}\subseteq \{v_2\}$, and $\{\check{\xi}_0\}\subseteq \{y\}$. Thus, by property \textbf{v} in \Cref{properties 1}, $I(y;\check{\xi}_0) = H(\check{\xi}_0)$, $I(v_1;\check{\xi}_0) = H(\check{\xi}_0)$, and $I(v_2;\check{\xi}_0) = H(\check{\xi}_0)$, which collectively lead to:
\begin{equation}
\begin{split}
   \frac{I(y;\check{\xi}_0)}{F(\xi,y)(I(v_1;\check{\xi}_0) + r I(v_2;\check{\xi}_0))}  &= \frac{H(\check{\xi}_0)}{F(\xi,y)(H(\check{\xi}_0) + r H(\check{\xi}_0))} \\
   &= \frac{1}{(1+r)F(\xi,y)}\\ 
   &< \min\Bigl(\frac{1}{F(\xi,y)},\frac{1}{r F(\xi,y)}\Bigr),\forall r>0\\
   &=\min\Bigl(\frac{I(y;\check{\xi}_1)}{I(v_1;\check{\xi}_1)F(\xi,\check{\xi}_1,y)} ,\frac{I(y;\check{\xi}_2)}{r I(v_2;\check{\xi}_2)F(\xi,\check{\xi}_2,y)}\Bigr).
   \end{split}
   \nonumber 
\end{equation}\par
Thus, we have:
\begin{equation}
\begin{split}
   UB_{\beta} &=\frac{I(y;\check{\xi}_0)}{F(\xi,y)(I(v_1;\check{\xi}_0) + r I(v_2;\check{\xi}_0))} \\
   &=\frac{1}{(1+r) F(\xi,y)} \\
   &> \frac{1}{(1+r)\underbrace{F(v_1,v_2)}_{\because F(\xi,y)\subset F(v_1,v_2)}}\\
   &=\frac{1}{(1+r) (H(v_1)+H(v_2)-I(v_1;v_2))}\\
   &=M_u
   \end{split}
   \nonumber 
\end{equation}\par 
This completes the proof.
\end{proof}
\begin{lemma}[\textbf{Lemma \ref{prop:exclusiveness} restated}]\label{prop:exclusiveness-repeat}
Under \Cref{Assumption 1.1}, the objective function in \Cref{equ:IB alter-2} is optimized when: 
\begin{equation}
F(\xi) \subseteq \{a_0,a_1,a_2\}\ 
\end{equation} 
\end{lemma}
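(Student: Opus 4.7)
The plan mirrors the contrapositive argument used for \Cref{prop:inclusiveness-repeat}. Whereas inclusiveness showed that omitting task-relevant information can be strictly improved upon, exclusiveness will show that retaining any superfluous information can be strictly improved upon. Concretely, I would assume $F(\xi)$ intersects at least one of $\{b_0\},\{b_1\},\{b_2\}$, construct a reduced $\xi'$ with $\ell(\xi') < \ell(\xi)$, and iterate to contradict optimality; this leaves $F(\xi)\subseteq\{a_0,a_1,a_2\}$.

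The case setup is symmetric to the inclusiveness proof. For each $i\in\{0,1,2\}$, I would define $\tilde{\xi}_i := F(\xi)\cap\{b_i\}$ as the superfluous information of type $b_i$ currently carried by $\xi$. If any $\tilde{\xi}_i\neq\emptyset$, set $\xi' = \xi\setminus \tilde{\xi}_i$ and compare $\ell(\xi)$ with $\ell(\xi')$ term by term, reusing the mutual-information identities in \Cref{properties 1} and the same $\widehat{I}$-substitution justified in \Cref{Definition 1.2} that the inclusiveness proof relies on.

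The three observations to record are: (i) since $F(\tilde{\xi}_i)\cap F(y)=\emptyset$ by \Cref{Assumption 1.1}, property \textbf{iv} gives $I(\xi;y)=I(\xi';y)$, and applying property \textbf{vii} to $F(\xi,y) = F(\xi',y)\cup F(\tilde{\xi}_i)$ yields $F(\xi,y) > F(\xi',y)$, hence $\widehat{I}(\xi';y) > \widehat{I}(\xi;y)$; (ii) the inclusion $\tilde{\xi}_i\subseteq F(v_j)$ holds exactly for $i\in\{0,j\}$, so properties \textbf{iv} and \textbf{v} combine to give $I(\xi;v_j)-I(\xi';v_j)=H(\tilde{\xi}_i)$ in that case and $0$ otherwise; (iii) consequently at least one of the penalty terms $\beta I(\xi;v_1)$ or $\beta r I(\xi;v_2)$ strictly decreases by a positive multiple of $H(\tilde{\xi}_i)$ while the sufficiency term $-\widehat{I}(\xi;y)$ also strictly decreases, so $\ell(\xi)-\ell(\xi') > 0$ whenever $\tilde{\xi}_i\neq\emptyset$. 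Applying this reduction successively to $i=1,2,0$ drives every $\tilde{\xi}_i$ to the empty set, yielding $F(\xi)\subseteq\{a_0,a_1,a_2\}$.

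The main obstacle I anticipate is the careful bookkeeping of which information-set intersections vanish for each choice of $i$, since properties \textbf{iv}--\textbf{vii} only apply once the relevant disjointness conditions are verified; the $i=0$ case additionally uses $\tilde{\xi}_0\subseteq F(v_1)\cap F(v_2)$, producing a double penalty drop of $\beta(1+r)H(\tilde{\xi}_0)$. Beyond that, the calculations themselves are one-line consequences of those properties. Unlike the inclusiveness direction, no upper bound on $\beta$ is required: any $\beta>0$ suffices, because the penalty reduction is strictly positive and there is nothing to offset it on the sufficiency side, so the objective is strictly improved at every step of the reduction.
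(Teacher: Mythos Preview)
Your proposal is correct and takes essentially the same approach as the paper. The only cosmetic difference is framing: the paper lets $\hat{z}_i$ denote superfluous information \emph{not} in $\xi$, sets $\ddot{\xi}=\{\xi,\hat{z}_i\}$, and shows $\ell(\ddot{\xi})>\ell(\xi)$, whereas you take $\tilde{\xi}_i\subseteq F(\xi)$ and show $\ell(\xi\setminus\tilde{\xi}_i)<\ell(\xi)$; these are the same comparison with the roles of $\xi$ and the perturbed encoding swapped, and both correctly observe that no bound on $\beta$ is needed since the sufficiency and penalty terms move in the same direction.
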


\begin{proof}
Let $\hat{z}_1$ represent superfluous information that is specific to $v_1$ and not incorporated into $\xi$. Then, we have: 

\begin{equation}
    \begin{aligned}
        & \hat{z}_1\notin \{a_0,a_1,a_2\},\{\hat{z}_1\}\subset F(v_1), I(\hat{z}_1;v_1)>0,\\ 
        &I(\hat{z}_1;y)=0, \{\xi\}\cap \{\hat{z}_1\}=\emptyset, \{v_2\}\cap \{\hat{z}_1\}=\emptyset.
    \end{aligned}
\end{equation}

Let $\ddot{\xi}=\{\xi,\hat{z}_1\}$. The objective function becomes:

\begin{equation}
\begin{split}
   \ell(\ddot{\xi}) &= -\widehat{I}(\ddot{\xi};y)+\beta(I(\ddot{\xi};v_1)+r I(\ddot{\xi};v_2))\\ 
   &=-\widehat{I}(\xi,\hat{z}_1;y)+\beta( I(\xi,\hat{z}_1;v_1)+r I(\xi,\hat{z}_1;v_2))
   \end{split}
   \end{equation}\par
Then we have the following equations:
\begin{equation}
    \begin{split}
        I(\xi,\hat{z}_1;v_1)-I(\xi;v_1)&=\overbrace{I(v_1;\hat{z}_1|\xi)}^{\{\hat{z}_1\}\cap \{\xi\}=\emptyset}\\ 
        &=I(v_1;\hat{z}_1) >0
    \end{split}
\end{equation} 
\begin{equation}
    \begin{split}
        I(\xi,\hat{z}_1;v_2)-I(\xi;v_2)&=\overbrace{I(v_2;\hat{z}_1|\xi)}^{\{\hat{z}_1\}\cap \{v_2\}=\emptyset}\\ 
        &=0
    \end{split}
\end{equation} 
\begin{equation}
    \begin{split}
        \widehat{I}(\xi;y)-\widehat{I}(\xi,\hat{z}_1;y)&=\frac{I(\xi;y)}{F(\xi,y)}-\frac{I(\hat{z}_1,\xi;y)}{F(\hat{z}_1,\xi,y)}\\ 
        &=\frac{I(\xi;y)+\overbrace{I(y;\hat{z}_1|\xi)}^{=0\ \because I(\hat{z}_1;y)=0}}{F(\xi,y)}-\frac{I(\hat{z}_1,\xi;y)}{F(\hat{z}_1,\xi,y)}\\ 
        &=\frac{I(\hat{z}_1,\xi;y)}{F(\xi,y)}-\frac{I(\hat{z}_1,\xi;y)}{F(\hat{z}_1,\xi,y)}\\ 
        &=\frac{I(\hat{z}_1,\xi;y)}{F(\xi,y)}-\frac{I(\hat{z}_1,\xi;y)}{\underbrace{F(\hat{z}_1)+F(\xi,y)}_{\because \hat{z}_1\perp \{\xi,y\}}}\\ 
        &> 0
    \end{split}
\end{equation} \par

Put together, we have: 
\begin{equation}
    \begin{split}
        &\ell(\ddot{\xi})-\ell(\xi)\\
        &=(\widehat{I}(\xi;y)-\widehat{I}(\xi,\hat{z}_1;y))+\beta\Bigl((\widehat{I}(\hat{z}_1,\xi;v_1)- \widehat{I}(\xi;v_1))+r(\widehat{I}(\hat{z}_1,\xi;v_2)- \widehat{I}(\xi;v_2))\Bigr)\\
        &> 0,\ \text{if}\ \{\hat{z}_1\}\neq \emptyset.
    \end{split}
\end{equation}
For superfluous information $\hat{z}_2$ specific to $v_2$, we arrive at the same conclusion.
Finally, let $\hat{z}_0$ represent superfluous information that is shared by the two modalities and not encoded in $\xi$. Then, we have: 
\begin{equation}
    \begin{aligned}
        & \hat{z}_0 \notin \{a_0,a_1,a_2\},\{\hat{z}_0\}\subset F(v_1),\{\hat{z}_0\}\subset F(v_2),\\
        & I(\hat{z}_0;v_1)>0, I(\hat{z}_0;v_2)>0,\\ 
        &I(\hat{z}_0;y)=0, \{\xi\}\cap \{\hat{z}_0\}=\emptyset
    \end{aligned}
\end{equation}

Let $\ddot{\xi}=\{\xi,\hat{z}_0\}$. The objective function becomes:
\begin{equation}
\begin{split}
   \ell(\ddot{\xi}) &= -\widehat{I}(\ddot{\xi};y)+\beta(I(\ddot{\xi};v_1)+r I(\ddot{\xi};v_2))\\ 
   &=-\widehat{I}(\xi,\hat{z}_0;y)+\beta( I(\xi,\hat{z}_0;v_1)+r I(\xi,\hat{z}_0;v_2))
   \end{split}
   \end{equation}\par
Then we have the following equations:
\begin{equation}
    \begin{split}
        \widehat{I}(\xi;y)-\widehat{I}(\xi,\hat{z}_0;y)&=\frac{I(\xi;y)}{F(\xi,y)}-\frac{I(\hat{z}_0,\xi;y)}{F(\hat{z}_0,\xi,y)}\\ 
        &=\frac{I(\xi;y)+\overbrace{I(y;\hat{z}_0|\xi)}^{=0\ \because I(\hat{z}_0;y)=0}}{F(\xi,y)}-\frac{I(\hat{z}_0,\xi;y)}{F(\hat{z}_0,\xi,y)}\\ 
        &=\frac{I(\hat{z}_0,\xi;y)}{F(\xi,y)}-\frac{I(\hat{z}_0,\xi;y)}{F(\hat{z}_0,\xi,y)}\\ 
        &=\frac{I(\hat{z}_0,\xi;y)}{F(\xi,y)}-\frac{I(\hat{z}_0,\xi;y)}{\underbrace{F(\hat{z}_0)+F(\xi,y)}_{\because \hat{z}_0\perp \{\xi,y\}}}\\ 
        &> 0
    \end{split}
\end{equation} \par
\begin{equation}
    \begin{split}
        I(\xi,\hat{z}_0;v_1)-I(\xi;v_1)&=\overbrace{I(v_1;\hat{z}_0|\xi)}^{\{\hat{z}_0\}\cap \{\xi\}=\emptyset}\\ 
        &=I(v_1;\hat{z}_0) >0
    \end{split}
\end{equation} 
Similarly, we have $I(\xi,\hat{z}_0;v_2)-I(\xi;v_2) = I(v_2;\hat{z}_0) >0$. \par
Put together, we have: 
\begin{equation}
    \begin{split}
        &\ell(\ddot{\xi})-\ell(\xi)\\
        &=(\widehat{I}(\xi;y)-\widehat{I}(\xi,\hat{z}_0;y))+\beta\Bigl((\widehat{I}(\hat{z}_0,\xi;v_1)- \widehat{I}(\xi;v_1))+r(\widehat{I}(\hat{z}_0,\xi;v_2)- \widehat{I}(\xi;v_2))\Bigr)\\
        &> 0,\ \text{if}\ \{\hat{z}_0\}\neq \emptyset.
    \end{split}
\end{equation}
In a nutshell, the optimization procedure continues until $\xi$ does not encompass superfluous information, shared or modality specific, from $v_1$, $v_2$. That is, $F(\xi) \subseteq \{a_0,a_1,a_2\}$. This completes the proof.
\end{proof}

\section{Extension to Multiple Modalities}\label{proof:multi}

\begin{figure}[H]
\centering
\includegraphics[width=0.5\linewidth]{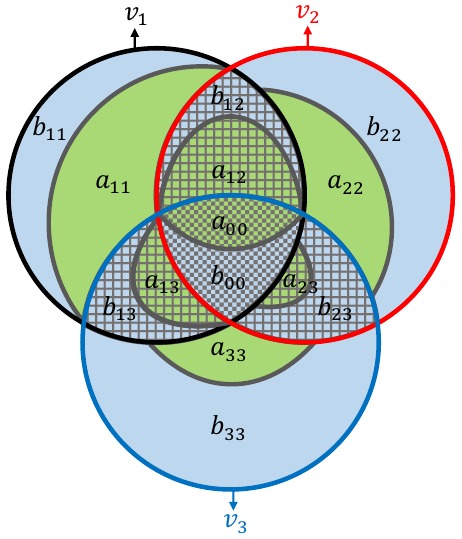}
\caption{Venn diagrams for three data modalities ($v_1$, $v_2$, and $v_3$). The gridded area represents consistent information, while the non-gridded area denotes modality-specific information. Task-relevant information is highlighted in green, whereas superfluous information is shown in blue.}
 \label{Fig: Information diagrams-2}
\end{figure}\par

The theoretical analysis of multiple modalities ($\geq 3$) is exemplified using three modalities, $v_1,v_2$, and $v_3$, which yet can be readily extended to more modalities. All mathematical notations remain consistent with those in \Cref{tab:notation} in \Cref{sec:notation}.  
\begin{assumption}\label{Assumption 3} 
Given three modalities, $v_1$, $v_2$, and $v_3$, the task-relevant information set $\{a\}$ consists of seven parts—$a_{00}, a_{11}, a_{22}, a_{33}, a_{12}, a_{13}, a_{23}$, as illustrated in \Cref{Fig: Information diagrams-2}. Specifically, $a_{00}$ is shared by all three modalities, while $a_{ij}$ is shared between modality pairs ($v_i,v_j$), $\forall i,j \in \{1,2,3\}$, $i<j$. Meanwhile, $a_{ii}$ is specific to $v_i$, $\forall i \in \{1,2,3\}$. The task-relevant labels $y$ are determined by $\{a\}$. On the other hand, superfluous information is represented by $\{b\}=\{b_{00},b_{11},b_{22},b_{33},b_{12},b_{13},b_{23}\}$. Here, $b_{00}$ is shared by all three modalities, while $b_{ij}$ is shared between modality pairs ($v_i,v_j$), $\forall i,j \in \{1,2,3\}$, $i<j$. Meanwhile, $b_{ii}$ is specific to $v_i$, $\forall i \in \{1,2,3\}$.
\end{assumption}
Based on the above assumption, the optimal MIB has the following definition:
\begin{definition}[\textbf{Optimal multimodal information bottleneck for three modalities}]\label{def:optimal MIB-2}
The optimal MIB for three modalities is defined as the MIB that encompasses all task-relevant information and free of superfluous information. Let $\xi_{opt-three}$ denote the optimal MIB, and it can be explicitly expressed as:
\begin{equation}
        F(\xi_{opt-three} )=\{a_{00}, a_{11}, a_{22}, a_{33}, a_{12}, a_{13}, a_{23}\}.
    \end{equation}
\end{definition}

In the following sections, we first demonstrate the method for achieving the optimal MIB, followed by a theoretical analysis to establish its theoretical foundation.\par
 
\subsection{Method}\label{multi method}
The warm-up and main training phases follow those for two modalities in \Cref{sec:method}, except for an additional modality $v_3$. The loss function $L_{TRB}$ remains the same for each modality as in \Cref{loss TRB}, while the loss function $L_{OMF}$ becomes: 
\begin{equation}\label{equ:IB loss imp 3}
\begin{split}
    L_{OMF} &= \frac{1}{N} \sum_{n=1}^{N} \mathbb{E}_{\epsilon_1}\mathbb{E}_{\epsilon_2}\mathbb{E}_{\epsilon_3} [-log\, q(y^n|\xi^n)] + \beta\Bigl(KL(p(\zeta_1^n|z_1^n)\parallel \mathcal{N}(0, I)\\& + r_1 KL(p(\zeta_2^n|z_2^n)\parallel \mathcal{N}(0, I)) + r_2 KL(p(\zeta_3^n|z_3^n)\parallel \mathcal{N}(0, I)\Bigl),
\end{split}
\end{equation}\par
where $\xi=CAN(\zeta_1,\zeta_2,\zeta_3,\theta_{CAN})$. Analogous to \Cref{equ:r} proved by \Cref{Lemma 2} in \Cref{sec:OMIB},  $r_1$ and $r_2$ are dynamic during training and explicitly calculated as:
\begin{small}
\begin{equation} \label{equ:r-3}
\begin{split}
    &r_1= 1 - tanh\Bigl(\ln \frac{1}{N} \sum_{n=1}^{N} \mathbb{E}_{\epsilon_1}\mathbb{E}_{\epsilon_2}\Bigl[\frac{KL(p(\hat{y}_2^n|\xi^n,z_2^n)||p(\hat{y}^n|\xi^n))}{KL(p(\hat{y}_1^n|\xi^n,z_1^n)||p(\hat{y}^n|\xi^n))}\Bigl]\Bigl),\\
    &r_2= 1 - tanh\Bigl(\ln \frac{1}{N} \sum_{n=1}^{N} \mathbb{E}_{\epsilon_1}\mathbb{E}_{\epsilon_3}\Bigl[\frac{KL(p(\hat{y}_3^n|\xi^n,z_3^n)||p(\hat{y}^n|\xi^n))}{KL(p(\hat{y}_1^n|\xi^n,z_1^n)||p(\hat{y}^n|\xi^n))}\Bigl]\Bigl).
\end{split}
\end{equation}
\end{small}\par
Moreover, as proposed in \Cref{lem:3} in \Cref{Multiple Theoretical Foundation}, when $\beta$ in \Cref{equ:IB loss imp 3} is upper-bounded by $M_u^2:=\frac{1}{(1+r_1+r_2) (\sum_{i=1}^{3} H(v_i)- \frac{2}{3}\sum_{1\leq i<j\leq3} I(v_i;v_j))}$,  the optimization of $L_{OMF}$ ensures the achievability of $\xi_{opt}$. 

\subsection{Theoretical Foundation}\label{Multiple Theoretical Foundation}
Under \Cref{Assumption 3}, we have:
\begin{equation}
 \begin{split}
      &F(y)=\{a\}=\{a_{00},a_{11},a_{22},a_{33},a_{12},a_{13},a_{23}\},\\
      &F(v_1)=\{a_{00},a_{11},a_{12},a_{13},b_{1},b_{12},b_{13},b_{0}\},\\
      &F(v_2)=\{a_{00},a_{22},a_{12},a_{23},b_{2},b_{12},b_{23},b_{0}\},\\
      &F(v_3)=\{a_{00},a_{33},a_{13},a_{23},b_{3},b_{13},b_{23},b_{0}\},\\
      &\{a_{ij}\}\cap\{a_{uv}\}=\emptyset,\, \forall a_{ij}, a_{uv}\in \{a\}, \text{where} \, a_{ij}\neq a_{uv}\\
      &\{b_{ij}\}\cap\{a_{uv}\}=\emptyset,\, \forall b_{ij} \in \{b\},\, a_{uv}\in \{a\}\\
      &I(y;v_1)=\{a\}\cap F(v_1)= \{a_{00},a_{11},a_{12},a_{13}\},\\
      &I(y;v_2)=\{a\}\cap F(v_2)= \{a_{00},a_{22},a_{12},a_{23}\}.\\
      &I(y;v_3)=\{a\}\cap F(v_3)= \{a_{00},a_{33},a_{13},a_{23}\}.\\
 \end{split}
 \nonumber 
 \end{equation}

Analogous to the analysis in \Cref{sec:OMIB}, the objective function for obtaining optimal MIB can be formulated as:
\begin{equation} \label{Multiple equ}
\max_{\xi} \ell(\xi) = \max_{\xi} I(\xi;y)-\beta (I(\xi;v_1)+r_1I(\xi;v_2)+r_2I(\xi;v_3)),
\end{equation}
which is equivalent to:
\begin{equation} \label{equ:IB multi alter}
\min_{\xi} \ell(\xi) = \min_{\xi} -I(\xi;y)+\beta (I(\xi;z_1)+r_1 I(\xi;z_2)+r_2 I(\xi;z_3)).
\end{equation}
The  \Cref{Lemma 1} in \Cref{sec:OMIB} can be trivially modified by adding the $r_2 I(\xi;z_3)$ term and applied here to establish $L_{OMF}$ in \cref{equ:IB loss imp 3} as a variational upper bound for  \Cref{equ:IB multi alter}.

\subsubsection{Achievability of optima information bottleneck for three modalities}

\begin{lemma}[\textbf{Inclusiveness of task-relevant information for three modalities}]
    \label{prop-3:inclusiveness}
Under \Cref{Assumption 3}, the objective function in \Cref{Multiple equ} ensures:
\begin{equation}
    F(\xi)\supseteq\{a_{00},a_{11},a_{22}, a_{33}, a_{12}, a_{13}, a_{23}\},
\end{equation}
when $\beta \leq M_u^2$, where $M_u^2:=\frac{1}{(1+r_1+r_2) (\sum_{i=1}^{3} H(v_i)- \frac{2}{3}\sum_{1\leq i<j\leq3} I(v_i;v_j))}$.

\end{lemma}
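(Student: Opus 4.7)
The plan is to mirror the strategy used in the two-modality proof of \Cref{prop:inclusiveness}, extending it to cover the richer partition of task-relevant information induced by three modalities. For any component $a_{ij}\in\{a_{00},a_{11},a_{22},a_{33},a_{12},a_{13},a_{23}\}$, I would introduce $\check{\xi}_{ij}:=(\{a\}\setminus(F(\xi)\cap\{a\}))\cap \{a_{ij}\}$, the portion of $a_{ij}$ not yet encoded in $\xi$, and set $\xi'=\{\xi,\check{\xi}_{ij}\}$. The argument reduces to showing $\ell(\xi)-\ell(\xi')>0$ whenever $\check{\xi}_{ij}\neq\emptyset$, which forces the optimizer to drive every $\check{\xi}_{ij}$ to $\emptyset$ and hence yields $F(\xi)\supseteq\{a_{00},a_{11},a_{22},a_{33},a_{12},a_{13},a_{23}\}$.

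The computation proceeds by cases. Since $\{\check{\xi}_{ij}\}\cap F(\xi)=\emptyset$, each difference $I(\xi';v_k)-I(\xi;v_k)$ collapses to $I(v_k;\check{\xi}_{ij})$, which by property \textbf{v} of \Cref{properties 1} equals $H(\check{\xi}_{ij})$ when $\{\check{\xi}_{ij}\}\subseteq F(v_k)$ and vanishes otherwise. Meanwhile, as in the two-modality case, $\widehat{I}(\xi';y)-\widehat{I}(\xi;y)=H(\check{\xi}_{ij})/F(\xi,y)$. Substituting into $\ell(\xi)-\ell(\xi')$ and collecting the regularization weights yields a per-case threshold of the form $\beta<\bigl(c_{ij}\,F(\xi,y)\bigr)^{-1}$, where $c_{ij}$ is the sum of the weights ($1$, $r_1$, or $r_2$) of the modalities that contain $a_{ij}$: specifically $c_{11}=1$, $c_{22}=r_1$, $c_{33}=r_2$ for the three modality-specific pieces, $c_{12}=1+r_1$, $c_{13}=1+r_2$, $c_{23}=r_1+r_2$ for the pairwise pieces, and $c_{00}=1+r_1+r_2$ for the fully shared piece.

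Taking the minimum over all seven cases, the binding constraint arises from $a_{00}$ and gives the sufficient condition $\beta<\bigl((1+r_1+r_2)\,F(\xi,y)\bigr)^{-1}$. The final step is to replace $F(\xi,y)$ by a data-computable quantity. Since $F(\xi,y)\subseteq F(v_1,v_2,v_3)$, it suffices to show $H(v_1,v_2,v_3)\le\sum_i H(v_i)-\tfrac{2}{3}\sum_{i<j}I(v_i;v_j)$; expanding via three-set inclusion--exclusion, this reduces to the inequality $3\,I(v_1;v_2;v_3)\le\sum_{i<j}I(v_i;v_j)$, which follows from $I(v_1;v_2;v_3)\le I(v_i;v_j)$ for every pair under the set-theoretic treatment of $F(\cdot)$ used throughout the paper. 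Chaining these bounds delivers the claimed threshold $M_u^2$.

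The main obstacle I anticipate is this final inclusion--exclusion step, which introduces the factor $\tfrac{2}{3}$ that is absent from the two-modality statement. One must justify dominating the triple-overlap term $I(v_1;v_2;v_3)$ by the pairwise mutual informations using the paper's set-theoretic convention, under which any pairwise intersection of information sets contains the triple intersection; without this non-negativity interpretation, Shannon's triple mutual information could be negative and the direction of the inequality could fail. The remaining seven-case bookkeeping is a direct, if tedious, extension of the two-modality argument, so attention will be needed mainly to keep the per-case weight sums straight and to verify that the tightest bound is indeed attained at $a_{00}$.
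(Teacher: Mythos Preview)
Your proposal is correct and follows essentially the same route as the paper: a seven-case analysis over the components $a_{ij}$, each yielding a threshold $\beta<\bigl(c_{ij}\,F(\xi,y)\bigr)^{-1}$ with the binding case at $c_{00}=1+r_1+r_2$, followed by the bound $F(\xi,y)\le H(v_1,v_2,v_3)$ and the inclusion--exclusion estimate driven by $I(v_1;v_2;v_3)\le I(v_i;v_j)$. Your parametrization by the weight sums $c_{ij}$ is a tidier presentation of exactly the same computation the paper carries out case by case, and your identification of the $\tfrac{2}{3}$ step as hinging on the set-theoretic (non-negative) reading of the triple interaction is precisely the point the paper uses.
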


\begin{proof}
We first analyze under which condition $\xi$ can include all the task-relevant information specific to single modality.  Let $\{\check{\xi}_1\}=(\{a_{00},a_{11},a_{22}, a_{33}, a_{12}, a_{13}, a_{23}\}/(\{a_{00},a_{11},a_{22}, a_{33}, a_{12}, a_{13}, a_{23}\}\cap I(\xi)))\cap \{a_{11}\}$ represent the task-relevant information in $a_{11}$ that is not included in $\xi$. It is obvious:
\begin{equation}
    \begin{aligned}
        &\{\check{\xi}_1\}\subset F(y),\{\check{\xi}_1\} \cap F(\xi)=\emptyset,\\ 
        &\{\check{\xi}_1\} \cap \{a_{ij}\} = \emptyset, \quad \forall a_{ij} \in \{a\}/ \{a_{11}\},\\
        &\{\check{\xi}_1\}\cap F(v_2)=\emptyset, \{\check{\xi}_1\}\cap F(z_2)=\emptyset, \\
        &\{\check{\xi}_1\}\cap F(v_3)=\emptyset, \{\check{\xi}_1\}\cap F(z_3)=\emptyset
    \end{aligned}
    \nonumber
\end{equation}
 
If $\{\check{\xi}_1\}\neq \emptyset$,  let $\xi'=\{\xi,\check{\xi}_1\}$ and we have:
\begin{equation}
\begin{split}
  \ell(\xi)&=-\widehat{I}(\xi;y)+\beta ( I(\xi;v_1)+r_1 I(\xi;v_2)+r_2 I(\xi;v_3))\\
  &=-\widehat{I}(\xi;y)+\beta\Bigl( I(\xi;v_1)+ r_1 (I(v_2;\xi)+\overbrace{I(v_2;\check{\xi}_1|\xi)}^{=0,\, \because \{\check{\xi}_1\} \cap F(v_2)=\emptyset})+r_2 (I(v_3;\xi)+\overbrace{I(v_3;\check{\xi}_1|\xi)}^{=0,\, \because \{\check{\xi}_1\} \cap F(v_3)=\emptyset})\Bigr)\\
  &=-\widehat{I}(\xi;y)+\beta( I(\xi;v_1) +r_1 I(\xi,\check{\xi}_1;v_2) +r_2 I(\xi,\check{\xi}_1;v_3)),
   \end{split}
   \nonumber 
\end{equation}\par
\begin{equation}
\begin{split}
  \ell(\xi)-\ell(\xi')&=-\widehat{I}(\xi;y)+\beta( I(\xi;v_1) +r_1 I(\xi,\check{\xi}_1;v_2) +r_2 I(\xi,\check{\xi}_1;v_3))\\& - \Bigl(-\widehat{I}(\xi,\check{\xi}_1;y)+\beta( I(\xi,\check{\xi}_1;v_1) +r_1 I(\xi,\check{\xi}_1;v_2) +r_2 I(\xi,\check{\xi}_1;v_3))\Bigr)\\
  &=\widehat{I}(\xi,\check{\xi}_1;y)-\widehat{I}(\xi;y) + \beta( I(\xi;v_1) - I(\xi,\check{\xi}_1;v_1))
   \end{split}
   \nonumber 
\end{equation}\par
Using properties in \Cref{proof:prop},  we have:
\begin{equation}
    \begin{split}
\widehat{I}(\xi, \check{\xi}_1;y) - \widehat{I}(\xi;y) 
&=\frac{I(\xi, \check{\xi}_1;y)}{F(\xi) \cup\underbrace{(F(\check{\xi}_1)\cup F(y))}_{=F(y)\ \text{as}\ \{\check{\xi}_1\}\subset F(y)}}-\frac{I(\xi;y)}{F(\xi) \cup F(y)}\\
&=\frac{I(\xi, \check{\xi}_1;y)}{F(\xi,y)}-\frac{I(\xi;y)}{F(\xi,y)}\\
&=\frac{\overbrace{I(y;\check{\xi}_1|\xi)}^{\{\check{\xi}_1\}\cap \{\xi\}=\emptyset}}{F(\xi,y)}\\
&=\frac{I(y;\check{\xi}_1)}{F(\xi,y)}
\end{split}
\nonumber 
\end{equation}\par
\begin{equation}
\begin{split} 
   I(\xi;v_1) - I(\xi,\check{\xi}_1;v_1)
   &=- \overbrace{I(v_1;\check{\xi}_1|\xi)}^{\{\check{\xi}_1\}\cap \{\xi\}=\emptyset} \\
   &= - I(v_1;\check{\xi}_1)
\end{split}
\end{equation}\par
Thus:
\begin{equation}
\begin{split} 
   \ell(\xi) - \ell(\xi') &= \widehat{I}(\xi, \check{\xi}_1;y)-\widehat{I}(\xi;y)+\beta I(\xi;v_1) -\beta I(\xi, \check{\xi}_1;v_1) \\
   &=\frac{I(y;\check{\xi}_1)}{F(\xi,y)} - \beta I(v_1;\check{\xi}_1)
\end{split}
\end{equation}\par
When $\beta < \frac{I(y;\check{\xi}_1)}{F(\xi,y)I(v_1;\check{\xi}_1)}$, $\ell(\xi) - \ell(\xi')>0$. Therefore, optimizing the loss function will drive $\xi$ toward $\xi'$ until $\{\check{\xi}_1\}=\emptyset$, such that $F(\xi) \supseteq \{a_{11}\}$. We further suppose $\{\check{\xi}_2\}=(\{a_{00},a_{11},a_{22},a_{33},a_{12},a_{13},a_{23}\}/(\{a_{00},a_{11},a_{22},a_{33},a_{12},a_{13},a_{23}\}\cap I(\xi)))\cap \{a_{22}\}$ represent the task-relevant information in $a_{22}$ that is not included in $\xi$, and $\{\check{\xi}_3\}=(\{a_{00},a_{11},a_{22},a_{33},a_{12},a_{13},a_{23}\}/(\{a_{00},a_{11},a_{22},a_{33},a_{12},a_{13},a_{23}\}\cap I(\xi)))\cap \{a_{33}\}$ represent the task-relevant information in $a_{33}$ that is not included in $\xi$. Similarly, if $\{\check{\xi}_2\}\neq \emptyset$ and $\beta <\frac{I(y;\check{\xi}_2)}{r_1 I(v_2;\check{\xi}_2)F(\xi,\check{\xi}_2,y)}$, the optimization will update $\xi$ until $\{\check{\xi}_2\}=\emptyset$, leading to $F(\xi)\supseteq\{a_{22}\}$; and if $\{\check{\xi}_3\}\neq \emptyset$ and $\beta <\frac{I(y;\check{\xi}_3)}{r_2 I(v_3;\check{\xi}_3)F(\xi,\check{\xi}_3,y)}$, the optimization will update $\xi$ until $\{\check{\xi}_3\}=\emptyset$, leading to $F(\xi)\supseteq\{a_{33}\}$.\par

We then analyze under which condition $\xi$ can include all the task-relevant information shared by two modalities. Let $\{\check{\xi}_{12}\}=(\{a_{00},a_{11},a_{22},a_{33},a_{12},a_{13},a_{23}\}/(\{a_{00},a_{11},a_{22},a_{33},a_{12},a_{13},a_{23}\}\cap I(\xi)))\cap \{a_{12}\}$ represent the task-relevant information in $a_{12}$ that is not included in $\xi$. It is obvious:
\begin{equation}
    \begin{aligned}
        &\{\check{\xi}_{12}\}\subset F(y),\{\check{\xi}_{12}\} \cap F(\xi)=\emptyset,\\ 
        &\{\check{\xi}_{12}\} \cap \{a_{ij}\} = \emptyset, \quad \forall a_{ij} \in \{a\} / \{a_{12}\},\\ 
        &\{\check{\xi}_{12}\}\cap F(v_3)= \emptyset, \{\check{\xi}_{12}\}\cap F(z_3) = \emptyset
    \end{aligned}
    \nonumber
\end{equation}
If $\{\check{\xi}_{12}\}\neq \emptyset$, let $\xi'=\{\xi,\check{\xi}_{12}\}$, then we have:
\begin{equation}
\begin{split}
  \ell(\xi)&=-\widehat{I}(\xi;y)+\beta( I(\xi;v_1)+r_1 I(\xi;v_2)+r_2 I(\xi;v_3))\\
  &=-\widehat{I}(\xi;y)+\beta\Bigl( I(\xi;v_1)+ r_1 I(v_2;\xi)+r_2 (I(v_3;\xi)+\overbrace{I(v_3;\check{\xi}_{12}|\xi)}^{=0,\, \because \{\check{\xi}_{12}\} \cap F(v_3)=\emptyset})\Bigr)\\
  &=-\widehat{I}(\xi;y)+\beta( I(\xi;v_1) +r_1 I(\xi;v_2) +r_2 I(\xi,\check{\xi}_{12};v_3))
   \end{split}
   \nonumber 
\end{equation}\par
Therefore, $\ell(\xi)-\ell(\xi')$ can be written as:
\begin{equation}
\begin{split}
  \ell(\xi)-\ell(\xi')&=-\widehat{I}(\xi;y)+\beta( I(\xi;v_1) +r_1 I(\xi;v_2) +r_2 I(\xi,\check{\xi}_{12};v_3))\\& - \Bigl(-\widehat{I}(\xi,\check{\xi}_{12};y)+\beta( I(\xi,\check{\xi}_{12};v_1) +r_1 I(\xi,\check{\xi}_{12};v_2) +r_2 I(\xi,\check{\xi}_{12};v_3))\Bigr)\\
  &=\widehat{I}(\xi,\check{\xi}_{12};y)-\widehat{I}(\xi;y) + \beta\Bigl( I(\xi;v_1) - I(\xi,\check{\xi}_{12};v_1) + r_1(I(\xi;v_2) - I(\xi,\check{\xi}_{12};v_2))\Bigr)
   \end{split}
   \nonumber 
\end{equation}\par
Using properties in \Cref{proof:prop},  we have:
\begin{equation}
    \begin{split}
\widehat{I}(\xi, \check{\xi}_{12};y) - \widehat{I}(\xi;y) 
&=\frac{I(\xi, \check{\xi}_{12};y)}{F(\xi) \cup\underbrace{(F(y)\cup F(\check{\xi}_{12}))}_{=F(y)\ \text{as}\ \{\check{\xi}_{12}\}\subset F(y)}}-\frac{I(\xi;y)}{F(\xi) \cup F(y)}\\
&=\frac{I(\xi, \check{\xi}_{12};y)}{F(\xi,y)}-\frac{I(\xi;y)}{F(\xi,y)}\\
&=\frac{\overbrace{I(y;\check{\xi}_{12}|\xi)}^{\{\check{\xi}_{12}\}\cap \{\xi\}=\emptyset}}{F(\xi,y)}\\
&=\frac{I(y;\check{\xi}_{12})}{F(\xi,y)} > 0
\end{split}
\nonumber 
\end{equation}\par
\begin{equation}
\begin{split} 
   I(\xi;v_1) - I(\xi,\check{\xi}_{12};v_1)
   &=- \overbrace{I(v_1;\check{\xi}_{12}|\xi)}^{\{\check{\xi}_{12}\}\cap \{\xi\}=\emptyset} \\
   &= - I(v_1;\check{\xi}_{12}) < 0
\end{split}
\end{equation}\par
Similarly, we obtain that $I(\xi;v_2) - I(\xi,\check{\xi}_{12};v_2) = - I(v_2;\check{\xi}_{12})$.\par
Thus:
\begin{equation}
\begin{split} 
   \ell(\xi) - \ell(\xi') &= \widehat{I}(\xi,\check{\xi}_{12};y)-\widehat{I}(\xi;y) + \beta\Bigl( I(\xi;v_1) - I(\xi,\check{\xi}_{12};v_1) + r_1(I(\xi;v_2) - I(\xi,\check{\xi}_{12};v_2))\Bigr) \\
   &=\frac{I(y;\check{\xi}_{12})}{F(\xi,y)} - \beta( I(v_1;\check{\xi}_{12}) + r_1 I(v_2;\check{\xi}_{12})) 
\end{split}
\end{equation}\par

When $\beta < \frac{I(y;\check{\xi}_{12})}{F(\xi,y)(I(v_1;\check{\xi}_{12}) + r_1 I(v_2;\check{\xi}_{12}))}$, $\ell(\xi) - \ell(\xi')>0$. Therefore, optimizing the loss function will drive $\xi$ towards $\xi'$ until $\{\check{\xi}_{12}\}=\emptyset$, such that $F(\xi) \supseteq \{a_{12}\}$. Similarly, suppose that $\{\check{\xi}_{13}\}=(\{a_{00},a_{11},a_{22},a_{33},a_{12},a_{13},a_{23}\}/(\{a_{00},a_{11},a_{22},a_{33},a_{12},a_{13},a_{23}\}\cap I(\xi)))\cap \{a_{13}\}$ represents the task-relevant information in $a_{13}$ that is not included in $\xi$; and $\{\check{\xi}_{23}\}=(\{a_{00},a_{11},a_{22},a_{33},a_{12},a_{13},a_{23}\}/(\{a_{00},a_{11},a_{22},a_{33},a_{12},a_{13},a_{23}\}\cap I(\xi)))\cap \{a_{23}\}$ represents the task-relevant information in $a_{23}$ that is not included in $\xi$. Following the above procedure, we conclude that if $\{\check{\xi}_{13}\}\neq \emptyset$ and $\beta < \frac{I(y;\check{\xi}_{13})}{F(\xi,y)(I(v_1;\check{\xi}_{13}) + r_2 I(v_2;\check{\xi}_{13}))}$, the optimization will update $\xi$ until $\{\check{\xi}_{13}\}=\emptyset$, leading to $F(\xi)\supseteq\{a_{13}\}$; if $\{\check{\xi}_{23}\}\neq \emptyset$ and $\beta < \frac{I(y;\check{\xi}_{23})}{F(\xi,y)(r_1 I(v_1;\check{\xi}_{23}) + r_2 I(v_2;\check{\xi}_{23}))}$, the optimization will update $\xi$ until $\{\check{\xi}_{23}\}=\emptyset$, leading to $F(\xi)\supseteq\{a_{23}\}$.\par

Finally, we analyze under which condition $\xi$ can include all the task-relevant information shared by the three modalities. Let $\{\check{\xi}_{0}\}=(\{a_{00},a_{11},a_{22}, a_{33}, a_{12}, a_{13}, a_{23}\}/(\{a_{00},a_{11},a_{22}, a_{33}, a_{12}, a_{13}, a_{23}\}\cap I(\xi)))\cap \{a_{00}\}$ represent the task-relevant information in $a_{00}$ that is not included in $\xi$. It is obvious:
\begin{equation}
    \begin{aligned}
        &\{\check{\xi}_{0}\}\subset F(y),\{\check{\xi}_{0}\} \cap F(\xi)=\emptyset,\\ 
        &\{\check{\xi}_{0}\} \cap \{a_{ij}\} = \emptyset, \quad \forall a_{ij} \in \{a\} / \{a_{00}\},\\
        &\{\check{\xi}_{0}\} \cap F(v_l) = \emptyset, \quad \forall l \in \{1,2,3\}\\
    \end{aligned}
    \nonumber
\end{equation}
If $\{\check{\xi}_{0}\}\neq \emptyset$, let $\xi'=\{\xi,\check{\xi}_{0}\}$, and $\ell(\xi)-\ell(\xi')$ can be written as:
\begin{equation}
\begin{split}
  \ell(\xi)-\ell(\xi')&=-\widehat{I}(\xi;y)+\beta( I(\xi;v_1) +r_1 I(\xi;v_2) +r_2 I(\xi;v_3))\\& - \Bigl(-\widehat{I}(\xi,\check{\xi}_{0};y)+\beta( I(\xi,\check{\xi}_{0};v_1) +r_1 I(\xi,\check{\xi}_{0};v_2) +r_2 I(\xi,\check{\xi}_{0};v_3))\Bigr)\\
  &=\widehat{I}(\xi,\check{\xi}_{0};y)-\widehat{I}(\xi;y) + \beta\Bigl(I(\xi;v_1) - I(\xi,\check{\xi}_{0};v_1) \\&+ r_1(I(\xi;v_2) - I(\xi,\check{\xi}_{0};v_2)) + r_2(I(\xi;v_3) - I(\xi,\check{\xi}_{0};v_3))\Bigr)
   \end{split}
   \nonumber 
\end{equation}\par
Using properties in \Cref{proof:prop},  we have:
\begin{equation}
    \begin{split}
\widehat{I}(\xi, \check{\xi}_{0};y) - \widehat{I}(\xi;y) 
&=\frac{I(\xi, \check{\xi}_{0};y)}{F(\xi) \cup\underbrace{(F(y)\cup F(\check{\xi}_{0}))}_{=F(y)\ \text{as}\ \{\check{\xi}_{0}\}\subset F(y)}}-\frac{I(\xi;y)}{F(\xi) \cup F(y)}\\
&=\frac{I(\xi, \check{\xi}_{0};y)}{F(\xi,y)}-\frac{I(\xi;y)}{F(\xi,y)}\\
&=\frac{\overbrace{I(y;\check{\xi}_{0}|\xi)}^{\{\check{\xi}_{0}\}\cap \{\xi\}=\emptyset}}{F(\xi,y)}\\
&=\frac{I(y;\check{\xi}_{0})}{F(\xi,y)} > 0
\end{split}
\nonumber 
\end{equation}\par
\begin{equation}
\begin{split} 
   I(\xi;v_1) - I(\xi,\check{\xi}_{0};v_1)
   &=- \overbrace{I(v_1;\check{\xi}_{0}|\xi)}^{\{\check{\xi}_{0}\}\cap \{\xi\}=\emptyset} \\
   &= - I(v_1;\check{\xi}_{0}) < 0
\end{split}
\end{equation}\par
Similarly, we obtain that $I(\xi;v_2) - I(\xi,\check{\xi}_{0};v_2) = - I(v_2;\check{\xi}_{0})$ and $I(\xi;v_3) - I(\xi,\check{\xi}_{0};v_3) = - I(v_3;\check{\xi}_{0})$.\par
Thus:
\begin{equation}
\begin{split} 
   \ell(\xi) - \ell(\xi') &= \widehat{I}(\xi,\check{\xi}_{0};y)-\widehat{I}(\xi;y) + \beta\Bigl( I(\xi;v_1) - I(\xi,\check{\xi}_{0};v_1)\\& + r_1(I(\xi;v_2) - I(\xi,\check{\xi}_{0};v_2)) + r_2(I(\xi;v_3) - I(\xi,\check{\xi}_{0};v_3))\Bigr) \\
   &=\frac{I(y;\check{\xi}_{0})}{F(\xi,y)} - \beta( I(v_1;\check{\xi}_{0}) + r_1 I(v_2;\check{\xi}_{0}) + r_2 I(v_3;\check{\xi}_{0}))
\end{split}
\end{equation}\par
When $\beta < \frac{I(y;\check{\xi}_{0})}{F(\xi,y)(I(v_1;\check{\xi}_{0}) + r_1 I(v_2;\check{\xi}_{0}) + r_2 I(v_3;\check{\xi}_{0}))}$, $\ell(\xi) - \ell(\xi')>0$. Therefore, optimizing the loss function will drive $\xi$ toward $\xi'$ until $\{\check{\xi}_{0}\}=\emptyset$, such that $F(\xi) \supseteq \{a_{00}\}$.

Put together, the optimization procedure ensures $F(\xi)\supseteq\{a_{00},a_{11},a_{22},a_{33},a_{12},a_{13},a_{23}\}$ when:
\begin{equation}\label{eq:UB2}
\beta <UB_{\beta}:= min(UB^{1}_{\beta}, UB^{2}_{\beta}, UB^{3}_{\beta}, UB^{4}_{\beta}, UB^{5}_{\beta}, UB^{6}_{\beta}, UB^7_{\beta}).
\end{equation} 
where $UB^1_{\beta} = \frac{I(y;\check{\xi}_i)}{F(\xi,y)I(v_1;\check{\xi}_i)}$, $UB^2_{\beta} = \frac{I(y;\check{\xi}_2)}{r_1 F(\xi,y)I(v_2;\check{\xi}_2)}$, $UB^3_{\beta} = \frac{I(y;\check{\xi}_3)}{r_2 F(\xi,y)I(v_3;\check{\xi}_3)}$, $UB^4_{\beta}=\frac{I(y;\check{\xi}_{12})}{F(\xi,y)(I(v_1;\check{\xi}_{12}) + r_1 I(v_2;\check{\xi}_{12}))}$, $UB^5_{\beta}=\frac{I(y;\check{\xi}_{13})}{ F(\xi,y)(I(v_1;\check{\xi}_{13}) + r_2 I(v_2;\check{\xi}_{13}))}$, $UB^6_{\beta}=\frac{I(y;\check{\xi}_{23})}{ F(\xi,y)(r_1 I(v_1;\check{\xi}_{23}) + r_2 I(v_2;\check{\xi}_{23}))}$, and $UB^7_{\beta}=\frac{I(y;\check{\xi}_{0})}{F(\xi,y)(I(v_1;\check{\xi}_{0}) + r_1 I(v_2;\check{\xi}_{12}) + r_2 I(v_3;\check{\xi}_{0}))}$. \par
We complete the proof by proving that $M_u^2=\frac{1}{(1+r_1+r_2) (\sum_{i=1}^{3} H(v_i)- \frac{2}{3} \sum_{1\leq i < j\leq 3}I(v_i;v_j))}$ is a lower bound of $UB_{\beta}$ in Lemma \ref{lem:3} below. That is, when $\beta < M_u^2$, the optimization procedure guarantees $F(\xi)\supseteq\{a_{00},a_{11},a_{22},a_{33},a_{12},a_{13},a_{23}\}$. 
\end{proof}
\begin{lemma}\label{lem:3}  
$UB_{\beta}$ in \Cref{eq:UB2} satisfies:
$UB_{\beta} > M_u^2$, where $M_u^2=\frac{1}{(1+r_1+r_2) (\sum_{i=1}^{3} H(v_i)- \frac{2}{3} \sum_{1\leq i < j\leq 3}I(v_i;v_j))}$. \par
$H(\cdot)$ and $I(\cdot;\cdot)$ can be estimated using MINE \cite{belghazi2018mine} (see Appendix \ref{net:estimate}).
\end{lemma}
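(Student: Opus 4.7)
The plan is to mirror the proof of Lemma \ref{lem:2} but now with seven candidate upper bounds $UB^1_\beta,\dots,UB^7_\beta$ instead of three. The strategy is: (i) simplify each $UB^k_\beta$ by exploiting the containment relations $\{\check{\xi}_*\}\subseteq F(y)$ and $\{\check{\xi}_*\}\subseteq F(v_i)$ for the appropriate modalities, using Property \textbf{v} of \Cref{properties 1}; (ii) compare the seven simplified expressions and identify the smallest; (iii) replace $F(\xi,y)$ by the total entropy of the three modalities to obtain $M_u^2$.

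For step (i), I would exploit that each $\check{\xi}_\bullet$ is a sub-region of $F(y)$, so $F(\xi,\check{\xi}_\bullet,y)=F(\xi,y)$, and that $\check{\xi}_\bullet$ lies entirely within the intersection of certain $F(v_i)$'s, so every $I(y;\check{\xi}_\bullet)$ and $I(v_i;\check{\xi}_\bullet)$ reduces to the same $H(\check{\xi}_\bullet)$, which then cancels in the ratio. The resulting simplifications are:
\begin{equation}
UB^1_\beta=\tfrac{1}{F(\xi,y)},\ UB^2_\beta=\tfrac{1}{r_1 F(\xi,y)},\ UB^3_\beta=\tfrac{1}{r_2 F(\xi,y)},
\end{equation}
\begin{equation}
UB^4_\beta=\tfrac{1}{(1+r_1)F(\xi,y)},\ UB^5_\beta=\tfrac{1}{(1+r_2)F(\xi,y)},\ UB^6_\beta=\tfrac{1}{(r_1+r_2)F(\xi,y)},
\end{equation}
\begin{equation}
UB^7_\beta=\tfrac{1}{(1+r_1+r_2)F(\xi,y)}.
\end{equation}
Since $r_1,r_2>0$, the coefficient $1+r_1+r_2$ dominates all other coefficients, so $UB_\beta=UB^7_\beta=\frac{1}{(1+r_1+r_2)F(\xi,y)}$.

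For step (iii), I would use $F(\xi,y)\subseteq F(v_1,v_2,v_3)$ (since both $\xi$ and $y$ are derived from the three modalities) to get $UB_\beta \ge \frac{1}{(1+r_1+r_2)F(v_1,v_2,v_3)}$. It then remains to show $F(v_1,v_2,v_3)\le \sum_{i=1}^{3}H(v_i)-\tfrac{2}{3}\sum_{1\le i<j\le 3}I(v_i;v_j)$. This is the main obstacle, since for three variables the natural inclusion--exclusion formula involves the triple interaction information $I(v_1;v_2;v_3)$, which can be negative and does not directly give the stated bound. I would handle it by a Venn-region bookkeeping argument under \Cref{Assumption 3}: partition the joint support into seven disjoint regions (the triple-overlap $A$, the three pair-only overlaps $B_{ij}$, and the three modality-unique parts $C_i$), express each $H(v_i)$ and $I(v_i;v_j)$ as a sum of the appropriate region-entropies, and compute
\begin{equation}
\sum_i H(v_i)-\tfrac{2}{3}\sum_{i<j} I(v_i;v_j)=A+\tfrac{4}{3}(B_{12}+B_{13}+B_{23})+C_1+C_2+C_3,
\end{equation}
which exceeds $F(v_1,v_2,v_3)=A+(B_{12}+B_{13}+B_{23})+C_1+C_2+C_3$ by $\tfrac{1}{3}(B_{12}+B_{13}+B_{23})\ge 0$. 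Combining gives $UB_\beta>M_u^2$. The estimability of $H(\cdot)$ and $I(\cdot;\cdot)$ via MINE then carries over verbatim from Lemma \ref{lem:2}.
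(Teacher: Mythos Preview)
Your proposal is correct and follows essentially the same approach as the paper: simplify each $UB^k_\beta$ to $\frac{1}{c_k\,F(\xi,y)}$ via Property~\textbf{v}, identify $UB^7_\beta$ as the minimum since $1+r_1+r_2$ dominates, and then replace $F(\xi,y)$ by $F(v_1,v_2,v_3)$. The only cosmetic difference is in the final bound on $F(v_1,v_2,v_3)$: you use explicit Venn-region bookkeeping, whereas the paper writes the three-variable inclusion--exclusion identity and averages the three inequalities $I(v_1;v_2;v_3)\le I(v_i;v_j)$ (valid under \Cref{Assumption 3}) to get $I(v_1;v_2;v_3)\le\tfrac{1}{3}\sum_{i<j}I(v_i;v_j)$, which is the same content expressed differently.
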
\par
\begin{proof}
As shown in \Cref{eq:UB2}, $UB^1_{\beta} = \frac{I(y;\check{\xi}_1)}{F(\xi,y)I(v_1;\check{\xi}_1)}$. By property \textbf{v} in \Cref{properties 1}, $UB^1_{\beta}$ can be simplified as:
\begin{equation}
\begin{split}
   UB^1_{\beta} &= \frac{I(y;\check{\xi}_1)}{I(v_1;\check{\xi}_1)F(\xi,y)}\\ &= \frac{\overbrace{H(\check{\xi}_1)}^{=I(y;\check{\xi}_1), \because \{\check{\xi}_1\}\subseteq \{y\}}}{\underbrace{H(\check{\xi}_1)}_{=I(v_1;\check{\xi}_1), \because \{\check{\xi}_1\}\subseteq \{v_1\}}F(\xi,y)}\\
   &= \frac{1}{F(\xi,y)}
   \end{split}
   \nonumber 
\end{equation}\par
Similarly, we have $UB^2_{\beta} = \frac{I(y;\check{\xi}_2)}{r_1 F(\xi,y)I(v_2;\check{\xi}_2)} =\frac{1}{r_1 F(\xi,y)}$ and $UB^3_{\beta} = \frac{I(y;\check{\xi}_3)}{r_3 F(\xi,y)I(v_3;\check{\xi}_3)}=\frac{1}{r_2 F(\xi,y)}$.\par
$UB^4_{\beta}$ can be simplified as:
 \begin{equation}
     \begin{split}
      UB^4_{\beta}&=\frac{I(y;\check{\xi}_{12})}{F(\xi,y)(I(v_1;\check{\xi}_{12}) + r_1 I(v_2;\check{\xi}_{12}))}\\  &= \frac{\overbrace{H(\check{\xi}_{12})}^{=I(y;\check{\xi}_{12}), \because \{\check{\xi}_{12}\}\subseteq \{y\}}}{F(\xi,y)(\underbrace{H(\check{\xi}_{12})}_{=I(v_1;\check{\xi}_{12}), \because \{\check{\xi}_{12}\}\subseteq \{v_1\}} + r_1 \underbrace{H(\check{\xi}_{12})}_{=I(v_2;\check{\xi}_{12}), \because \{\check{\xi}_{12}\}\subseteq \{v_2\}})} \\
   &= \frac{1}{(1+r_1)F(\xi,y)}
     \end{split}
 \end{equation}
Similarly, we have $UB^5_{\beta}=\frac{I(y;\check{\xi}_{13})}{F(\xi,y)(I(v_1;\check{\xi}_{13}) + r_2 I(v_2;\check{\xi}_{13}))}=\frac{1}{(1+r_2)F(\xi,y)}$, and $UB^6_{\beta}=\frac{I(y;\check{\xi}_{23})}{F(\xi,y)(r_1 I(v_1;\check{\xi}_{23}) + r_2 I(v_2;\check{\xi}_{23}))}=\frac{1}{(r_1+r_2)F(\xi,y)}$. \par
$UB^7_{\beta}$ can be simplified as:
\begin{equation}
\begin{split}
   UB^7_{\beta}&=\frac{I(y;\check{\xi}_{0})}{F(\xi,y)(I(v_1;\check{\xi}_{0}) + r_1 I(v_2;\check{\xi}_{0}) + r_2 I(v_3;\check{\xi}_{0}))}\\  &= \frac{\overbrace{H(\check{\xi}_{0})}^{=I(y;\check{\xi}_{0}), \because \{\check{\xi}_{0}\}\subseteq \{y\}}}{F(\xi,y)(\underbrace{H(\check{\xi}_{0})}_{{=I(v_1;\check{\xi}_{0}), \because \{\check{\xi}_{0}\}\subseteq \{v_1\}}} + r_1 \underbrace{H(\check{\xi}_{0})}_{=I(v_2;\check{\xi}_{0}), \because \{\check{\xi}_{0}\}\subseteq \{v_2\}} + r_2 \underbrace{H(\check{\xi}_{0})}_{=I(v_3;\check{\xi}_{0}), \because \{\check{\xi}_{0}\}\subseteq \{v_3\}})} \\
   &= \frac{1}{(1+r_1+r_2)F(\xi,y)} \\
   \end{split}
   \nonumber 
\end{equation}\par

Therefore, for $\forall r_1, r_2>0$, we have:
\begin{equation}
\begin{split}
   \frac{1}{(1+r_1+r_2)F(\xi,y)}&< \min(\frac{1}{F(\xi,y)}, \frac{1}{r_1 F(\xi,y)}, \frac{1}{r_2 F(\xi,y)}, \frac{1}{(1+r_1)F(\xi,y)}, \frac{1}{(1+r_2)F(\xi,y)}, \frac{1}{(r_1+r_2)F(\xi,y)}),\\
   &=\min(UB^1_{\beta}, UB^2_{\beta}, UB^3_{\beta}, UB^4_{\beta}, UB^5_{\beta}, UB^6_{\beta}, UB^7_{\beta}) \\
 \Longrightarrow  UB_{\beta} &=\frac{1}{(1+r_1+r_2) F(\xi,y)} \\&> \frac{1}{(1+r_1+r_2) F(v_1,v_2,v_3)}\\
   &=\frac{1}{(1+r_1+r_2) (H(v_1)+H(v_2)+H(v_3)-I(v_1;v_2)-I(v_1;v_3)-I(v_2;v_3)+I(v_1;v_2;v_3))}\\
   \end{split}
   \nonumber 
\end{equation}\par
For the term \( I(v_1; v_2) + I(v_1; v_3) + I(v_2; v_3) - I(v_1; v_2; v_3) \), we have:  
\begin{equation}
\begin{split}
    &I(v_1; v_2) + I(v_1; v_3) + I(v_2; v_3) - I(v_1; v_2; v_3) < I(v_1; v_2) + I(v_1; v_3) + I(v_2; v_3) - I(v_1; v_2) = I(v_1; v_3) + I(v_2; v_3),  \\
    &I(v_1; v_2) + I(v_1; v_3) + I(v_2; v_3) - I(v_1; v_2; v_3) < I(v_1; v_2) + I(v_1; v_3) + I(v_2; v_3) - I(v_1; v_3) = I(v_1; v_2) + I(v_2; v_3),  \\
    &I(v_1; v_2) + I(v_1; v_3) + I(v_2; v_3) - I(v_1; v_2; v_3) < I(v_1; v_2) + I(v_1; v_3) + I(v_2; v_3) - I(v_2; v_3) = I(v_1; v_2) + I(v_1; v_3).
\end{split}
\nonumber
\end{equation}
To calculate \( I(v_1; v_2) + I(v_1; v_3) + I(v_2; v_3) - I(v_1; v_2; v_3) \), we sum up the individual inequalities, yielding: 
\begin{equation}
\begin{split}
I(v_1;v_2)+I(v_1;v_3)+I(v_2;v_3)-I(v_1;v_2;v_3)&<\frac{1}{3}(I(v_1;v_3)+I(v_2;v_3)+I(v_1;v_2)+I(v_2;v_3)+I(v_1;v_2)+I(v_1;v_3))\\&= \frac{2}{3}\sum_{1\leq i<j\leq3} I(v_i;v_j)
   \end{split}
   \nonumber 
\end{equation}\par
We then have:
\begin{equation}
\begin{split}
UB_{\beta} &>\frac{1}{(1+r_1+r_2) (H(v_1)+H(v_2)+H(v_3)-I(v_1;v_2)-I(v_1;v_3)-I(v_2;v_3)+I(v_1;v_2;v_3))}\\
   &>\frac{1}{(1+r_1+r_2) (\sum_{i=1}^{3} H(v_i)- \frac{2}{3}\sum_{1\leq i<j\leq3} I(v_i;v_j))} = M_u^2
   \end{split}
   \nonumber 
\end{equation}\par

This completes the proof.
\end{proof}

\begin{lemma}[\textbf{Exclusiveness of superfluous information for three modalities}]
    \label{prop-3:exclusiveness}
    Under \Cref{Assumption 3}, the objective function in \Cref{Multiple equ} is optimized when: \begin{equation}
F(\xi) \subseteq \{a_{00},a_{11},a_{22}, a_{33}, a_{12}, a_{13}, a_{23}\} 
\end{equation} 
\end{lemma}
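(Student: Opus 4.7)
The plan is to mirror the two-modality exclusiveness argument (\Cref{prop:exclusiveness-repeat}) and extend it to each of the seven superfluous components in $\{b\}=\{b_{00},b_{11},b_{22},b_{33},b_{12},b_{13},b_{23}\}$. By symmetry in the modality indices, it suffices to treat three structural cases: modality-specific superfluous information (representative: $\hat{z}_{11}\subseteq b_{11}$), pair-shared superfluous information (representative: $\hat{z}_{12}\subseteq b_{12}$), and globally shared superfluous information ($\hat{z}_{00}\subseteq b_{00}$). For each case, I suppose $\hat{z}\neq\emptyset$ with $\{\hat{z}\}\cap F(\xi)=\emptyset$ and $I(\hat{z};y)=0$, form $\ddot{\xi}=\{\xi,\hat{z}\}$, and show $\ell(\ddot{\xi})-\ell(\xi)>0$. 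This forces any minimizer of $\ell$ to exclude $\hat{z}$, from which the set inclusion follows.

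Two ingredients drive each case. First, since $I(\hat{z};y)=0$ and $\{\hat{z}\}\cap F(\xi)=\emptyset$, property \textbf{iv} of \Cref{properties 1} gives $I(\xi,\hat{z};y)=I(\xi;y)+I(y;\hat{z}\mid\xi)=I(\xi;y)$, while property \textbf{vii} gives $F(\xi,\hat{z},y)=F(\hat{z})+F(\xi,y)>F(\xi,y)$, so $\widehat{I}(\ddot{\xi};y)<\widehat{I}(\xi;y)$ and the $-\widehat{I}$ term in $\ell$ strictly increases. Second, for each modality $v_l$ with $\{\hat{z}\}\subseteq F(v_l)$, the conditional MI cancellation yields $I(\ddot{\xi};v_l)-I(\xi;v_l)=I(v_l;\hat{z}\mid\xi)=I(v_l;\hat{z})>0$, while for modalities $v_l$ with $\{\hat{z}\}\cap F(v_l)=\emptyset$ we have $I(\ddot{\xi};v_l)-I(\xi;v_l)=0$. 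Since the weights $1,\,r_1,\,r_2>0$, both contributions to $\ell(\ddot{\xi})-\ell(\xi)$ are nonnegative and at least one is strictly positive.

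Concretely, for $\hat{z}_{11},\hat{z}_{22},\hat{z}_{33}$ exactly one of the modality MI terms increases; for $\hat{z}_{12},\hat{z}_{13},\hat{z}_{23}$ two of them increase (e.g., $\hat{z}_{12}$ lifts both $I(\xi;v_1)$ and $r_1 I(\xi;v_2)$, but not $r_2 I(\xi;v_3)$ since $\{\hat{z}_{12}\}\cap F(v_3)=\emptyset$); and for $\hat{z}_{00}$ all three modality MI terms simultaneously increase. Thus in every case $\ell(\ddot{\xi})-\ell(\xi)>0$ whenever $\hat{z}\neq\emptyset$, and chaining the seven sub-arguments gives $F(\xi)\subseteq\{a_{00},a_{11},a_{22},a_{33},a_{12},a_{13},a_{23}\}$.

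The main obstacle is not any single technical step but the bookkeeping required to track, for each of the seven $\hat{z}$ types, which modalities contain it and to invoke the correct disjointness identities under \Cref{Assumption 3}. In particular, one must verify case by case that $\hat{z}$ is independent of both $\xi$ and $y$ so that property \textbf{vii} applies to the joint-entropy expansion, and that each $I(v_l;\hat{z}\mid\xi)$ collapses to either $I(v_l;\hat{z})$ or $0$ according to whether $\{\hat{z}\}\subseteq F(v_l)$ or $\{\hat{z}\}\cap F(v_l)=\emptyset$. Once this bookkeeping is in place, the inequality $\ell(\ddot{\xi})>\ell(\xi)$ follows immediately from the two ingredients above, exactly as in the two-modality case.
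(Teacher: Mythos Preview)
Your proposal is correct and follows essentially the same approach as the paper: splitting into the three structural cases (modality-specific, pair-shared, and globally shared superfluous information), showing in each that appending $\hat{z}$ strictly decreases $\widehat{I}(\cdot;y)$ while weakly increasing every $I(\cdot;v_l)$ term (strictly for those $v_l$ containing $\hat{z}$), and invoking symmetry for the remaining indices. The paper carries out exactly these computations with the same disjointness/independence bookkeeping you describe.
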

\begin{proof}
We begin by analyzing the change in the loss function of our optimization after adding modality-specific superfluous information to $\xi$. Let $\hat{\xi}_1$ represent $v_1$-specific superfluous information that is not incorporated into $\xi$. Obviously: 

\begin{equation}
    \begin{aligned}
        & \{\hat{\xi}_1\}\notin \{a_{00},a_{11},a_{22}, a_{33}, a_{12}, a_{13}, a_{23}\},\{\hat{\xi}_1\}\subset F(v_1), I(\hat{\xi}_1;v_1)>0,\\ 
        &I(\hat{\xi}_1;y)=0, \{\xi\}\cap \{\hat{\xi}_1\}=\emptyset, \{v_2\} \cap \{\hat{\xi}_1\} =\emptyset, \{v_3\} \cap \{\hat{\xi}_1\}=\emptyset.
    \end{aligned}
\end{equation}

Let $\ddot{\xi}=\{\xi,\hat{\xi}_1\}$. The difference in loss function between $\xi$  and $\ddot{\xi}$ is computed as:

\begin{equation}
    \begin{split}
        \ell(\ddot{\xi})-\ell(\xi)
        &= -\widehat{I}(\ddot{\xi};y)+\beta(I(\ddot{\xi};v_1)+r_1 I(\ddot{\xi};v_2) + r_2 I(\ddot{\xi};v_3)) \\&- \Bigl(-\widehat{I}(\xi;y)+\beta(I(\xi;v_1)+r_1 I(\xi;v_2) + r_2 I(\xi;v_3))\Bigr)\\
        &=(\widehat{I}(\xi;y)-\widehat{I}(\xi,\hat{\xi}_1;y))+\beta\Bigl(\widehat{I}(\hat{\xi}_1,\xi;v_1)- \widehat{I}(\xi;v_1)\\&+r_1(\widehat{I}(\hat{\xi}_1,\xi;v_2)- \widehat{I}(\xi;v_2))+r_2(\widehat{I}(\hat{\xi}_1,\xi;v_3)- \widehat{I}(\xi;v_3))\Bigr),\\
    \end{split}
\end{equation}

Here we have:
\begin{equation}
    \begin{split}
        \widehat{I}(\xi;y)-\widehat{I}(\hat{\xi}_1,\xi;y)&=\frac{I(\xi;y)}{F(\xi,y)}-\frac{I(\hat{\xi}_1,\xi;y)}{F(\hat{\xi}_1,\xi,y)}\\ 
        &=\frac{I(\xi;y)+\overbrace{I(y;\hat{\xi}_1|\xi)}^{=0\ \because I(\hat{\xi}_1,y)=0}}{F(\xi,y)}-\frac{I(\hat{\xi}_1,\xi;y)}{F(\hat{\xi}_1,\xi,y)}\\ 
        &=\frac{I(\hat{\xi}_1,\xi;y)}{F(\xi,y)}-\frac{I(\hat{\xi}_1,\xi;y)}{F(\hat{\xi}_1,\xi,y)}\\ 
        &=\frac{I(\hat{\xi}_1,\xi;y)}{F(\xi,y)}-\frac{I(\hat{\xi}_1,\xi;y)}{\underbrace{F(\hat{\xi}_1)+F(\xi,y)}_{\because \hat{\xi}_1\perp \{\xi,y\}}}\\ 
        &>0
    \end{split}
\end{equation} \par
\begin{equation}
    \begin{split}
        I(\hat{\xi}_1,\xi;v_1)-I(\xi;v_1)&=\overbrace{I(v_1;\hat{\xi}_1|\xi)}^{\{\hat{\xi}_1\}\cap F(\xi)=\emptyset}\\ 
        &=I(v_1;\hat{\xi}_1) >0
    \end{split}
\end{equation} \par
\begin{equation}
    \begin{split}
        I(\hat{\xi}_1,\xi;v_2)-I(\xi;v_2)&=\overbrace{I(v_2;\hat{\xi}_1|\xi)}^{\{\hat{\xi}_1\}\cap F(v_2)=\emptyset}\\ 
        &=0
    \end{split}
\end{equation} \par
\begin{equation}
    \begin{split}
I(\hat{\xi}_1,\xi;v_3)-I(\xi;v_3) = 0
    \end{split}
\end{equation} \par

Thus, we have $ \ell(\ddot{\xi})-\ell(\xi)>0$, if $\{\hat{\xi}_1\}\neq \emptyset$.
For superfluous information $\hat{\xi}_2$ specific to $v_2$ and $\hat{\xi}_3$ specific to $v_3$, we arrive at the same conclusion. Next, we analyze the change in the loss function of our optimization after adding superfluous information shared by two modalities to $\xi$. Specifically, let $\hat{\xi}_{12}$ represent the superfluous information that is shared between modalities $v_1$ and $v_2$, and not incorporated into $\xi$. We have: 
\begin{equation}
    \begin{aligned}
        & \hat{\xi}_{12} \notin \{a_{00},a_{11},a_{22},a_{33},a_{12},a_{13},a_{23}\},\{\hat{\xi}_{12}\}\subset F(v_1),\{\hat{\xi}_{12}\}\subset F(v_2),\\
        &I(\hat{\xi}_{12};v_1)>0,I(\hat{\xi}_{12};v_2)>0,\\ 
        &I(\hat{\xi}_{12};y)=0, \{\xi\}\cap \{\hat{\xi}_{12}\}=\emptyset, \{v_3\}\cap \{\hat{\xi}_{12}\}=\emptyset.
    \end{aligned}
\end{equation}
Let $\ddot{\xi}=\{\xi,\hat{\xi}_{12}\}$. The difference in loss function between $\xi$  and $\ddot{\xi}$ is computed as:

\begin{equation}
    \begin{split}
        \ell(\ddot{\xi})-\ell(\xi)
        &= -\widehat{I}(\ddot{\xi};y)+\beta(I(\ddot{\xi};v_1)+r_1 I(\ddot{\xi};v_2) + r_2 I(\ddot{\xi};v_3)) \\&- \Bigl(-\widehat{I}(\xi;y)+\beta(I(\xi;v_1)+r_1 I(\xi;v_2) + r_2 I(\xi;v_3))\Bigr)\\
        &=(\widehat{I}(\xi;y)-\widehat{I}(\xi,\hat{\xi}_{12};y))+\beta\Bigl(\widehat{I}(\hat{\xi}_{12},\xi;v_1)- \widehat{I}(\xi;v_1)\\&+r_1(\widehat{I}(\hat{\xi}_{12},\xi;v_2)- \widehat{I}(\xi;v_2))+r_2(\widehat{I}(\hat{\xi}_{12},\xi;v_3)- \widehat{I}(\xi;v_3))\Bigr)\\
    \end{split}
\end{equation}
Here we have:
\begin{equation}
    \begin{split}
        \widehat{I}(\xi;y)-\widehat{I}(\hat{\xi}_{12},\xi;y)&=\frac{I(\xi;y)}{F(\xi,y)}-\frac{I(\hat{\xi}_{12},\xi;y)}{F(\hat{\xi}_{12},\xi,y)}\\ 
        &=\frac{I(\xi;y)+\overbrace{I(y;\hat{\xi}_{12}|\xi)}^{=0\ \because I(\hat{\xi}_{12},y)=0}}{F(\xi,y)}-\frac{I(\hat{\xi}_{12},\xi;y)}{F(\hat{\xi}_{12},\xi,y)}\\ 
        &=\frac{I(\hat{\xi}_{12},\xi;y)}{F(\xi,y)}-\frac{I(\hat{\xi}_{12},\xi;y)}{F(\hat{\xi}_{12},\xi,y)}\\ 
        &=\frac{I(\hat{\xi}_{12},\xi;y)}{F(\xi,y)}-\frac{I(\hat{\xi}_{12},\xi;y)}{\underbrace{F(\hat{\xi}_{12})+F(\xi,y)}_{\because \hat{\xi}_{12}\perp \{\xi,y\}}}\\ 
        &>0
    \end{split}
\end{equation} \par

\begin{equation}
    \begin{split}
        I(\hat{\xi}_{12},\xi;v_1)-I(\xi;v_1)&=\overbrace{I(v_1;\hat{\xi}_{12}|\xi)}^{\{\hat{\xi}_{12}\}\cap F(\xi)=\emptyset}\\ 
        &=I(v_1;\hat{\xi}_{12}) >0
    \end{split}
\end{equation} \par
\begin{equation}
    \begin{split}
        I(\hat{\xi}_{12},\xi;v_2)-I(\xi;v_2) = I(v_2;\hat{\xi}_{12}) >0
    \end{split}
\end{equation} \par
\begin{equation}
    \begin{split}
        I(\hat{\xi}_{12},\xi;v_3)-I(\xi;v_3)&=\overbrace{I(v_3;\hat{\xi}_{12}|\xi)}^{\{\hat{\xi}_{12}\}\cap F(v_3)=\emptyset}\\ 
        &=0
    \end{split}
\end{equation} \par
Thus, we have $ \ell(\ddot{\xi})-\ell(\xi)>0$, if  $\{\hat{\xi}_{12}\}\neq \emptyset$.
For superfluous information $\hat{\xi}_{13}$ shared between modalities $v_1$ and $v_3$, as well as $\hat{\xi}_{23}$ shared between modalities $v_2$ and $v_3$ , we arrive at the same conclusion. 
Finally, we analyze the change in the loss function of our optimization after adding superfluous information shared by all three modalities to $\xi$.  Let $\hat{\xi}_0$ represent the superfluous information shared by all three modalities and not incorporated into $\xi$. Then, we have: 
\begin{equation}
    \begin{aligned}
        & \hat{\xi}_0 \notin \{a_{00},a_{11},a_{22}, a_{33}, a_{12}, a_{13}, a_{23}\},\\
        &\{\hat{\xi}_0\}\subset F(v_1), \{\hat{\xi}_0\}\subset F(v_2),\{\hat{\xi}_0\}\subset F(v_3),\\
        &I(\hat{\xi}_0;v_1)>0,I(\hat{\xi}_0;v_2)>0,I(\hat{\xi}_0;v_3)>0,\\ 
        &I(\hat{\xi}_0;y)=0, \{\xi\}\cap \{\hat{\xi}_0\}=\emptyset.
    \end{aligned}
\end{equation}

Let $\ddot{\xi}=\{\xi,\hat{\xi}_{0}\}$. The difference in loss function between $\xi$  and $\ddot{\xi}$ is computed as:
\begin{equation}
    \begin{split}
        \ell(\ddot{\xi})-\ell(\xi)
        &= -\widehat{I}(\ddot{\xi};y)+\beta(I(\ddot{\xi};v_1)+r_1 I(\ddot{\xi};v_2) + r_2 I(\ddot{\xi};v_3)) \\&- \Bigl(-\widehat{I}(\xi;y)+\beta(I(\xi;v_1)+r_1 I(\xi;v_2) + r_2 I(\xi;v_3))\Bigr)\\
        &=(\widehat{I}(\xi;y)-\widehat{I}(\xi,\hat{\xi}_{0};y))+\beta\Bigl(\widehat{I}(\hat{\xi}_{0},\xi;v_1)- \widehat{I}(\xi;v_1)\\&+r_1(\widehat{I}(\hat{\xi}_{0},\xi;v_2)- \widehat{I}(\xi;v_2))+r_2(\widehat{I}(\hat{\xi}_{0},\xi;v_3)- \widehat{I}(\xi;v_3))\Bigr)\\
    \end{split}
\end{equation}
Here we have:
\begin{equation}
    \begin{split}
        \widehat{I}(\xi;y)-\widehat{I}(\hat{\xi}_0,\xi;y)&=\frac{I(\xi;y)}{F(\xi,y)}-\frac{I(\hat{\xi}_0,\xi;y)}{F(\hat{\xi}_0,\xi,y)}\\ 
        &=\frac{I(\xi;y)+\overbrace{I(y;\hat{\xi}_0|\xi)}^{=0\ \because I(\hat{\xi}_0,y)=0}}{F(\xi,y)}-\frac{I(\hat{\xi}_0,\xi;y)}{F(\hat{\xi}_0,\xi,y)}\\ 
        &=\frac{I(\hat{\xi}_0,\xi;y)}{F(\xi,y)}-\frac{I(\hat{\xi}_0,\xi;y)}{F(\hat{\xi}_0,\xi,y)}\\ 
        &=\frac{I(\hat{\xi}_0,\xi;y)}{F(\xi,y)}-\frac{I(\hat{\xi}_0,\xi;y)}{\underbrace{F(\hat{\xi}_0)+F(\xi,y)}_{\because \hat{\xi}_0\perp \{\xi,y\}}}\\ 
        &>0
    \end{split}
\end{equation} \par
\begin{equation}
    \begin{split}
        I(\hat{\xi}_0,\xi;v_1)-I(\xi;v_1)&=\overbrace{I(v_1;\hat{\xi}_0|\xi)}^{\{\hat{\xi}_0\}\cap F(\xi)=\emptyset}\\ 
        &=I(v_1;\hat{\xi}_0) >0
    \end{split}
\end{equation} \par
\begin{equation}
    \begin{split}
        I(\hat{\xi}_0,\xi;v_2)-I(\xi;v_2) = I(v_2;\hat{\xi}_0) > 0
    \end{split}
\end{equation} \par
\begin{equation}
    \begin{split}
        I(\hat{\xi}_0,\xi;v_3)-I(\xi;v_3) = I(v_3;\hat{\xi}_0) > 0
    \end{split}
\end{equation} \par 
Thus $\ell(\ddot{\xi})-\ell(\xi)>0$, if $\{\hat{\xi}_0\}\neq \emptyset$.
Put together, the optimization procedure continues until $\xi$ does not encompass superfluous information, specific to or shared by $v_1$, $v_2$, and $v_3$. That is, $F(\xi) \subseteq \{a_{00},a_{11},a_{22}, a_{33}, a_{12}, a_{13}, a_{23}\}$. This completes the proof.
\end{proof}

\begin{proposition}[\textbf{Achievability of optimal MIB for three modalities}]
    \label{prop:Ach OMIB-three}
\Cref{prop-3:inclusiveness}, and \Cref{prop-3:exclusiveness} jointly demonstrate that the optimal MIB $\xi_{opt-three}$ is achievable through optimization of \Cref{Multiple equ}  with $\beta \in (0, M^2_u]$.
\end{proposition}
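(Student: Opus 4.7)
The plan is to mirror the two-modality argument from Proposition \ref{prop:Ach OMIB} and derive the achievability of $\xi_{opt-three}$ as a direct set-theoretic pincer combining the two lemmas that immediately precede it. Since both inclusion directions have already been established separately, the proof of this proposition is essentially a one-step synthesis.

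First, I would invoke \Cref{prop-3:inclusiveness} to assert that optimizing the objective in \Cref{Multiple equ} with $\beta \in (0, M^2_u]$ yields
\[
F(\xi) \supseteq \{a_{00}, a_{11}, a_{22}, a_{33}, a_{12}, a_{13}, a_{23}\}.
\]
This captures all seven task-relevant components identified under \Cref{Assumption 3}: the fully shared component $a_{00}$, the three pairwise-shared components $a_{12}, a_{13}, a_{23}$, and the three modality-specific components $a_{11}, a_{22}, a_{33}$. The chosen upper bound $M^2_u$ is precisely what prevents $\beta$ from over-penalizing any single modality and inadvertently excluding one of these components.

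Second, I would invoke \Cref{prop-3:exclusiveness}, which guarantees that at an optimizer of the same objective,
\[
F(\xi) \subseteq \{a_{00}, a_{11}, a_{22}, a_{33}, a_{12}, a_{13}, a_{23}\}.
\]
Crucially, this inclusion is driven purely by $\beta > 0$ acting on the redundancy terms $I(\xi;v_i)$ and does not require any upper bound on $\beta$; any inclusion of superfluous information $\{b_{00}, b_{11}, b_{22}, b_{33}, b_{12}, b_{13}, b_{23}\}$ in $\xi$ strictly increases the loss while contributing nothing to $\widehat{I}(\xi;y)$, hence cannot occur at a minimizer.

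Combining the two set inclusions yields $F(\xi) = \{a_{00}, a_{11}, a_{22}, a_{33}, a_{12}, a_{13}, a_{23}\}$, which is exactly the characterization of $\xi_{opt-three}$ in \Cref{def:optimal MIB-2}, completing the proof. The hard part of the argument is not located in this proposition itself but in the two prerequisite lemmas, where one must carry out a careful case analysis over the seven disjoint task-relevant regions and the seven disjoint superfluous regions, and derive the tight lower bound $M^2_u$ for $UB_\beta$ via \Cref{lem:3}; here that work has already been discharged, so the conclusion follows immediately.
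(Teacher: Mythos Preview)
Your proposal is correct and takes essentially the same approach as the paper: invoke \Cref{prop-3:inclusiveness} for the superset direction (requiring $\beta \in (0, M^2_u]$), invoke \Cref{prop-3:exclusiveness} for the subset direction (requiring only $\beta>0$), and conclude $F(\xi)=\{a_{00},a_{11},a_{22},a_{33},a_{12},a_{13},a_{23}\}=F(\xi_{opt-three})$ by \Cref{def:optimal MIB-2}. The paper's proof is even terser than yours but follows exactly this two-line synthesis.
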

\begin{proof}
   From \Cref{prop-3:inclusiveness} and \Cref{prop-3:exclusiveness}, we have $F(\xi)\supseteq\{a_{00},a_{11},a_{22},a_{33},a_{12},a_{13},a_{23}\}$ if $\beta \in (0, M^2_u]$, and $F(\xi) \subseteq \{a_{00},a_{11},a_{22},a_{33},a_{12},a_{13},a_{23}\}$, respectively. Thus, $F(\xi)=\{a_{00},a_{11},a_{22},a_{33},a_{12},a_{13},a_{23}\}$, which corresponds to $\xi_{opt-three}$ in \Cref{def:optimal MIB-2}.
\end{proof}

To expedite the training process, we can also set $M_u^2=\frac{1}{5 (\sum_{i=1}^{3} H(v_i)- \frac{2}{3} \sum_{1\leq i < j\leq 3}I(v_i;v_j))}$ as an upper bound and $M_l^2:=\frac{1}{5 (\sum_{i=1}^{3} H(v_i))}$ as a lower bound for $\beta$, resulting in $\beta \in [M_l^2, M_u^2]$.

\section{Estimation of Mutual Information and Information Entropy}\label{net:estimate}
We apply the Mutual Information Neural Estimation (MINE) method \cite{belghazi2018mine} to estimate the information entropy of each data modality and the mutual information between data modalities. Given two modalities $X$ and $Z$, MINE employs a neural network, implemented as a two-layer Multi-Layer Perceptron (MLP) network with ReLU activation function  \cite{belghazi2018mine}, to learn a set of functions \( \{T_{\theta}\}_{\theta \in \Theta} \). Each function \( T_{\theta}: X \times Z \to \mathbb{R} \) maps sample pairs to real values, enabling mutual information estimation as:
\begin{equation}
I(X;Z) = \sup_{\theta \in \Theta} \mathbb{E}_{P_{XZ}}[T_{\theta}] - \log \mathbb{E}_{P_X \otimes P_Z}[e^{T_{\theta}}].
\end{equation}
Here, \( \mathbb{E}_{P_{XZ}}[T_{\theta}] \) represents the expected value of \( T_{\theta} \) calculated using sample pairs from the joint distribution \( P_{XZ} \), and \( \mathbb{E}_{P_X \otimes P_Z}[e^{T_{\theta}}] \) represents the expected value of \( T_{\theta} \) calculated using sample pairs from the product of marginal distribution \( P_X \otimes P_Z \). The joint distribution \( P_{XZ} \) is approximated using matched sample pairs $(X,Z)$, while \( P_X \otimes P_Z \) is approximated using perturbed pairs $(X,Z')$, where $Z'$ is obtained by shuffling $Z$. The information entropy \( H(X) \) is computed as the mutual information of \( X \) with itself:
\begin{equation}
H(X) = I(X;X)
\end{equation}\par
Specifically, in this case, $Z$ and $Z'$ are replaced by $X$ and $X'$, where $X'$ is obtained by shuffling $X$. 

\section{Synthetic Data} \label{gen_sg}
Following \cite{xue2023the}, we simulate pairs of Gaussian observations and task labels, {$x_1 \in \mathbb{R}^{d_1}$, $x_2 \in \mathbb{R}^{d_2}$;$y$}, where $x_1$ and $x_2$ represent observations from two modalities with dimensionalities $d_1$ and $ d_2$, respectively, and $ y\in\{0,1\}$ represents the corresponding binary label. The feature vectors of $x_1$ and $x_2$ are defined as:
\begin{equation}
\begin{split} 
   & x_1 = [b_0;b_1;a_0;a_1], x_2 = [b_0;b_2;a_0;a_2], \\
\end{split}
\end{equation}
where
\begin{itemize}
    \item $a_0 \in \mathbb{R}^{d_0} \sim \mathcal{N}(0, I_{d_0})$ denotes consistent, task-relevant information shared between the modalities;
    \item $a_1\in \mathbb{R}^{d_{11}} \sim \mathcal{N}(0, I_{d_{11}})$ and $a_2\in \mathbb{R}^{d_{21}} \sim \mathcal{N}(0, I_{d_{21}})$ represent modality-specific, task-relevant information; 
    \item $b_0 \in \mathbb{R}^{d'_0} \sim \mathcal{N}(0, I_{d'_0})$ is consistent, superfluous information;
    \item $b_1\in \mathbb{R}^{d_{12}} \sim \mathcal{N}(0, I_{d_{12}})$ and $b_2 \in \mathbb{R}^{d_{22}} \sim \mathcal{N}(0, I_{d_{22}})$ are modality-specific, superfluous information.
\end{itemize}

Here, \( \mathcal{N}(0, I_d) \) denotes a multivariate Gaussian distribution with mean 0 and identity covariance matrix \( I \) of dimensionality $d$. 
The dimensions satisfy:
\begin{equation}
    d_1 = d_0 + d'_0 + d_{11} + d_{12}, d_2 = d_0 + d'_0 + d_{21} + d_{22}
\end{equation}

The label $y\in \{0,1\}$ is generated using a Dirac function $\Delta$, which depends solely on the task-relevant information $a_0,a_1$, and $a_2$:
\begin{equation}
   y :=\Delta(\langle \delta,[a_0;a_1;a_2] \rangle >0)
\end{equation}
where $\delta \in \mathbb{R}^{d_0+d_{11}+d_{21}} \sim \mathcal{N}(0,I_{d_0+d_{11}+d_{21}})$ is a randomly sampled vector serving as a separating hyperplane, and $\langle\cdot,\cdot\rangle$ denotes the inner product operation. 

By adjusting $d_0$, $d_{11}$, and $d_{21}$, we can control the distribution of task-relevant information across the two modalities, enabling the simulation of imbalanced task-relevant information. Specifically, as illustrated in \Cref{Fig: exp1}, we simulate three SIM datasets (SIM-\{I-III\}) to be used in three experimental cases, respectively (see  \Cref{exp:Synthetic Gaussian}). Firstly, for all cases, we set $d_0=d'_0=200$. For SIM-I used in \textbf{case i}, we set $d_{11}(500)\gg d_{21} (100)$ so that $a_1$ has a significantly greater impact on determining $y$, compared to $a_2$. This configuration implies that Modality I dominates Modality II in terms of task-relevant information. For SIM-II used in \textbf{case ii}, we switch the setting of $d_{11}$ and $d_{12}$, making Modality II dominant over Modality I. Finally, for SIM-III used in \textbf{case iii}, we set $d_{11}=d_{12}=300$ to ensure both modalities contribute equally to task-relevant information.

\section{Detailed Dataset Description}\label{dataset:description}
\paragraph{SIM.} See Appendix \ref{gen_sg}.

\paragraph{CREMA-D.} CREMA-D is an audio-visual dataset designed to study multimodal emotional expression and perception \cite{cao2014crema}. It captures actors portraying six basic emotional states—happy, sad, anger, fear, disgust, and neutral—through facial expressions and speech.

\paragraph{CMU-MOSI.} CMU-MOSI  \cite{zadeh2016multimodal} consists of 93 videos, from which 2,199 utterance are generated, each containing an image, audio, and language component. Each utterance is labeled with sentiment intensity ranging from -3 to 3. 

\paragraph{10x-hNB-\{A-H\}\& 10x-hBC-\{A-D\}.} 
The 10x-hNB-\{A-H\} datasets comprises eight datasets derived from healthy human breast tissues, while the 10x-hBC-\{A-D\} datasets contain four datasets from human breast cancer tissues \cite{xu2024detecting}. As shown in \Cref{Fig: task:abnormal}, each dataset corresponds to a tissue section and include gene expression and histology modalities. For each tissue section, gene expression profiles (i.e., gene read counts) are measured at fixed spatial spots across the section. During data preprocessing, genes detected in fewer than 10 spots are excluded, and raw gene expression counts are normalized by library size, log-transformed, and reduced to the 3,000 highly variable genes (HVGs) using the SCANPY package~\cite{wolf2018scanpy, li2024dual, xu2024domain,du2025methodological}. The corresponding histology image is segmented into $32\times 32$ region patches centered around each spatial spot, from which pathological patches are identified for anomaly detection. OMIB and baseline models are trained on the 10x-hBC-\{A-H\} datasets to learn multimodal representations of normal tissue regions within a compact hypersphere in the latent space.  The trained models are then applied to the 10x-hBC-\{A-D\} datasets during inference.

\begin{table*}[t]
\centering
    \fontsize{7.8pt}{10.5pt}\selectfont
    \renewcommand{\arraystretch}{1.25}
    \setlength{\tabcolsep}{3.55pt} 
\caption{Overview of the experimental datasets. }
\label{table:data}
\begin{tabular}{>{\centering\arraybackslash}p{1.7cm}|>{\centering\arraybackslash}p{1.7cm}|>{\centering\arraybackslash}p{4.8cm}}
\Xhline{1pt}
Dataset & Type &Number of samples (Anomaly proportion)\\ \Xhline{0.7pt}
SIM-\{I-III\}       &Training  & 9000    \\ \Xhline{0.5pt}
SIM-\{I-III\}       &Test      & 1000    \\ \Xhline{0.5pt}
CREMA-D   &Training  & 6,698   \\ \Xhline{0.5pt}
CREMA-D   &Test     & 744  \\ \Xhline{0.5pt}
MOSI      &Training & 1281  \\ \Xhline{0.5pt}
MOSI      &Test & 685   \\ \Xhline{0.5pt}
10x-hNB-A & Training  & 2364  \\ \Xhline{0.5pt}
10x-hNB-B & Training  & 2504  \\ \Xhline{0.5pt}
10x-hNB-C & Training  & 2224 \\ \Xhline{0.5pt}
10x-hNB-D & Training  & 3037  \\ \Xhline{0.5pt}
10x-hNB-E & Training  & 2086  \\ \Xhline{0.5pt}
10x-hNB-F & Training  & 2801   \\ \Xhline{0.5pt}
10x-hNB-G & Training  & 2694   \\ \Xhline{0.5pt}
10x-hNB-H & Training  & 2473  \\ \Xhline{0.5pt}
10x-hBC-A & Test  & 346 (12.43\%) \\ \Xhline{0.5pt}
10x-hBC-B & Test  & 295 (78.64\%) \\ \Xhline{0.5pt}
10x-hBC-C & Test  & 176 (27.84\%) \\ \Xhline{0.5pt}
10x-hBC-D & Test  & 306 (54.58\%) \\
\Xhline{1pt}
\end{tabular}
\label{table:dataset}
\end{table*}

\section{Detailed Network Architecture Implementation}\label{net:Implem} 

\paragraph{Modality-specific encoder.} 
We implement the encoder as follows: 
\begin{itemize}
    \item The SIM datasets: A two-layer MLP with the GELU activation function, outputting 256-dimensional embeddings.
    \item The CREMA-D dataset: Both video and audio encoders use ResNet-18, producing 512-dimensional outputs.
    \item The CMU-MOSI dataset: Conv1D is employed for both the audio and visual modalities, while BERT is utilized for the textual modality, with all three encoders producing 512-dimensional embeddings.
    \item The 10x-hNB-\{A-H\}\& 10x-hBC-\{A-D\} datasets: ResNet-18 and a two-layer graph convolutional network are used for the histology and gene expression modalities, respectively, with both encoders producing 256-dimensional embeddings.
\end{itemize}

\paragraph{Task-relevant prediction head.} We implement task-relevant prediction head as follows:
\begin{itemize}
    \item The SIM and CREMA-D datasets: The prediction head is implemented as a single linear layer (input X 512 X 100) followed by a softmax layer for classification, producing a $k$-dimensional output, where $k$ is the number of classification types. The TRB loss $L_{TRB}$ is cross-entropy;
    \item The CMU-MOSI dataset: The prediction head is implemented as a single linear layer MLP (input X 50 X 1), outputting a single real value. $L_{TRB}$ is mean squared error;
    \item The 10x-hNB-\{A-H\} and 10x-hBC-\{A-D\} datasets: The prediction head is implemented under the SVDD framework \cite{pmlr-v80-ruff18a,xu2025meatrd} as a two-layer MLP (input X 256 X 256) with LeakyReLU activation functions, producing 256-dimensional latent multimodal representations. $L_{TRB}$ is defined as: 
    \begin{equation}
        \begin{split}   
            L_{TRB} &= \frac{1}{N} \sum_{i=1}^N \| \hat{y} - c \|^2 + \lambda \cdot \mathcal{R}(\Theta), \\
            &\, \, \, \, \,c=\frac{1}{N}\sum_{i=1}^{N} \hat{y},
        \end{split}
    \end{equation}
    where $\hat{y}$ denotes the output of the prediction head, $c$ the center of the hypersphere,$\mathcal{R}(\Theta)$ the function that regularizes model parameters $\Theta$ for reducing model complexity and preventing model collapse, $\lambda$ is the regularization weight. 
\end{itemize}

\paragraph{Variational Autoencoder.} The VAE is implemented as two-layer MLP with two heads, outputting the $\mu$ and $\Sigma$, respectively.

\paragraph{Cross-Attention Network.} For datasets with two modalities, the cross-attention is implemented as: 
\begin{equation}\label{attn}
\xi = \mathrm{Attn}\left([\zeta_{1} \Vert \zeta_{2}]; W_{Q}, W_{K}, W_{V}\right)
\end{equation}  
where $\mathrm{Attn}$ represents the standard attention block, $W_Q$, $W_K$, and $W_V$ denote learnable projection matrices for queries, keys, and values respectively. The operator $\Vert$ represents concatenation along the feature dimension.

For datasets with three modalities, the cross-attention is extended as:  
\begin{equation}\label{attn:2}
\xi = \mathrm{Attn}\left([\zeta_{1} \Vert \zeta_{2} \Vert \zeta_{3}]; W_{Q}, W_{K}, W_{V}\right)
\end{equation}  \par
Finally, a Linear layer is applied to map $\xi$ back to the same dimensions as $\zeta_1$ and $\zeta_2$.  

\section{Experimental Settings} \label{exp_set}
All experiments are implemented using PyTorch \cite{paszke2019pytorch}, with the following settings: \par
\paragraph{SIM.} We use the Adam optimizer with a learning rate of 1e-4 and train the model for 100 epochs. The dataset consists of 10,000 samples, split into training and test sets with a 9:1 ratio. 
\paragraph{CREMA-D.} The model is trained using the SGD optimizer with a batch size of 64, momentum of 0.9, and weight decay of 1e-4. The learning rate is initialized at 1e-3 and decays by a factor of 0.1 every 70 epochs, reaching a final value of 1e-4.  The dataset is divided into a training set containing 6,698 samples and a test set of 744 samples.
\paragraph{CMU-MOSI.} We employ the Adam optimizer with a learning rate of 1e-5. 
All other hyperparameters and settings follow \cite{9767641}. 2,199 utterances are extracted from the dataset, which are split into a training set (1,281 samples) and a test set (685 samples).
\paragraph{10x-hNB-\{A-H\}\& 10x-hBC-\{A-D\}.} We use the Adam optimizer with a learning rate of 1e-4 and a weight decay of 0.1. The training batch size is set to 128. The final multimodal representation has a dimensionality of 256.

\begin{figure}[hpt]
\centering
\includegraphics[width=0.5\linewidth]{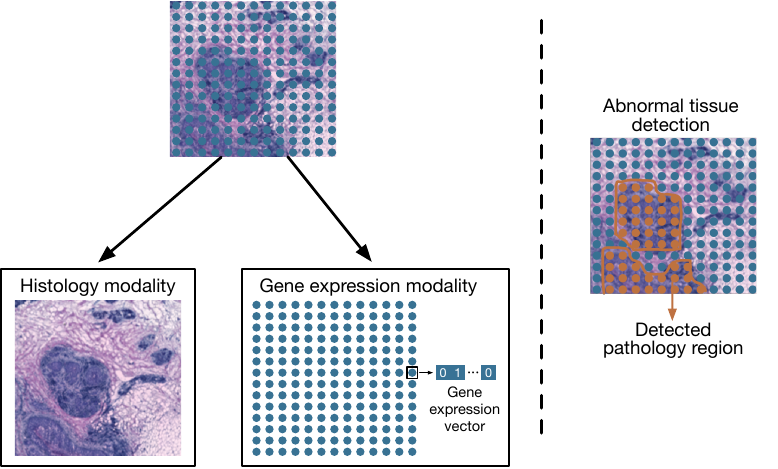}
\caption{Genomic multi-modal applications. Genomic data can be divided into two modalities: the histology modality and the gene expression modality. The histology modality comprises tissue image, while the gene expression modality consists of gene expression vectors, where each spot corresponds to a vector composed of multiple gene expression values. These two modalities are spatially aligned through shared spatial information. By integrating and analyzing both modalities, abnormal regions within the tissue can be effectively detected.
}
 \label{Fig: task:abnormal}
\end{figure}\par
\section{Benchmark Methods} \label{benchmark}
Here, we briefly describe the eight benchmark methods used in this study. For non-MIB-based methods:
\begin{itemize}
    \item Concat refers to simple concatenation of multi-modal features, which yet is the most widely used fusion approach. 
    \item BiGated \cite{kiela2018efficient} flexibly integrates information from different modalities through a gating mechanism.
    \item MISA \cite{hazarika2020misa} decomposes data into modality-invariant and modality-specific representations, using alignment and divergence constraints for better multimodal representation.
\end{itemize}

For MIB-based methods:
\begin{itemize}
    \item deep IB \cite{wang2019deep} extends VIB to a multi-view setting, maximizing mutual information between labels and the joint representation while minimizing mutual information between each view's latent representation and the original data;
    \item MMIB-Cui \cite{10413604} addresses the issues of modality noise and modality gap in multimodal named entity recognition (MNER) and multimodal relation extraction (MRE) by integrating the information bottleneck principle, thereby enhancing the semantic consistency between textual and visual information;
    \item MMIB-Zhang \cite{zhang2022information} effectively controls the learning of multimodal representations by imposing mutual information constraints between different modality pairs, removing task-irrelevant information within single modalities while retaining relevant information, significantly improving performance in multimodal sentiment analysis;
    \item DMIB \cite{Fang2024Dynamic} effectively filters out irrelevant information and noise, while introducing a sufficiency loss to retain task-relevant information, demonstrating significant robustness in the presence of redundant data and noisy channels.
    \item E-MIB, L-MIB, and C-MIB \cite{9767641} aim to learn effective multimodal and unimodal representations by maximizing task-relevant mutual information, eliminating modality redundancy, and filtering noise, while exploring the effects of applying MIB at different fusion stages.
\end{itemize}

\section{Evaluation Metrics} \label{evaluation}
In Emotion Recognition, we use accuracy (Acc) as the evaluation metric. For Multimodal Sentiment Analysis, we use the mean absolute error (MAE) and Pearson's correlation coefficient (Corr) to evaluate the predicted scores against the true scores. Additionally, as sentiment intensity scores can be divided into positive and negative categories, F1-score and polarity accuracy (Acc-2) are also utilized to evaluate prediction results as a binary classification task. Additionally, the interval of $[-3,3]$ contains seven integer scores to which each predicted score is neared to. This allows the using of categorical accuracy (Acc-7) to evaluate the prediction results. Finally, for the Anomalous Tissue Detection task, we evaluate performance using AUC score and F1-score. The AUC score is calculated by varying the anomaly threshold over all tissue regions' anomalous scores. To compute the F1-score, a threshold is first identified such that the number of regions exceeding it matches the true number of anomalous regions, after which the F1-score is computed for regions whose scores are above this threshold.

\section{Algorithmic workflow of \textit{OMIB}} \label{pesudo code}

\vspace{-1cm}
\renewcommand{\algorithmicrequire}{\textbf{Input:}}
\renewcommand{\algorithmicensure}{\textbf{Output:}}
{
\begin{center}
  \begin{tabular}{p{\textwidth}}
    \begin{algorithm}[H]
      \caption{Warm-up training}
      \begin{algorithmic}[1]
        \Require
        Modality \( v_k \), \( k \in \{1, 2\} \), Maximum epochs $E_{max}$, Batch size $N$.
        \vspace{0pt}
        \renewcommand{\algorithmicrequire}{\textbf{Notation:}}
        \Require
        \( Enc_k \): Unimodal encoder for modality \( v_k \);
        \( Dec_k \): Task-relevant prediction head for modality \( v_k \);
        \( z_k \): Latent representation of modality \( v_k \);
        \( e_k \): Stochastic Gaussian noises;
        \vspace{0pt}
        \Ensure
        \( Enc_k \) and \( Dec_k \).
        \vspace{0pt}
        \State Initialize \( Enc_k \) and \( Dec_k \), $\forall k \in \{1,2\}$;
         \While{$epoch<E_{max}$}
            \State Sample a batch \(\{v_k^i \mid i \in \{1, 2, \dots, N\}\}\) from each modality \(k \in \{1, 2\}\);
            \For{each \( i \in \{1, 2, \dots, N\} \)}
                \For{each modality \( k \in \{1, 2\} \)}
                    \State \( z_k^i = Enc_k^i(v_k^i) \);
                    \State \( e_k^i \sim \mathcal{N}(0, I) \); 
                    \State \( \hat{y}_k^i = Dec_k([z_k^i, e_k^i]) \);
                \EndFor
            \EndFor
            \State  Compute \( L_{TRB_k} \) as in \Cref{loss TRB} for each modality \( k \in \{1, 2\} \);
            \State Update \( Enc_k \) and \( Dec_k \) using gradient descent; 
        \EndWhile \\
        \Return \( Enc_k \) and \( Dec_k \)
      \end{algorithmic}
    \end{algorithm}\\
  \end{tabular}
  \label{code:I}
\end{center}
}
\vspace{-2cm}

\renewcommand{\algorithmicrequire}{\textbf{Input:}}
\renewcommand{\algorithmicensure}{\textbf{Output:}}
{
\begin{center}
  \begin{tabular}{p{\textwidth}}
    \begin{algorithm}[H]
     \caption{Main training}
      \begin{algorithmic}[1]
        \Require
        Modality \( v_k \), Unimodal encoder\( Enc_k \), Task-relevant prediction head \( Dec_k \), \(\forall k \in \{1, 2\} \), Maximum epochs $E_{max}$, Batch size $N$.
        \vspace{0pt}
        \renewcommand{\algorithmicrequire}{\textbf{Notation:}}
        \Require
        \( VAE_k \): Variational encoder for modality \( v_k \);
        \( \zeta_k \): Latent representation of modality \( v_k \) after reparameterization;
        \( CAN \): Cross-attention network;
        \( \widehat{Dec} \): OMF task-relevant prediction head;
        \( \text{MINE} \): Mutual Information Neural Estimation (MINE); 
        \( \epsilon_k \): Standard Gaussian samples.
        \vspace{0pt}
        \Ensure
        \( Enc_k \), \( \forall k \in \{1, 2\} \), OMF (\( VAE_k \), \( \forall k \in \{1, 2\} \), \( CAN \), and \( \widehat{Dec} \)).
        \vspace{0pt}
        \For{each modality $k \in \{1, 2\}$}
            \State \(H(v_k)=\text{MINE}(v_k,v_k)\);
        \EndFor
        \State \(I(v_1;v_2)=\text{MINE}(v_1,v_2)\);
        \State Sample $\beta$ from the range $[M_l,M_u]$, where $M_l:=\frac{1}{3(H(v_1)+H(v_2))}$, $M_u:=\frac{1}{3(H(v_1)+H(v_2)-I(v_1;v_2))}$;
        \While{$epoch<E_{max}$}
            \State Sample a batch \(\{v_k^i \mid i \in \{1, 2, \dots, N\}\}\) from each modality \(k \in \{1, 2\}\);
            \For{each \( i \in \{1, 2, \dots, N\} \)}
                \For{each modality \( k \in \{1, 2\} \)}
                    \State \( z_k^i = Enc_k(v_k^i) \); 
                    \State \( \mu_k^i, \Sigma_k^i = VAE_i(z_k^i) \); 
                    \State \( \zeta_k^i = \mu_k^i + \Sigma_k^i \times \epsilon_i \); 
                \EndFor
                \State \( \xi^i = CAN(\zeta_1^i, \zeta_2^i) \);
                \For{each modality \( i \in \{1, 2\} \)}
                    \State \( \hat{y}_k^i = Dec_i([z_k^i, \xi^i]) \);
                \EndFor
                \State \( \hat{y}^i = \widehat{Dec}(\xi^i) \);
                \State Adjust $r$ as defined in \Cref{equ:r};
            \EndFor 
            \State Compute \( L_{OMF} \) as in \Cref{equ:IB loss imp}, and \( L_{TRB_k} \) as in \Cref{loss TRB} for each modality \( i \in \{1, 2\} \);
            \State $L = L_{OMF} + L_{TRB_1} + L_{TRB_2}$;
            \State Update parameters of \( Enc_k \), \( VAE_k \), \( CAN \), \( Dec_k \), and \( \widehat{Dec} \) using gradient descent;
        \EndWhile
        \vspace{0pt} \\
        \Return \( Enc_k \), \( \forall k \in \{1, 2\} \), OMF (\( VAE_k \), \( \forall k \in \{1, 2\} \), \( CAN \), and \( \widehat{Dec} \))
      \end{algorithmic}
    \end{algorithm}\\
  \end{tabular}
  \label{code:II}
\end{center}
}
%\newpage

%\section{You \emph{can} have an appendix here.}

%You can have as much text here as you want. The main body must be at most $8$ pages long.
%For the final version, one more page can be added.
%If you want, you can use an appendix like this one, even using the one-column format.
%%%%%%%%%%%%%%%%%%%%%%%%%%%%%%%%%%%%%%%%%%%%%%%%%%%%%%%%%%%%%%%%%%%%%%%%%%%%%%%
%%%%%%%%%%%%%%%%%%%%%%%%%%%%%%%%%%%%%%%%%%%%%%%%%%%%%%%%%%%%%%%%%%%%%%%%%%%%%%%

\end{document}